\DeclareMathOperator*{\argmax}{arg\,max}
\DeclareMathOperator*{\argmin}{arg\,min}
\newtheorem{theorem}{Theorem}
\newtheorem{lemma}{Lemma}
\newtheorem{definition}{Definition}
\newtheorem{remark}{Remark}
\newtheorem{requirement}{Requirement}
\newtheorem*{requirement*}{Requirement}
\definecolor{columbiablack}{rgb}{0.61, 0.87, 1.0}
\newtheorem{assumption}{Assumption}
\DeclarePairedDelimiter{\ceil}{\lceil}{\rceil}
\let\emptyset\varnothing
\newcommand{\vast}{\bBigg@{4}}
\newcommand{\Vast}{\bBigg@{5}}
\newcolumntype{Y}{>{\centering\arraybackslash}X}
\newcommand\semiHuge{\@setfontsize\semiHuge{22.72}{27.38}}
\begin{document}

\title{\semiHuge Online Federated Learning via Non-Stationary Detection and Adaptation amidst Concept Drift}

\author{Bhargav Ganguly and  Vaneet Aggarwal\thanks{The authors are with the School of Industrial Engineering,  Purdue University, West Lafayette IN 47907, USA, email: \{bganguly,vaneet\}@purdue.edu. V. Aggarwal is also with Computer Science,  King
Abdullah University Of Science And Technology, Thuwal 23955, Saudi
Arabia. }}
\maketitle

\begin{abstract}
	Federated Learning (FL) is an emerging domain in the broader context of artificial intelligence research. Methodologies pertaining to FL assume distributed model training, consisting of a collection of clients and a server, with the main goal of achieving optimal global model with restrictions on data sharing due to privacy concerns. It is worth highlighting that the diverse existing literature in FL  mostly assume stationary data generation processes; such an assumption is unrealistic in real-world conditions where concept drift occurs due to, for instance, seasonal or period observations, faults in sensor measurements. In this paper, we introduce a multiscale algorithmic framework which combines theoretical guarantees of \textit{FedAvg} and \textit{FedOMD} algorithms in near stationary settings with a non-stationary detection and adaptation technique to ameliorate FL generalization performance in the presence of concept drifts. We present a multi-scale algorithmic framework leading to $\Tilde{\mathcal{O}} ( \min \{ \sqrt{LT} , \Delta^{\frac{1}{3}}T^{\frac{2}{3}} + \sqrt{T} \})$ \textit{dynamic regret} for $T$ rounds with an underlying general convex loss function, where $L$ is the number of times non-stationary drifts occurred and $\Delta$ is the cumulative magnitude of drift experienced within $T$ rounds.
\end{abstract}

\section{Introduction}

Advancements in technology and science have led to an increase in raw data generation and processing power at smart devices, prompting the development of large-scale distributed machine learning architectures, such as Federated Learning (FL), which enables local training at end-user devices and periodic global model synchronization. However, most existing FL literature assumes time-invariant data generation processes \cite{hosseinalipour2020federated}. In reality, data generation processes at end devices can be impacted by abrupt changes in the underlying environment (e.g., pandemic on flight booking data \cite{garg2021distribution}). This paper models the non-stationary data generation environment and analyzes the \textit{dynamic regret} of the proposed algorithms in this setup.

In FL literature, such non-stationary behavior observed in the data generation model over time is commonly called \textit{concept drift} \cite{ganguly2022multi,mallick2022matchmaker}.  We note that such a phenomenon is also called \textit{covariate shift} in the wider context of general machine learning \cite{sugiyama2012machine}. Conventional FL methodologies such as the \textit{FedAvg} \cite{mcmahan2017communication} being agnostic to such time varying data shifts end up producing worse generalization results especially on ML classification/regression tasks. Hence, it is critical to augment such learning frameworks with non-stationarity detection and adaptation procedures to mitigate staleness/poor generalization ability of obtained ML models. Most of the existing work in non-stationary FL literature leverage heuristic techniques to ensure model robustness in the midst of drifts, including sliding window  based adaptive learning \cite{yang2021lightweight}, ensemble learning \cite{abbasi2021elstream}, and regularization mechanisms \cite{casado2022concept}. Although the problem has been widely studied for algorithms, %
this paper provides the first results on \textit{dynamic regret} for online convex optimization for {\color{black}general convex functions} in the FL setup. We note that online convex optimization has been studied in dynamic environments for both centralized and distributed settings (See Sec. \ref{Sec: related_work} for detailed comparison). {\color{black} We note that the methodologies proposed for centralized learning in non-stationary environments leverage the convenience of having the exact knowledge of both newly collected datasets and models at every learning round which is not available in federated learning.} {\color{black} Furthermore,  raw data offloading by the clients is restricted in FL due to privacy concerns.}

\if 0
Non-stationary data has also been studied in online convex optimization literature. In the centralized setting, early works in this direction significantly focus on \textit{static regret} minimization which lead to $\tilde{\mathcal{O}}(\sqrt{T})$ asymptotic bounds \cite{zinkevich2003online,duchi2010composite,hazan2016introduction}. However, \textit{dynamic regret} becomes more appropriate metric when the optimal comparators change over time. Consequently, the model convergence speeds depend on rate at which these dynamic environments drift. This problem has been extensively studied  in the centralized learning paradigm under a variety of convexity assumptions on the ML loss function as well as choice and knowledge of suitable measures of non-stationarity \cite{zinkevich2003online,jadbabaie2015online,chang2021online}.
{\color{black} The aforementioned \textit{state-of-the-art} methodologies proposed for centralized learning in non-stationary environments leverage the convenience of having the exact knowledge of both newly collected datasets and models at every learning round. }
Federated optimization is more challenging problem setting where the learners only have locally obtained incomplete knowledge of environment dynamics, thereby necessitating regular syncronizations with a central server in order to learn a single global ML model. {\color{black} Furthermore, in conventional Federated optimization setting, raw data offloading by the clients is restricted due to privacy concerns.} In this paper, we specifically aim to characterize how FL baseline algorithms which in practice work well in stationary learning settings can be made to achieve favorable convergence speeds in  drifting environments. {\color{black} Further, we do not assume  strongly convex and Lipschitz gradient functions.} %

\fi

\begin{table*}[t]
\centering
\resizebox{1.5\columnwidth}{!}	{\begin{tabular}{|c|c|c|c|c|c|}
		\hline
		References             & Problem setting          &  Non-Stationary Measures   & Prior Knowledge  & Key Assumptions & \textit{Dynamic Regret} Bound
		\\ 
		\hline
		\cite{zinkevich2003online} &     Centralized             & $C_T$  & No & BLCL & $\tilde{\mathcal{O}}(\sqrt{T}(C_T+1))$ \\ 
		\hline
		\cite{jadbabaie2015online} &     Centralized             & $C_T, D_T, \Delta$  & No & BLCL, LG & $\Tilde{\mathcal{O}}(\min\{\sqrt{D_T + 1} + \sqrt{(D_T + 1)C_T}, (D_T + 1)^{1/3}T^{1/3}{\Delta}^{1/3}\})$ \\ 
		\hline
		\cite{shahrampour2017distributed}  &      Distributed              & $C_T$  & Yes & BLCL & $\tilde{\mathcal{O}}(\sqrt{T(C_T + 1)})$ \\ 
		\hline
		\cite{lu2019online} &      Distributed              & $C_T$  & No & BLCL &  $\tilde{\mathcal{O}}(\sqrt{(C_T + 1)}T^{3/4})$
		\\ 
		\hline
		\cite{li2021distributed} &      Distributed              & $C_T, D_T$  & No &  BLCL,  LG & $\tilde{\mathcal{O}}(\sqrt{TC_T}  + \sqrt{T} + D_T + \sqrt{C_T D_T})$
		\\ 
		\hline
		\rowcolor{columbiablack}
		This work             & Federated  & $L, \Delta$ & No & BLCL & $\Tilde{\mathcal{O}} ( \min \{ \sqrt{LT} , \Delta^{\frac{1}{3}}T^{\frac{2}{3}} + \sqrt{T} \})$ \\ \hline
\end{tabular}}
\caption{ \normalfont{ \textit{Dynamic regret} bounds for centralized, distributed, and federated online convex optimization methodologies. For the key assumptions, we use the following shortened descriptions :  BLCL $\rightarrow$ Bounded, Lipschitz, Convex Loss Function; LG $\rightarrow$ Lipschitz Gradient.}}
\label{regret_table}
\end{table*}

{\color{black}We highlight that the key novelty of this work is an efficient \textit{drift detection and adaptation} method that is suitably augmented with a randomized baseline FL algorithm scheduling procedure and facilitates \textit{training at multiple scales}.  To demonstrate what our framework accomplishes at a high level of abstraction, let us consider a scenario where FL training is desired to be conducted over $T = 7$ with \textit{FedAvg}. Our proposed framework will equip clients to train over shorter allowable time horizons in powers of 2, i.e., $\{1,2,4\}$ for the current example, with suitable learning rates for each such time chunks, i.e., \textit{FedAvg} $\{\frac{1}{\sqrt{1}}, \frac{1}{\sqrt{2}}, \frac{1}{\sqrt{4}} \}$ for horizons of lengths 1,2,4, respectively. We highlight that our framework carefully randomizes how such shorter training chunks are scheduled and it is not necessary that every allowable chunk is included for training.  
Furthermore, for the current example, during each round $t \in \{1,2, \cdots, 7 \}$, additional non-stationary tests that involve FL training loss collected by the clients are introduced in our method to identify and adaptively train for \textit{concept drift}.}

{ \color{black} In the context of ensuing discussion on our proposed methodology, the construction of the aforementioned pieces: \textit{drift detection and adaptation}, and \textit{multiscale learning} are in fact closely tied to the \textit{near-stationarity} properties of the baseline FL algorithms.} More specifically, we demonstrate mathematically why baseline FL algorithms may not directly ensure optimal learning in high degrees of drifts. Along these lines, we further show how their favorable behavior in \textit{near-stationary} regime may be leveraged to derive a supplementary quantity that can support drift aware learning in a {multiscale} fashion. In the later sections, we explicitly delineate how the key components of our algorithmic framework are concretely established based on the aforementioned mathematical ideas and bring out the connections more explicitly in the theoretical analysis of \textit{dynamic regret} incurred by our algorithm.  In summary, the major contributions of our work are as follows:
\begin{enumerate}[leftmargin=*]%
\item We propose a multi-scale algorithmic framework which can equip any existing baseline FL methodologies that work well in \textit{near-stationary} environments with a suitably designed change detection and adaption technique to mitigate model staleness in high drifting learning environments. More specifically, our current methodology is a general unified framework which is shown to be directly compatible with two widely deployed baseline FL algorithms: \textit{FedAvg} \cite{mcmahan2017communication} and \textit{FedOMD} \cite{fedomdpaper}.  

\item We conduct comprehensive theoretical analysis of the proposed framework and characterize the performance by \textit{dynamic regret} over all the FL rounds in terms of both the accumulated magnitude of such drifts (denoted by $\Delta$, see Definition \ref{defn: model drift}), and the number of times data drifts have occurred  (denoted by $L$, see Definition \ref{defn: num_drifts}). Mathematically, we show that adopting our approach leads to bounding the \textit{dynamic regret} over $T$ FL rounds with a general convex underlying loss measure as $\Tilde{\mathcal{O}} ( \min \{ \sqrt{LT} , \Delta^{\frac{1}{3}}T^{\frac{2}{3}} + \sqrt{T} \})$.
\item {\color{black} We provide proof-of-concept experimental evaluations on ML image classification datasets bolstering the efficacy of the proposed framework, wherein we compare the performance with two other widely-used competing FL algorithms.}
\end{enumerate}

We organize rest of the paper as summarized next. In Section \ref{Sec: related_work}, we present a %
comparison of our results with prior works in centralized and distributed non-stationary optimization. In Section \ref{sec: problem_formulation}, we formally describe the problem of \textit{dynamic regret} minimization for Federated Learning in drifting environments, whereby we also introduce key assumptions and definition necessary for our algorithm development. In Section  \ref{Sec: algo_framework}, we explain the various components and sub-routines associated with our multi-scale Federated Learning methodology. Furthermore, we present rigorous analysis as well as elucidation of the obtained mathematical results for the proposed algorithm leading to the aforementioned \textit{dynamic regret} bound in Section \ref{sec: key_theory}. {\color{black} In Section \ref{Sec: experiments}, we report our simulation results investigating how our framework behaves in practice when deployed on real-world datasets under various drift scenarios.}  %
\section{Related Work on Online Convex Optimization in Dynamic Environments} \label{Sec: related_work}
\if 0
\subsection{ Federated Learning}
Conventional FL was first proposed in \cite{mcmahan2017communication} as a complementary approach to general distributed machine learning with two salient properties: (i) the devices are both workers (i.e., they conduct the model training) and data collectors, and (ii) the local learning and global aggregations are simple and efficient (e.g., local SGD at the devices followed by weighted averaging of the models at the server) making it easier for deployment over a large-network of devices. We note that these two aforementioned properties account for the crucial differences between FL and other distributed machine learning techniques \cite{hosseinalipour2020federated}. It is worth noting that several Federated optimization approaches have been proposed extending conventional FL. Notably, \cite{li2018federated} outlined \textit{FedProx}, a regularization based Federated optimization approach that improves convergence speed of \textit{FedAvg}. In \cite{wang2020tackling},  a normalized averaging method was introduced that can mitigate inconsistencies between local and global objective functions without error convergence slowdown. Furthermore, in recent literature, momentum-based update techniques have been employed both at the client devices and the central co-ordinator/server to improve the convergence speed over conventional methods \cite{wang2019slowmo,reddi2020adaptive,das2022faster}. One significant drawback of these federated optimization frameworks is that they do not consider time-varying data generation models as observed in real-world datasets. Asynchronous frameworks built on conventional FL methodologies propose to tackle staleness of global models in streaming data at the clients, however do not explicitly consider drifts and are largely applicable to client devices with IID samples \cite{stich2018local,xie2019asynchronous}.

Heuristic-based adaptive techniques such as \textit{Ensemble Learning} \cite{yang2021pwpae,chen2020fedbe},   \textit{Clustered Learning} \cite{caldarola2021cluster}, chunk-based learning \cite{ang2010classifying,hegedHus2013massively}, and attention augmented models \cite{estiri2021attentive} have been experimentally studied in dynamic FL. We emphasize that our work is a  mathematical investigation of FL model convergence speeds in dynamic environments, and therefore,  complements the aforementioned direction.  

{\color{black} \subsection{Local vs Global Drifts in Federated Learning}
Existing works in drift aware Federated Learning literature can broadly be categorized into two major segments: mitigating drifts for the local objectives to obtain improved personalized models for the clients; tackling the cumulative drift across all the clients to improve upon the global solution. While there is a significant thrust towards developing drift adaptation techniques for dynamic learning environments in FL community, how to jointly handle local as well as global drifts is still an open problem. This is a consequence of local and global non-stationarities being tightly coupled, wherein local devices tend to drive the solution in their locally optimal directions, and these directions can potentially be very different from the solution of the global objective.\\

When the underlying task is to improve local models under drifts, existing methodologies are based on either modify underlying optimization mechanism, or deploying carefully engineered data pre-processing techniques such as batch-normalization \cite{li2018federated, localdrift1, localdrift2}. On the other hand, when the focus to mitigate the aggregate effect of drifts on the global models, proposed methods in prior works improvise upon the aggregation strategy and/or distribute a relatively small-sized additional global dataset among the FL clients \cite{globaldrift1,globaldrift2, wang2020tackling}. The proposed multi-scale drift aware learning methodology in this work presents a novel theoretical framework supported by proof-of-concept experimental evaluations characterized by concrete \textit{dynamic regret} bounds for the global optimization problem and can be deployed with a variety of conventional baseline FL algorithms.  In this context, note that this work complements existing FL literature focused on reducing artifacts of cumulative effects of local clients' drifts experienced by the global model.
}
\fi 
For centralized convex optimization in dymanic environments, several existing works propose online algorithms with \textit{dynamic regret} bounds in terms of problem-specific quantities. In this regard, three very commonly used metrics are: \textit{comparator regularity} accumulating the changes in the minimizers over $T$ horizon, i.e., $C_{T}$; \textit{temporal variability} capturing the differences in the loss function values, i.e., $\Delta$; and \textit{gradient variability} which tracks loss in gradients instead of actual function lossses, i.e., $D_T$ (see definition of $C_T, D_T$ in Appendix \ref{app: other_drift_measures_defn}). In our problem setting, the usage of $\Delta$ (see Definition \ref{defn: model drift}) is inline with mathematical formulations of drift in recent FL literature \cite{ganguly2022multi,hosseinalipour2022parallel},  and is direct adoption of \textit{temporal variability} to a distributed learning setting. Furthermore, we replace $C_{T}$ with a more  intuitive drift measure $L$, i.e., the exact number of times drift occurred over horizon $T$. Previous works based on adaptive online gradient descent \cite{zinkevich2003online}, online mirror descent \cite{jadbabaie2015online} are centralized counterparts to our setting; and produce a \textit{dynamic regret} of $\tilde{\mathcal{O}}(\sqrt{T}(C_T + 1))$ and $\Tilde{\mathcal{O}}(\min\{\sqrt{D_T + 1} + \sqrt{(D_T + 1)C_T}, (D_T + 1)^{1/3}T^{1/3}{\Delta}^{1/3}\})$,  respectively. On the other hand, we provide a general \textit{dynamic regret} bound of $\Tilde{\mathcal{O}} ( \min \{ \sqrt{LT} , \Delta^{\frac{1}{3}}T^{\frac{2}{3}} + \sqrt{T} \})$ with the flexibility of choosing any baseline FL optimizer which can guarantee worst case $\tilde{\mathcal{O}}(\sqrt{T})$ regret in any \textit{near-stationary} environment. Furthermore, we note that the strategies for centralized setup do not directly provide the \textit{dynamic regret} guarantees in the distributed/federated scenarios since the distributed/federated setup is more challenging as the data collection and learning process is spread across a network of client devices.

In the distributed setting, an online mirror descent algorithm was proposed in \cite{shahrampour2017distributed} achieving a $\tilde{\mathcal{O}}(\sqrt{T(C_T + 1)})$ assuming prior knowledge of $C_T$ and $T$. The authors of \cite{lu2019online} present an online gradient tracking methodology; thereby obtaining a $\tilde{\mathcal{O}}(\sqrt{(C_T + 1)}T^{3/4})$  with no prior knowledge of the environment dynamics. Furthermore, the aforementioned bound was improved by introducing \textit{gradient variability} $D_T$ in conjunction with $C_T$ to $\tilde{\mathcal{O}}(\sqrt{TC_T}  + \sqrt{T} + D_T + \sqrt{C_T D_T})$ in \cite{li2021distributed}.  We note that when $C_T = O(T^{1-\epsilon})$ for small $\epsilon$, the bound in \cite{lu2019online} is larger than $\tilde{\mathcal{O}}(T)$, making this bounds not interesting in large drift scenarios, since in our case when $L=O(T^{1-\epsilon})$, the bound is $\tilde{\mathcal{O}}(T^{1-\epsilon/2})$ which is sub-linear. Further, the bound in \cite{li2021distributed} is in terms of gradient changes, while our bound is independent of gradient changes. 
More specifically, we highlight that the bounds revealed in terms of \textit{gradient variability} measure $D_T$ in \cite{jadbabaie2015online,li2021distributed} are direct consequences of additional Lipschitz smoothness assumption on loss gradients, whereas we keep our analysis restricted to general bounded, Lipschitz smooth convex loss measures.

{\color{black}We note that  \textit{dynamic regret} bounds could be  improved to $\tilde{\mathcal{O}}(1 + C_T)$ when loss functions are strongly convex as reported in recent studies pertaining to distributed ML \cite{dixit2020online,eshraghi2022improving}. However, our proposed methodology relies only on the underlying loss measure being general convex, and therefore is not directly comparable to the aforementioned studies requiring strong convexity assumptions. }

To the best of our knowledge, our algorithm design and consequent theoretical findings in FL setting extends the aforementioned prior bounds in the distributed paradigm where the underlying loss function is general convex, while it introduces tighter bounding terms using the metric $L$ representing the number of changes instead of $C_T$. Furthermore, we note that although our algorithmic framework is designed for federated optimization, it can be straightforwardly extended to a distributed learning setup with consensus-based model averaging. More specifically, our results are indifferent to how model averaging is conducted across the clients, therefore, will hold in general distributed convex optimization in dynamic environments. The comparison of \textit{dynamic regret} results discussed both in the centralized/distributed regime discussed thus far are summarized in Table \ref{regret_table}.  

To emphasize, our work is well-aligned with dynamic federated optimization problem setting with the following high-level attributes: (i) it can combine any conventional baseline FL optimizer that works well in a \textit{near stationary} setting  with a multi-scale procedure that can detect and adapt in highly dynamic environments, and (ii) to the best of our knowledge, our framework is guaranteed to produce tighter sublinear \textit{dynamic regret} bounds for a more generic set of non-stationarity measures $L, \Delta$ and runtime horizon $T$. Prior knowledge of $L, \Delta$, and $T$ are not required.
\section{Problem Formulation} \label{sec: problem_formulation}
In this section, we formally put forth the mathematical problem aiming towards \textit{dynamic regret} minimization for FL over client devices with time-varying data drifts. In the following discussion, we also denote the client devices as Data Processing Units (DPUs) since they conduct local ML training, as well as to clearly differentiate from the central-coordinator server which only performs model syncronization and training orchestration tasks. We collectively denote the set of DPUs as $\mathcal{N}$, and the central server as $\mathcal{S}$. Furthermore, we assume that FL training and model syncronization is performed over $t = 1, 2, \cdots, T$ rounds. Further, we denote the dataset generated at each DPU $n \in \mathcal{N}$ during rounds $t = 1,2, \cdots , T$ as $\mathcal{D}_n^{(t)}$.
For the ease of our analysis and presentation of results, we define the following quantities:
\begin{align}
    & D_n^{(t)} = | \mathcal{D}_n^{(t)} |, \   D^{(t)} = \sum_{n \in \mathcal{N}} D_n^{(t)},\   p_{n}^{(t)} = \frac{ D_n^{(t)}}{D^{(t)}}.\label{eqn: ML dataset definition} 
\end{align}
At rounds $t = 1,\cdots, T$, each DPU $n\in\mathcal{N}$ is associated with a \textit{local loss} function ${F}_{n}^{(t)}(\mathbf{x})$. It locally computes the loss gradient at the current global parameter vector $\mathbf{x}^{(t-1)}$, i.e., $\nabla {F}_{n}^{(t)}(\mathbf{x}^{(t-1)})$. For the ML model vector, we assume that $\mathbf{x} \in \mathbbm{R}^{d}$. Formally,
   \begin{align}\label{eq:localLossInit}
    & {F}_{n}^{(t)}(\mathbf{x}) =  \frac{1}{{{D}}_{{n}}^{(t)}}{\underset{\xi \in {{\mathcal{D}}}_{{n}}^{(t)}}{\sum} {f}(\mathbf{x}} ; \xi), \\
    & \nabla {F}_{n}^{(t)}(\mathbf{x}^{(t-1)}) = \frac{1}{{{D}}_{{n}}^{(t)}}{\underset{\xi \in {{\mathcal{D}}}_{{n}}^{(t)}}{\sum} \nabla {f}(\mathbf{x}} ; \xi) \big|_{\mathbf{x} = \mathbf{x}^{(t-1)}},
\end{align}
where $f(\cdot~;~\cdot)$ is the underlying ML loss function. Subsequently, all DPUs $n \in \mathcal{N}$ locally update their local ML model, i.e., $\mathbf{x}_n^{(t)}$. We note that this update procedure is the key difference across different conventional FL algorithms. We call this update procedure as \texttt{FL-UPDATE($\cdot$)}. Hence, we have:  
\begin{align}
    \label{eqn: FL-update} \mathbf{x}_n^{(t)} = \texttt{FL-UPDATE}\big(\mathbf{x}^{(t-1)}, \nabla {F}_{n}^{(t)}(\mathbf{x}^{(t-1)}) \big).
\end{align}
In this regard, we highlight that the explicit formulation of \texttt{FL-UPDATE($\cdot$)} for \textit{FedAvg, FedOMD} has been provided in Section \ref{sec: base_FL_disc_main}.\\

\vspace{-3.4mm}
Subsequently, each  DPU $n \in \mathcal{N}$ communicates its locally updated model, i.e., $\mathbf{x}_n^{(t)}$ to the central aggregation server which we denote by $\mathcal{S}$. $\mathcal{S}$ then performs the aggregation step to obtain the global ML model, i.e., $\mathbf{x}^{(t)}$ for each $t$, according to the  following:
\begin{align} 
    \mathbf{x}^{(t)} = \sum_{n \in \mathcal{N}} p_{n}^{(t)} \mathbf{x}^{(t)}_{n}. \label{eqn: global_ML_aggr}
\end{align}
After the aggregation step, this $\mathbf{x}^{(t)}$ is sent back by $\mathcal{S}$ to DPUs in $\mathcal{N}$ for conducting next round of local ML training. This process of local training and periodic aggregation is repeated across all the federated learning rounds $t \in [1, T]$.

{ \color{black} \begin{remark}
\if 0    Often only a few out of the total number of model parameters are
updated locally for any client DPU in a FL learning setting, and usually the learnt ML models (for
instance DNN-based image classifiers) are large-scale training architectures which potentially induce
communication bottlenecks in the network. In such cases, exchanging model differentials instead of the actual model itself ameliorates communication overhead significantly. A noteworthy volume of existing works in distributed
machine learning and Federated learning propose a variety of procedures pertaining to
reducing communication bottleneck problem via message passing of model differentials without sacrificing the accuracy of the model itself \cite{reisizadeh2020fedpaq, wen2017terngrad, zhang2019compressed, wang2022progfed}. We outline how our framework can directly accommodate such communication efficient message passing setups that involve model differentials in Appendix \ref{app: model_differential_revision}.
\fi 
    Often a small number of model parameters are
updated locally for any client DPU in a FL learning setting, and usually the learnt ML models (for
instance DNN-based image classifiers) are large-scale training architectures which potentially induce
communication bottlenecks in the network. In such cases, exchanging model differentials instead of the actual model itself ameliorates communication overhead significantly. %
Our framework can directly accommodate such communication efficient message passing setups that involve model differentials (as detailed in Appendix \ref{app: model_differential_revision}).
\end{remark}
}
Next, we elaborate on the essential quantities that are heavily utilized throughout our theoretical analysis pertaining to online \textit{dynamic regret}. In this context, we first highlight that we are concerned with the ML losses incurred by the DPUs at the end of each round of FL training and global aggregation. More precisely, the ML model $\mathbf{x}^{(t)}$ updated during rounds $t = 1,2,\cdots, T$ is applied on the observed datasets $\mathcal{D}_n^{(t)}$ to collect local loss ${F}_{n}^{(t)}(\mathbf{x}^{(t)})$ as defined in Eq. \eqref{eq:localLossInit}. The local ML losses are combined according to the proportion of dataset sizes thereby obtaining the global losses as follows:
\begin{align}
    \label{eqn:defn_global_loss_1} {\color{black} {F}^{(t)}(\mathbf{x}^{(t)}) = \displaystyle \sum_{n \in \mathcal{N}} p_n^{(t)}  F_{n}^{(t)}(\mathbf{x}^{(t)}). }
\end{align}
Finally, in our system model, the objective of the network comprised by DPUs in $\mathcal{N}$ and server $\mathcal{S}$ is to minimize the global \textit{dynamic regret} $R_{[1,T]}$ given by:
\begin{align}
   R_{[1,T]} = \sum_{t = 1}^{T} {F}^{(t)}(\mathbf{x}^{(t)}) ~-  \sum_{t = 1}^{T} {F}^{(t)}(\mathbf{x}^{(t), *}), \label{defn: dynamic regret}
\end{align}
where ${\color{black}\mathbf{x}^{(t), *} = \underset{\mathbf{x} \in \mathbbm{R}^d}{\argmin} ~{F}^{(t)}(\mathbf{x}). }$
We now introduce the key assumptions and definitions pertaining to loss functions and baseline FL methods that will be used throughout the course of our analysis.
\begin{assumption}[Convexity, Smoothness and Boundedness of ML loss function] \label{assumption: convexity_+_lipschitz} We assume that the underlying ML loss function, i.e., $f(\cdot; \xi) $ is convex, bounded in $[0,1]$ and $\mu$-Lipschitz w.r.t. $\|\cdot\|$, which implies the following $\forall \mathbf{x}, \Tilde{\mathbf{x}} \in \mathbbm{R}^{d}$:
\begin{align}
    {f}(\mathbf{x} ; \xi) - {f}(\Tilde{\mathbf{x}} ; \xi) \leq \mu \|\mathbf{x} - \Tilde{\mathbf{x}} \|.
\end{align}
\end{assumption}
We highlight that Assumption \ref{assumption: convexity_+_lipschitz} is general enough and captures a wide range of problems pertaining to Machine Learning and Online Optimization tasks.
\begin{definition}[\textit{Concept Drift}] \label{defn: model drift}
The online {Concept Drift} between two consecutive rounds of global aggregation $t-1$ and $t$ is measured by $\Delta_{t} \in \mathbb{R}^{+}$, which captures the maximum variation of the global loss function for any arbitrary ML model $\mathbf{x} \in \mathbbm{R}^d$ according to
\begin{align}
    |F^{(t)}(\mathbf{x}) - F^{(t-1)}(\mathbf{x})| \leq \Delta_{t}. \label{eqn: drift defn}
\end{align}
Further, the cumulative concept drift incurred across rounds $t_1, t_1 + 1, \cdots, t_2$ is denoted by:
\begin{align}
    \Delta_{[t_1, t_2]}  \triangleq \sum_{\tau = t_1}^{t_2} \Delta_{\tau}.
\end{align}
\end{definition}
\begin{definition}[Number of Non-Zero Distribution Shift events] \label{defn: num_drifts} Using the notion of online concept drift $\Delta$ in Definition \ref{defn: model drift}, we quantify the number of FL rounds with non-zero drift across $t = 1,2, \cdots, T$, and denote it by $L$. Mathematically,
\begin{align}
    L = \sum_{t = 1}^{T} \mathbbm{1}[\Delta_t \neq 0].
\end{align}
\end{definition}
The non-stationary measures $L, \Delta$ quantify the degree of time-varying nature of the datasets generated at the DPUs. Hence, larger drifts should directly imply more frequent calibration (i.e., change detection and adaptation) to achieve staleness resiliency for learnt ML models. {\color{black} We note that the existing methodologies for drift management include both local drift, where the statistical properties of the data distribution that individual clients experience over time \cite{localdrift1}, and global drift, where  the statistical properties of the data distribution across all the clients over time \cite{mallick2022matchmaker,wang2020tackling}. In this paper, we consider global drift, while the results would be applicable even for the local drift. } Next, we define the functions $\rho (\cdot), C(\cdot)$ which we later use to mathematically characterize the notion of \textit{near-stationarity}.
\begin{definition} \label{defn: rho_def}
We define a non-increasing function $\rho : [t] \rightarrow \mathbbm{R}$ such that  $\rho(t) \geq \frac{1}{\sqrt{t}}$, and $C(t) = t\rho(t)$ is an increasing function of $t$. Furthermore, for a given $T$, we define $\hat{\rho}(t) = 6(\log_2 T + 1)\log (T/\delta)\rho(t)$.
\end{definition}
In our framework, we require the underlying baseline FL algorithms to have certain guarantees in environments where the cumulative drift is small. We denote such environments as \textit{near-stationary}. This requirement from baseline algorithms in \textit{near-stationary} settings is expressed via Assumption \ref{assmptn: near stationary} which is presented next.
\begin{requirement} [Base algorithm performance guarantee in a near-stationary environment] \label{assmptn: near stationary} We assume that the base algorithm produces an auxiliary quantity $\Tilde{F}^{(t)}$ at the end of each global round of aggregation $t \in \{1,2, \cdots, T\}$ satisfying the following:  
\begin{align}
    & \Tilde{F}^{(t)} \leq \underset{\tau \in [1, t]}{\max} F^{(\tau)}(\mathbf{x}^{(\tau),*}) + \Delta_{[1, t]}, \label{eqn: optimistic_est_1} \\
    & \frac{1}{t}\sum_{t = 1}^{t} [{F}^{(t)}(\mathbf{x}^{(t)}) - \Tilde{F}^{(t)}] \leq \rho(t) + \Delta_{[1, t]}. \label{eqn: optimistic_est_2}
\end{align}
where $\rho(.)$ is described in Definition \ref{defn: rho_def}. Further, $\Delta_{[1,t]}$ represents the cumulative concept drift experienced with $\Delta_{[1,t]} \leq \rho(t)$ , i.e., near-stationary environment.
\end{requirement}
We emphasize that this supplementary estimator $\Tilde{F}^{(t)}$ is not a direct output of most conventional FL algorithms. However, the construction of this quantity is possible from the models learnt over time. Specifically, we outline the exact form of this quantity and verify the correctness of Assumption \ref{assmptn: near stationary} for conventional baseline FL algorithms: \textit{FedAvg}, \textit{FedOMD} in Appendix  \ref{sec: optimistic_estimator_verification}.\\

{\color{black} Furthermore, a key intuition behind Requirement \ref{assmptn: near stationary} is the idea a single instance of a vanilla baseline FL algorithm should be enough when environment is \textit{near-stationary}, i.e., $\Delta_{[1,t]} \leq \rho(t)$. However, with increased degree of non-stationarity i.e., $\Delta_{[1,t]} > \rho(t)$, the conditions stated in Requirement \ref{assmptn: near stationary} should be somehow proxied to detect drifts and restart learning. We demonstrate this key intuition with mathematical justifications in Appendix \ref{sec: near_stationary_intuition}.}

In our subsequent discussion delineating the algorithmic framework for non-stationary FL, we explicitly demonstrate how this auxiliary quantity is useful to design suitable drift detection tests (\textbf{Test 1} and \textbf{Test 2} in Algorithm \ref{detection}, Section \ref{Sec: algo_framework}) {\color{black} that attempt to proxy the conditions of Requirement \ref{assmptn: near stationary}, thereby promoting multi-scale learning in dynamic environments.} 

\section{Algorithmic Framework} \label{Sec: algo_framework}
In this section, we provide an elaborate description of our multi-scale algorithmic framework to perform FL in dynamic environments. In this regard, we note that we adopt the notions of multi-scale base algorithm initializations and drift detection mechanism for FL from \cite{wei2021non}, which was originally proposed for non-stationary reinforcement learning training. Roughly speaking, in our framework, multiple base FL instances are scheduled, and in turn, are equipped with a carefully engineered change detection mechanism to mitigate model staleness. These base FL instances are conventional FL algorithms that satisfy Assumption \ref{assmptn: near stationary}, we specifically show in Appendix \ref{sec: optimistic_estimator_verification} that this Assumption holds for \textit{FedAvg} and \textit{FedOMD}.

Next, we present Randomized Scheduling Procedure in Algorithm \ref{scheduling_algo}. { \color{black} %
An instance $\mathcal{A} =  (\mathcal{A}.s, \mathcal{A}.e, \eta^{A}, \mathbf{x}^{\mathcal{A}} )$ is desired to be scheduled over the horizon $[\mathcal{A}.s, \mathcal{A}.e]$.  $\eta^{A}=\frac{1}{\sqrt{\mathcal{A}.e - \mathcal{A}.s + 1}}$ is the learning rate for $\mathcal{A}$ over the scheduled horizon $[\mathcal{A}.s, \mathcal{A}.e]$  and  $\mathbf{x}^{\mathcal{A}}$ is the corresponding model learnt over the aforementioned horizon for instance  $\mathcal{A}$. We also provide an elaborate mathematical reasoning for the choices of learning rates for \textit{FedAvg}, \textit{FedOMD} in Section \ref{sec: key_theory}. Furthermore, we highlight that all the scheduling orchestrations and change detection procedure are conducted via server $\mathcal{S}$, whereas model training is still locally conducted by the client DPUs, i.e., $\mathcal{N}$.
}

\begin{algorithm}[ht]
	\caption{Randomized Scheduling Procedure} \label{scheduling_algo}
    \begin{algorithmic}[1]
         \STATE {{\textbf{Input:} $m$, $\rho(\cdot)$, First round $t$.}} 
          \STATE {{\textbf{Output:} A collection of base FL instances $\bm{A}$}}
          \STATE \textbf{Initialize:} $\bm{A} = \emptyset$.
          \FOR{$\tau = t, t+1, \cdots, t + 2^m - 1$}
          \FOR{$k = m, m-1, \cdots, 0$}
          \IF{$\tau ~\texttt{mod} ~2^k = 0$}
          {\color{black} \STATE Schedule $\mathcal{A}:= (\mathcal{A}.s, \mathcal{A}.e, \eta^{A}, \mathbf{x}^{\mathcal{A}} )$ w.p. {\color{black} $\frac{\rho(2^m)}{\rho(2^k)}$}.}
          \STATE Update Instance set : $\bm{A} = \bm{A} \cup \mathcal{A}$.
          \ENDIF
          \ENDFOR
          \ENDFOR
    \end{algorithmic}
\end{algorithm}
Algorithm \ref{scheduling_algo} acts as a subroutine that can schedule multiple instances of base algorithm at different orders of scale in a carefully constructed randomized fashion. It populates a set of base FL instances, i.e., $\bm{A}$ with run lengths upto input order $m$ and scheduled to start at different time periods for a particular block of rounds, i.e., $[t, t + 2^m -1]$. {\color{black} Next, we present a more formal definition of \textit{maximum scheduling order} input $m$ for Algorithm \ref{scheduling_algo}.  %

\begin{definition} [\textit{Maximum Scheduling Order} $m$]
    The maximum order for which Randomized Scheduling Procedure (Algorithm \ref{scheduling_algo}) can be executed over an arbitrary interval $\mathcal{I} = [t_1, t_2]$ is given by $m_{\mathcal{I}} = \ceil{\log_{2} (t_2 - t_1 + 1)}$.
\end{definition}
Next, in Algorithm \ref{scheduling_algo}, scheduling of base FL instances only happens with certain probability, i.e., {\color{black} $\frac{\rho(2^m)}{\rho(2^k)}$} for a certain order-$k$ time block.} The scheduling procedure is executed at the FL central server $S$, and subsequently communicated to all the DPUs in $\mathcal{N}$. It is necessary to emphasize that each DPU $n \in \mathcal{N}$ follows the same base algorithm schedule, however it conducts training only on its own local dataset $\mathcal{D}_{n}^{(t)}$ and updates local model to $\mathbf{x}_{n}^{(t)}$.

\begin{algorithm}[t]
    \caption{Multi-Scale FL Runner (MSFR)}
	\label{MALG}
	\begin{algorithmic}[1]
	   \STATE {\textbf{Input:} $m$, $\rho(\cdot)$, FL round $t$, \texttt{RunScheduler}} 
		\STATE \textbf{Output:} Current Loss $F^{(t)}$, Optimistic FL Loss estimate $\tilde{F}^{(t)}$.
		\IF{\texttt{RunScheduler} = {True}}
		\STATE $\bm{A} \leftarrow \text{Randomized Scheduling Procedure}(m, \rho(\cdot), t)$.
		\STATE \texttt{RunScheduler} $\leftarrow$ {False}.
		\ENDIF
        \STATE {\color{black} At round $t$, $\mathcal{A}_t  = \underset{\mathcal{A} \in \bm{A}}{\argmin} \mathcal{A}.e - t$ to be run at all DPUs.} 
		\STATE {\color{black} Receive $\{F^{(t)}, \tilde{F}^{(t)} \}$ from $\mathcal{A}_t$; $F^{(t)}$ is on model $\mathbf{x}^{(t), \mathcal{A}_t}$.}
        \end{algorithmic}
\end{algorithm} 

The aforementioned Randomized Scheduling Procedure (Algorithm \ref{scheduling_algo}) is only triggered via Multi-Scale FL Runner (Algorithm \ref{MALG}). Multi-Scale FL Runner (Algorithm \ref{MALG}) is dedicated to perform two tasks: scheduling base algorithms at the client DPUs requested via \texttt{RunScheduler} input flag, and decide the correct base instance run at the DPUs. Clearly, such an orchestration mechanism requires intervention of the server node $\mathcal{S}$.  {\color{black} From $\mathcal{S}$, all the DPUs in $\mathcal{N}$ are directed to run the base instance with the \textit{shortest remaining run length} whose schedule overlap with current FL round $t$ in step 7 of Algorithm \ref{MALG}. We denote this instance as $\mathcal{A}_t$.  We illustrate this greedy selection procedure for the baseline algorithm instances via an example presented next.

In Figure \ref{fig:master_fig_demo}, we demonstrate the execution of different scheduled instances based on the shortest remaining length greedy rule in step 7 of Algorithm \ref{MALG}. We provide a toy example with $m=4$ to illustrate how this rule works, as shown in \cite{wei2021non}. As depicted in Figure \ref{fig:master_fig_demo}, Algorithm \ref{scheduling_algo} schedules one instance of length $2^4$ (in red), two instances of length $2^2$ (in green), two instances of length $2^1$ (in black), and five instances of length 1 (in purple). The bold sections of the lines represent the active periods of each instance. Furthermore, consider that the base instances are \textit{FedAvg}. This implies the learning rates are $\frac{1}{\sqrt{2^0}}, \frac{1}{\sqrt{2^1}}, \frac{1}{\sqrt{2^2}}, \frac{1}{\sqrt{2^4}}$ for the various order $k=0,1,2,4$ instances respectively. Clearly, the order-4 instance ran for the first round, and then was paused for the next 8 rounds. Subsequently, it was executed for 3 more rounds before it was paused. The dashed line marked as \textcircled{1} means that learning restarts at round 9 for the order-4 instance with the learnt model at the end of round 1. While, the dashed line marked as \textcircled{2} represents that 2 order-0 instances were scheduled consecutive to each other. This means that although the two instances are successive, but learning happens from scratch in both the instances.      
\begin{figure}[t]
    \centering
    \includegraphics[width=0.43\textwidth]{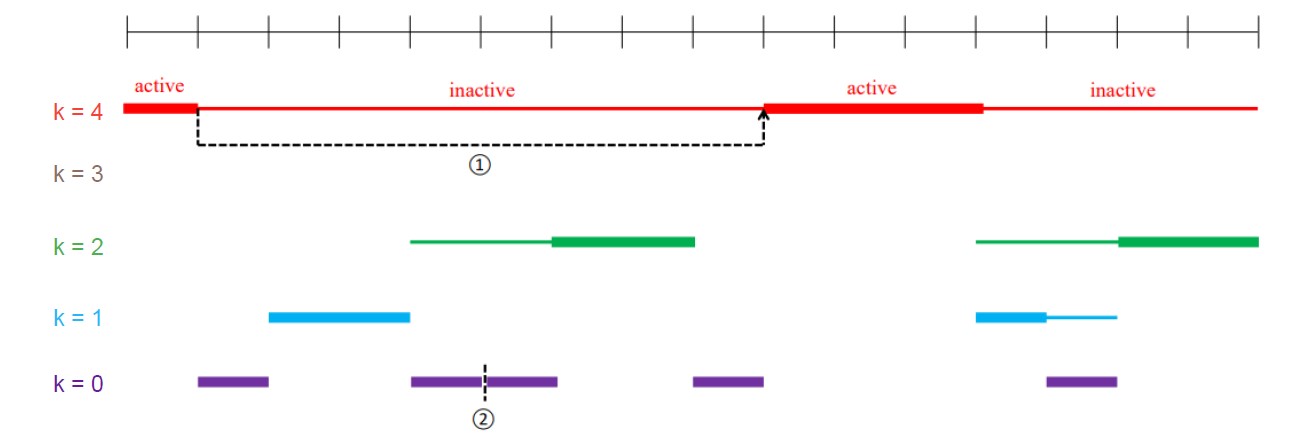} %
    \caption{\color{black} Master-FL execution example with $m=4$.}
    \label{fig:master_fig_demo}
\end{figure}
}

Consequently in step 8 of Algorithm \ref{MALG}, each DPU $n$, picks its model $\mathbf{x}_n^{(t-1), \mathcal{A}_t}$, performs model update according to  Eq. \eqref{eqn: FL-update}, and shares $\{\mathbf{x}_n^{(t), \mathcal{A}_t}, F^{(t)}_{n}(\mathbf{x}_n^{(t), \mathcal{A}_t}) \}$ with the central server $\mathcal{S}$. Finally, $\mathcal{S}$ computes and stores the instantaneous $\{F^{(t)}, \tilde{F}^{(t)} \}$. 
{\color{black}
\begin{remark}[Computation of $F^{(t)}$]
    To enforce protection of sensitive data while DPU's share their local losses $\{F^{(t)}_{n}\}_{n \in \mathcal{N}}$ to server $\mathcal{S}$ for computation of global loss measure $F^{(t)}$, privacy-preserving techniques, such as differential privacy, homomorphic encryption or secure multi-party computation, can be deployed for loss function computations in a privacy-preserving manner \cite{yin2021comprehensive} , while a detailed investigation is left as future work.%
\end{remark}
}

This Multi-Scale FL Runner (Algorithm \ref{MALG}) is executed during each FL round via Master-FL (Algorithm \ref{detection}) at the central server $\mathcal{S}$, we describe Master-FL next. 
\begin{algorithm}[ht]
    \caption{Master-FL} \label{detection}
			\begin{algorithmic}[1]
			\STATE \textbf{Input:} $\hat{\rho}(\cdot)$ (see Definition \ref{defn: rho_def}).
		\STATE \textbf{Initialize:} $t \leftarrow 1$.
		\FOR{$m=0,1, ... $}
		\STATE Set ${\color{black} t_{new}} \leftarrow t$, \texttt{RunScheduler} $\leftarrow$ {True}.
		\STATE \texttt{Test-1-Flag} $\leftarrow$ 1, \texttt{Test-2-Flag} $\leftarrow$ 1.
		\WHILE{$t < {\color{black} t_{new}} + 2^m$}
		\STATE $\{F^{(t)}, \tilde{F}^{(t)} \} \leftarrow \text{MSFR}(m, \rho (\cdot), t,\texttt{RunScheduler})$.  {\color{black} \textit{~~// Local training at DPUs $\mathcal{N}$}}\\
		\STATE Set $U_t = {\max}_{\tau \in [{\color{black} t_{new}}, t]} ~\tilde{F}^{(t)}$
		\IF{\texttt{Test-1-Flag} = 0 \OR \texttt{Test-2-Flag} = 0}
		\STATE \texttt{RunScheduler} $\leftarrow$ {True}.
		\STATE \texttt{Break}.
		\ENDIF
		\ENDWHILE
		 \ENDFOR
		{\color{black} \STATE \underline{\textbf{Test 1:}} Current $\mathcal{A}_t$ is some order $k$ base instance. 
         \IF{$t = \mathcal{A}_t.e$}
         \IF{$U_t \geq \sum_{\tau = \mathcal{A}.s}^{\mathcal{A}.e} ~F^{(t)} + 9\hat{\rho}(2^k)$}
         \STATE \texttt{Test-1-Flag} $\leftarrow$ 0.
         \ENDIF
         \ENDIF
        \item[]
		\STATE \underline{\textbf{Test 2:}}
        \IF{$\frac{1}{t - t_{new} + 1} \sum_{\tau = t_{new}}^{t} [F^{(t)} - \Tilde{F}^{(t)}] \geq 3 \hat{\rho}(t - t_{new} + 1)$}
        \STATE \texttt{Test-2-Flag} $\leftarrow$ 0.
         \ENDIF}
	\end{algorithmic}
\end{algorithm}
Master-FL tracks model training via Multi-Scale FL Runner (Algorithm \ref{MALG}) at each FL round $t = 1,2,\cdots$. Additonally, it performs two tests (see line 15, 16 in Algorithm \ref{detection}) to identify whether a significant drift has occurred. In this regard, \textbf{Test 1} intuitively allows the server $\mathcal{S}$ to examine whether a ``sudden" drift has occurred in the recent training rounds, by tracking every base algorithm upon its completion via actual loss $F^{(t)}$ and $U_t$ derived from auxiliary loss quantity $\Tilde{F}^{(t)}$. Whereas, \textbf{Test 2} keeps track of non-stationary drifts that gradually accumulated over longer time windows. {\color{black} In the following, we present more mathematical intuitions behind design of \textbf{Test 1,2} via an illustrative example.}

{\color{black} \noindent\textbf{Unpacking mathematical intuitions behind Test 1, 2:} ~Consider the decomposition of \textit{dynamic regret} expression as follows:
\begin{align}
    &R_{[1,t]} = \sum_{\tau = 1}^{t} {F}^{(\tau)}(\mathbf{x}^{(\tau)}) ~-  \sum_{\tau = 1}^{t} {F}^{(\tau)}(\mathbf{x}^{(\tau), *}), \label{dyn_regr_resp10_first}\\
    &= \underbrace{\sum_{\tau = 1}^{t} \Big[{F}^{(\tau)}(\mathbf{x}^{(\tau)}) - \Tilde{F}^{(\tau)} \Big]}_{\text{(a)}} %
   + \underbrace{\sum_{\tau = 1}^{t} \Big[\Tilde{F}^{(\tau)} - {F}^{(\tau)}(\mathbf{x}^{(\tau), *})  \Big]}_{\text{(b)}}. \label{dyn_regr_resp10}
\end{align}
When \textit{near-stationarity} conditions pertaining to Requirement \ref{assmptn: near stationary} are met via only a single instance of baseline FL algorithm, then term (a) is $t.\rho(t)$ due to Eq. \eqref{eqn: optimistic_est_2} and term (b) is  $\leq 0$. The overall regret remains unchanged, i.e., $\tilde{\mathcal{O}}(t.\rho(t))$.

However, beyond the \textit{near-stationarity} regime, i.e., $\Delta_{1,t} > \rho(t)$, both the terms can become drastically large. Note that components of term (a) are observable at the server $\mathcal{S}$ via periodic synchronizations, thereby allowing easy detection of abnormal changes for term (a). To this end, \textbf{Test 2} (line 21-24 in Algorithm \ref{detection}) takes care of term (a).\\

Term (b) in Eq. \eqref{dyn_regr_resp10} cannot be directly evaluated  since optimal models $\mathbf{x}^{(\tau), *}$ are unavailable. Large values of term (b) are owed to the fact that for one or more iterations $\tau$, the instanteneous optimal model $\mathbf{x}^{(\tau), *}$ was possibly sub-optimal at rounds $\leq \tau -1 $,  and would have caused global loss to be very high if used in those rounds. It is not possible to detect such models via single instance of one base FL algorithm particularly because one instance naturally explores only optimal gradient directions from its starting model using a fixed learning rate. Our proposed randomized multi-scale orchestration scheme that schedules and maintains several base FL instances particularly to enable detection for term (b). We summarize the idea of how this is achieved via a synthetic example in Figure \ref{fig:test1_illustration}.
\begin{figure}[t]
    \centering
    \includegraphics[trim=.8in .9in .4in 1.4in, clip, width = 0.4\textwidth]{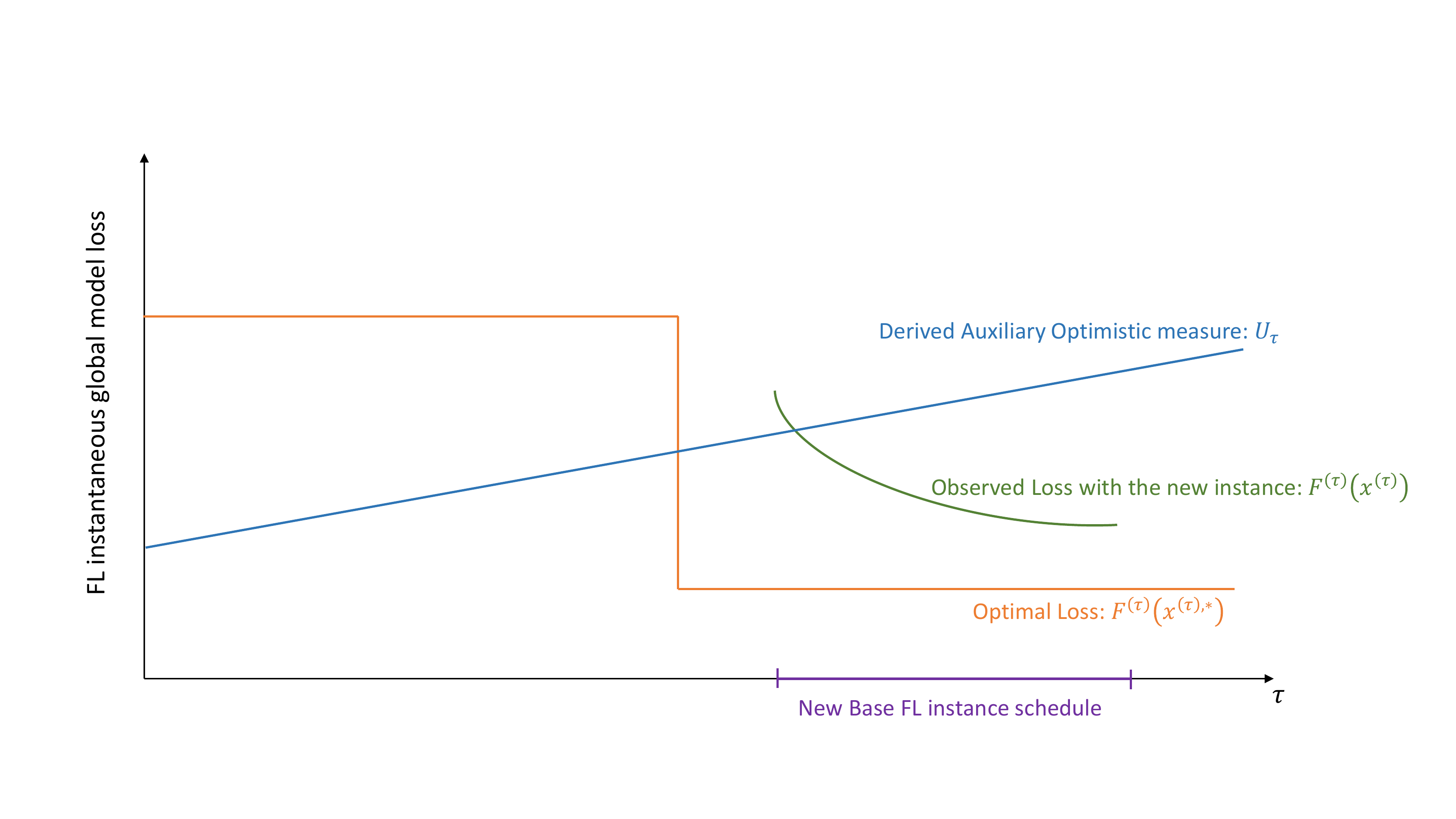}
    \caption{An illustration of how multiple base FL instances help identifying significant concept drifts.}
    \label{fig:test1_illustration}
\end{figure}
The derived optimistic quantity in Algorithm \ref{detection}: $U_t = {\max}_{\tau \in [{\color{black} t_{new}}, t]} ~\tilde{F}^{(t)}$ follows an increasing curve as depicted in Figure \ref{fig:test1_illustration}. In a stationary environment where $\Delta_{[1,t]} \rightarrow 0$, eq. \eqref{eqn: optimistic_est_1} implies that $U_{\tau}$ is always a lower bound of base FL instance's performance for every round $\tau$, which is the reason why we use the term ``optimism". Now, if a new base FL instance beats the optimistic sequence $U_{\tau}$ with an observable gap as per Figure \ref{fig:test1_illustration}, this will correspond to a significantly drifting environment. Since, we have $\text{term (b)} \leq \sum_{\tau = 1}^{t} \Big[U_{\tau} - {F}^{(\tau)}(\mathbf{x}^{(\tau), *})  \Big]$, so tracking this derived quantity sequence $\{U_{\tau}\}$ and the new base FL instance will intuitively facilitate drift detection and restarting FL learning before term (b) grows abruptly with the stale model. These mathematical intuitions in turn support our design of \textbf{Test 1} (line 15-20 in Algorithm \ref{detection}) that proxies detection of term (b) in Eq. \eqref{dyn_regr_resp10}.}\\

{\color{black} Here, we emphasize that our designed drift signaling tests significantly deviate from prior drift detection mechanisms in FL literature \cite{casado2022concept,mallick2022matchmaker}, since the proposed tests are not simply heuristic-based. In fact, the tests are designed via careful decomposition of the \textit{dynamic regret} in a non-stationary environment as suggested by the preceding discussion pertaining to mathematical intuition development for \textbf{Test 1, 2}. In Section \ref{sec: key_theory}, we unfold how the designed signaling mechanisms are tied to worst-case FL \textit{dynamic regret} bounds via comprehensive mathematical analysis. Note that another striking feature of our proposed tests is that they have a unified mathematical formulation and can be wrapped with a variety of baseline FL algorithms, for instance \textit{FedAvg, FedOMD}.}\\

{\color{black} \noindent\textbf{Random/Warm Model Initializations at restart:}} ~Finally, the  FL learning mechanism begins from scratch triggered via Algorithm \ref{MALG} if any of the aforementioned tests confirm non-stationarity. We re-iterate that this Master-FL routine is conducted directly at the server $\mathcal{S}$. {\color{black} Also, when we say that ``the base FL
instances are scheduled to start learning from scratch", it means that the previous learning schedule will now be discarded and replaced by a new learning schedule via Algorithm \ref{scheduling_algo}. In mathematical terms, the previous base FL instance collection $\bm{A}_{prev}$ will be replaced by a new collection $\bm{A}_{new}$ upon executing Algorithm 1. Also, let $X_{prev} = \{\mathbf{x}^{\mathcal{A}} : \mathcal{A} \in \bm{A}_{prev}\}$ and  $X_{new} = \{\mathbf{x}^{\mathcal{A}} : \mathcal{A} \in \bm{A}_{new}\}$ denote the ML model sets for the previous and the newly created instance sets respectively (see explicit definition of an instance in line 7 of Algorithm \ref{scheduling_algo}). Then, instances in $X_{new}$ can either be ``randomly initialized" new model vectors, or they could be populated by ``recycling" from $X_{prev}$. Our theoretical analysis presented next in Section \ref{sec: key_theory} stay unchanged with ``warm/random initializations" upon restarts.}

\section{Main Results} \label{sec: key_theory}
In this section, we will provide the dynamic regret guarantees of the proposed algorithm. In Section \ref{sec: base_FL_disc_main}, we derive the \textit{dynamic regret} bound of baseline FL algorithms: \textit{FedAvg}, \textit{FedOMD}, and discuss how the bound is impacted by different degrees of non-stationarity. In Section \ref{sec: multi_scale_near_stationary_analysis}, we show that multi-scale framework indeed preserves the properties of the base FL instances executed over arbitrary \textit{near stationary} horizons, as well as provide a concrete storage complexity bound for the proposed method.\\

{\color{black} For ease of our mathematical analysis, here we introduce the notions of ``blocks" and ``epochs" of consecutive FL training rounds. The idea of a \textit{``block"}  is motivated by the observation that in Master-FL (Algorithm \ref{detection}) training happens only in increasing chunks which are sized $2^k, ~k \in \mathbb{N}$ (please refer to line 3-6 of Algorithm \ref{detection}). More specifically, training restarts inevitably at the end of increasingly sized intervals with lengths $2^0, 2^1, \cdots$. However, if a restart trigger happened  at any round for an arbitrary order-$k$ block, then this current block is finished with less than $2^k$ rounds. Recall that a restart is only triggered via \texttt{Test-1} and/or \texttt{Test-2} in Algorithm \ref{detection}. Subsequently, Master-FL (Algorithm \ref{detection}) restarts FL training from scratch for a new order-$k+1$ block with $2^{k+1}$ rounds. Next, a mathematical definition of ``block" is presented.
\begin{definition}[Block] \label{def: block_endpoints_def_main}
Over the execution horizon of Master-FL (Algorithm \ref{detection}) for rounds $[1,T]$, $\mathcal{B} = [t_m, E_m]$ is called an order-$m$ block iff $E_m \leq t_m + 2^m -1$, and training restarts from scratch (via line 3-6 in Algorithm \ref{detection}) at rounds $t_m$ as well as $E_m + 1 \leq T$.   
\end{definition}
In the following, we unravel the meaning of an ``epoch" which we will utilize afterwards in our theoretical analysis. In contrast to a ``block", the notion of an ``epoch" is strongly associated to only restart triggers of Master-FL (Algorithm \ref{detection}).  Such restart triggers can possibly be separated across multiple successive blocks or just one single block. In our manuscript, we use ``epoch"  to denote the collection of all rounds between two consecutive restart triggers. A formal definition for an ``epoch" has been outlined next.
\begin{definition}[Epoch] \label{defn: epoch_def_main}
Over the execution horizon of Master-FL (Algorithm \ref{detection}) for rounds $[1,T]$, An interval $\mathcal{E} = [t_0, E]$ consisting of successive FL rounds is an ``epoch" if $t_0 = 1$ or a restart was triggered at $t_0 -1$. Furthermore, $E = T$ or the next restart was triggered at round $E$.
\end{definition}

In Section \ref{sec: block_regr_analysis_main}, we mathematically analyze the regret incurred by a block. Finally, in Section \ref{sec: dynamic_regr_maintext_disc}, we characterize the epoch regret and further use it to obtain the mathematical expression for the overall \textit{dynamic regret} incurred by Master-FL (Algorithm \ref{detection}) over $T$ rounds of Federated Learning.
}

\subsection{Base FL algorithm theoretical guarantees in dynamic environments} \label{sec: base_FL_disc_main}
In this subsection, we first detail the \textit{dynamic regret} analysis of vanilla \textit{FedAvg} and \textit{FedOMD} algorithms, and interpret how the results are impacted by degree of non-stationarity.

\textbf{\textit{FedAvg} \textit{dynamic regret} analysis}.  We note that the local \texttt{FL-UPDATE}($\cdot$) pertaining to \textit{FedAvg} can be written $ \forall ~n \in \mathcal{N}$ as:
\begin{align}
    \texttt{FL-UPDATE:} \hspace{3mm} \mathbf{x}_{n}^{(t)} = \mathbf{x}^{(t-1)} - \eta_t\nabla {F}_{n}^{(t)}(\mathbf{x}^{(t-1)}),  
\end{align}
where $\eta_t$ is the learning rate/step size during ML training at round $t$. Consequently, combining with Eq. \eqref{eqn: global_ML_aggr} gives the global ML model upon aggregation by server $\mathcal{S}$ as:
\begin{align}
   \mathbf{x}^{(t)} =\mathbf{x}^{(t-1)} - \eta_t\sum_{n \in \mathcal{N}} p_{n}^{(t)}\nabla {F}_{n}^{(t)}(\mathbf{x}^{(t-1)}), \label{eqn: fedavg_global_model_update_main}
\end{align}
In the following, we present the mathematical bound for \textit{dynamic regret} of \textit{FedAvg}. 
\begin{theorem}[Dynamic Regret for Convex Loss function with \textit{FedAvg}] \label{thm: fedavg_static_regr_main} Assume that the underlying ML loss measure $f(\cdot;\cdot)$ satisfies Assumption \ref{assumption: convexity_+_lipschitz} and local learning rates at the DPUs collectively represented by $\mathcal{N}$ are set to $\eta_t = \frac{1}{\sqrt{T}}$ for $t \in  \{1,2,\cdots, T\}$, the cumulative dynamic regret incurred by \textit{FedAvg} Algorithm is bounded by:
\begin{align}
    R_{[1,T]} \leq \frac{\sqrt{T}}{2} \| \mathbf{x}^{(1)} - \mathbf{x}^{*} \|^2 + \frac{{\mu}^2 \sqrt{T}}{2} + 2T\Delta_{[1, T]},
\end{align}
\end{theorem}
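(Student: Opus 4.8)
The plan is to analyze the global model update \eqref{eqn: fedavg_global_model_update_main} as a single online gradient descent step on a ``virtual'' sequence of convex functions, and then account for non-stationarity via the drift bound in Definition \ref{defn: model drift}. First I would define the aggregate gradient $\mathbf{g}^{(t)} = \sum_{n \in \mathcal{N}} p_n^{(t)} \nabla F_n^{(t)}(\mathbf{x}^{(t-1)})$ and observe that, since each $F_n^{(t)}$ is convex and $\mu$-Lipschitz (inherited from Assumption \ref{assumption: convexity_+_lipschitz} via averaging), the convex combination $F^{(t)}$ is also convex and $\mu$-Lipschitz, so $\|\mathbf{g}^{(t)}\| \le \mu$ and $\mathbf{g}^{(t)}$ is a (sub)gradient of $F^{(t)}$ at $\mathbf{x}^{(t-1)}$. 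The update \eqref{eqn: fedavg_global_model_update_main} then reads $\mathbf{x}^{(t)} = \mathbf{x}^{(t-1)} - \eta_t \mathbf{g}^{(t)}$.

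Next I would carry out the standard OGD potential argument: expand $\|\mathbf{x}^{(t)} - \mathbf{x}^{*}\|^2 = \|\mathbf{x}^{(t-1)} - \mathbf{x}^{*}\|^2 - 2\eta_t \langle \mathbf{g}^{(t)}, \mathbf{x}^{(t-1)} - \mathbf{x}^{*}\rangle + \eta_t^2 \|\mathbf{g}^{(t)}\|^2$, solve for the inner product, and use convexity $F^{(t)}(\mathbf{x}^{(t-1)}) - F^{(t)}(\mathbf{x}^{*}) \le \langle \mathbf{g}^{(t)}, \mathbf{x}^{(t-1)} - \mathbf{x}^{*}\rangle$. Here $\mathbf{x}^{*}$ is a fixed comparator; I would take $\mathbf{x}^{*}$ to be the minimizer used in the telescoping, say $\mathbf{x}^{*} = \mathbf{x}^{(1),*}$ or whichever fixed point the statement intends (the $\|\mathbf{x}^{(1)} - \mathbf{x}^{*}\|^2$ term suggests a single fixed comparator). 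Summing over $t = 1, \dots, T$ with constant $\eta_t = 1/\sqrt{T}$, the $\|\mathbf{x}^{(t-1)}-\mathbf{x}^{*}\|^2$ terms telescope to give $\tfrac{1}{2}\sqrt{T}\|\mathbf{x}^{(1)} - \mathbf{x}^{*}\|^2$, and the gradient-norm terms contribute $\sum_t \tfrac{\eta_t}{2}\mu^2 = \tfrac{\mu^2 \sqrt{T}}{2}$. This yields $\sum_{t=1}^{T} \big(F^{(t)}(\mathbf{x}^{(t-1)}) - F^{(t)}(\mathbf{x}^{*})\big) \le \tfrac{\sqrt{T}}{2}\|\mathbf{x}^{(1)} - \mathbf{x}^{*}\|^2 + \tfrac{\mu^2\sqrt{T}}{2}$.

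The remaining work is to convert this into the dynamic regret $R_{[1,T]} = \sum_t F^{(t)}(\mathbf{x}^{(t)}) - \sum_t F^{(t)}(\mathbf{x}^{(t),*})$. Two gaps must be bridged: the loss is evaluated at $\mathbf{x}^{(t)}$ rather than $\mathbf{x}^{(t-1)}$, and the comparator is the time-varying $\mathbf{x}^{(t),*}$ rather than a fixed $\mathbf{x}^{*}$. For the index shift, I would either re-index the OGD bound (running the telescoping over $\mathbf{x}^{(t)}$ using gradients at $\mathbf{x}^{(t)}$, i.e. treating the post-aggregation model as the iterate, which is the natural reading of the FedAvg loop where $\mathbf{x}^{(t)}$ is applied to $\mathcal{D}_n^{(t)}$), so that this subtlety disappears. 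For the comparator, I would bound $\sum_t \big(F^{(t)}(\mathbf{x}^{*}) - F^{(t)}(\mathbf{x}^{(t),*})\big)$: choosing $\mathbf{x}^{*}$ as the minimizer of some representative $F^{(\tau)}$ and using the drift inequality $|F^{(t)}(\mathbf{x}) - F^{(t-1)}(\mathbf{x})| \le \Delta_t$ repeatedly gives $|F^{(t)}(\mathbf{x}^{*}) - F^{(\tau)}(\mathbf{x}^{*})| \le \Delta_{[1,T]}$ and similarly $|F^{(\tau)}(\mathbf{x}^{(t),*}) - F^{(t)}(\mathbf{x}^{(t),*})| \le \Delta_{[1,T]}$, so each of the $T$ summands is controlled by $2\Delta_{[1,T]}$, producing the $2T\Delta_{[1,T]}$ term. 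I expect this last step — carefully accounting for the time-varying optima and the index shift so the constants come out exactly as $\tfrac12\sqrt{T}\|\mathbf{x}^{(1)}-\mathbf{x}^{*}\|^2 + \tfrac{\mu^2\sqrt{T}}{2} + 2T\Delta_{[1,T]}$ — to be the main obstacle; the pure OGD part is routine. I would also double-check whether the intended $\mathbf{x}^{*}$ is meant to be $\arg\min_{\mathbf{x}} \max_\tau F^{(\tau)}(\mathbf{x})$ or simply an arbitrary fixed point, and adjust the drift accounting accordingly, but in all reasonable readings the $2T\Delta_{[1,T]}$ slack suffices.
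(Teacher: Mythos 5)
Your proposal is correct and follows the same overall strategy as the paper: decompose $R_{[1,T]}$ into a static-regret term against a fixed comparator $\mathbf{x}^{*}$ plus a comparator-gap term $\sum_{t}\big(F^{(t)}(\mathbf{x}^{*})-F^{(t)}(\mathbf{x}^{(t),*})\big)$, handle the first by the standard OGD potential/telescoping argument on the aggregated update with $\eta_t = 1/\sqrt{T}$ (using convexity of $\|\cdot\|^2$ to pass the squared norm inside the client average and the $\mu$-Lipschitz bound on each $\|\nabla F_n^{(t)}\|$), and handle the second by showing each summand is at most $2\Delta_{[1,T]}$. The one place you diverge is the choice of $\mathbf{x}^{*}$ and the resulting argument for term (b): the paper takes $\mathbf{x}^{*} = \argmin_{\mathbf{x}}\sum_{t=1}^{T}F^{(t)}(\mathbf{x})$ and proves the per-round bound $F^{(t)}(\mathbf{x}^{*})-F^{(t)}(\mathbf{x}^{(t),*})\leq 2\Delta_{[1,T]}$ by contradiction (a violating round $t_0$ would make $\mathbf{x}^{(t_0),*}$ beat $\mathbf{x}^{*}$ on the cumulative loss), whereas you take $\mathbf{x}^{*}$ to be the minimizer of a single representative $F^{(\tau)}$ and chain the drift inequality twice through the optimality of $\mathbf{x}^{*}$ for $F^{(\tau)}$; both give exactly the $2T\Delta_{[1,T]}$ slack, and your direct version is arguably cleaner. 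Your concern about the index shift is also well placed but harmless: the paper's proof silently treats $\mathbf{x}^{(t)}$ as the iterate at which the round-$t$ gradient is taken (producing $\mathbf{x}^{(t+1)}$), which is precisely the re-indexing you propose, so the constants come out as stated.
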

\begin{proof}
From the definition of \textit{dynamic regret} in \eqref{defn: dynamic regret}, we have:
\begin{align}
    R_{[1,T]} &= \sum_{t = 1}^{T} {F}^{(t)}(\mathbf{x}^{(t)}) -  \sum_{t = 1}^{T} {F}^{(t)}(\mathbf{x}^{(t),*}), \label{eqn: fedavg_dreg_def2_main} \\
    & = \underbrace{\sum_{t = 1}^{T} {F}^{(t)}(\mathbf{x}^{(t)}) - \sum_{t = 1}^{T} {F}^{(t)}(\mathbf{x}^{*})}_\text{(a)} \nonumber \\
    & + \underbrace{\sum_{t = 1}^{T} {F}^{(t)}(\mathbf{x}^{*}) - \sum_{t = 1}^{T} {F}^{(t)}(\mathbf{x}^{(t),*})}_\text{(b)},   \label{eqn: fedavg_dreg1_main}
\end{align}
where, we define the \textit{static comparator} $\mathbf{x}^{*}$ as follows:
\begin{align}
    \mathbf{x}^{*} = \underset{\mathbf{x}}{\min} ~\sum_{t=1}^{T} F^{(t)}(\mathbf{x}). \label{eqn: static_comp_def_main}
\end{align}
Next, we individually bounding terms (a) and (b) in Eq. \eqref{eqn: fedavg_dreg1_main}, and the detailed steps are given in Appendix \ref{apd: fedavg_static_regr_main}. 
\if 0
Henceforth, we focus on individually bounding terms (a) and (b) in Eq. \eqref{eqn: fedavg_dreg1_main}. In order to bound term (a), we proceed as follows:
\begin{align}
    \|\mathbf{x}^{(t + 1)} &- \mathbf{x}^{*}\|^2 = \|\mathbf{x}^{(t)} - \mathbf{x}^{*} - \eta_t\sum_{n \in \mathcal{N}} p_{n}^{(t)}\nabla {F}_{n}^{(t)}(\mathbf{x}^{(t)}) \|^2, \label{eqn: fedavg_term_a_1_main} \\
    & = \| \mathbf{x}^{(t)} - \mathbf{x}^{*} \|^2 + \eta_t^2\|\sum_{n \in \mathcal{N}} p_{n}^{(t)}\nabla {F}_{n}^{(t)}(\mathbf{x}^{(t)}) \|^2 \nonumber \\
    & -2\eta_t \sum_{n \in \mathcal{N}} p_{n}^{(t)} \langle \nabla {F}_{n}^{(t)}(\mathbf{x}^{(t)}), \mathbf{x}^{(t)} - \mathbf{x}^{*} \rangle \\
    & \leq  \| \mathbf{x}^{(t)} - \mathbf{x}^{*} \|^2 + \sum_{n \in \mathcal{N}} p_{n}^{(t)} \eta_t^2 \|\nabla {F}_{n}^{(t)}(\mathbf{x}^{(t)}) \|^2 \nonumber \\
    & -2\eta_t \sum_{n \in \mathcal{N}} p_{n}^{(t)} \langle \nabla {F}_{n}^{(t)}(\mathbf{x}^{(t)}), \mathbf{x}^{(t)} - \mathbf{x}^{*} \rangle \label{eqn: fedavg_term_a_2_main} \\
    & \leq \| \mathbf{x}^{(t)} - \mathbf{x}^{*} \|^2 + \eta_t^2{\mu}^2 \nonumber \\
    & -2\eta_t \sum_{n \in \mathcal{N}} p_{n}^{(t)} \langle \nabla {F}_{n}^{(t)}(\mathbf{x}^{(t)}), \mathbf{x}^{(t)} - \mathbf{x}^{*} \label{Eq.fedavg_term_a_temp_main}\rangle
\end{align}
We note that Eq. \eqref{eqn: fedavg_term_a_1_main} is due to the aggregated model update produced by \textit{FedAvg} as indicated by Eq. \eqref{eqn: fedavg_global_model_update_main}. Also, it is worth highlighting that Eq. \eqref{eqn: fedavg_term_a_2_main} is due to convexity of squared L2 euclidean norm. Furthermore, we use $\mu$-Lipschitz property of underlying ML loss function $f(\cdot;\cdot)$ as described by Assumption \ref{assumption: convexity_+_lipschitz} to obtain Eq. \eqref{Eq.fedavg_term_a_temp_main}. 
After re-arranging Eq. \eqref{Eq.fedavg_term_a_temp_main}, we get:
\begin{align}
     \sum_{n \in \mathcal{N}} p_{n}^{(t)} \langle \nabla {F}_{n}^{(t)}(\mathbf{x}^{(t)}), & \mathbf{x}^{(t)} - \mathbf{x}^{*} \rangle  \leq \frac{1}{2\eta_t} \| \mathbf{x}^{(t)} -  \mathbf{x}^{*} \|^2 \nonumber \\
     &- \frac{1}{2\eta_t}\|\mathbf{x}^{(t + 1)} - \mathbf{x}^{*}\|^2 
     + \frac{\eta_t {\mu}^2}{2}. \label{eqn: bound (a) 1_main} 
\end{align}
Also, due to the convexity of underlying ML loss function $f(\cdot;\cdot)$ as specified in Assumption \ref{assumption: convexity_+_lipschitz}, we have:
\begin{align}
    & {F}_{n}^{(t)}(\mathbf{x}^{(t)}) - {F}_{n}^{(t)}(\mathbf{x}^{*}) \leq \langle \nabla {F}_{n}^{(t)}(\mathbf{x}^{(t)}), \mathbf{x}^{(t)} - \mathbf{x}^{*} \rangle.
\end{align}
\begin{align}
{F}^{(t)}(\mathbf{x}^{(t)}) - {F}^{(t)}(\mathbf{x}^{*}) & = \sum_{n \in \mathcal{N}} p_{n}^{(t)} \big[{F}_{n}^{(t)}(\mathbf{x}^{(t)}) - {F}_{n}^{(t)}(\mathbf{x}^{*})\big] \nonumber \\
    & \leq \sum_{n \in \mathcal{N}} p_{n}^{(t)} \langle \nabla {F}_{n}^{(t)}(\mathbf{x}^{(t)}), \mathbf{x}^{(t)} - \mathbf{x}^{*} \rangle \label{eqn: fedavg_term_a_3_main}
\end{align}
Now, we use Eq. \eqref{eqn: bound (a) 1_main} to upper bound the RHS of Eq. \eqref{eqn: fedavg_term_a_3_main}, thereby obtaining:
\begin{align}
    {F}^{(t)}(\mathbf{x}^{(t)}) - {F}^{(t)}(\mathbf{x}^{*}) \leq & \frac{1}{2\eta_t} \Big[\| \mathbf{x}^{(t)} - \mathbf{x}^{*} \|^2 - \|\mathbf{x}^{(t + 1)} &- \mathbf{x}^{*}\|^2 \Big] \nonumber \\
    & + \frac{\eta_t {\mu}^2}{2}. \label{eqn: bound (a) 2_main}
\end{align}
Conducting summation over $t = 1$ to $t = T$ in Eq. \eqref{eqn: bound (a) 2_main} with learning rates $\eta_t = \frac{1}{\sqrt{T}}$, $\forall t$, we obtain the following bound for term (a):
\begin{align}
    \sum_{t=1}^{T} {F}^{(t)}(\mathbf{x}^{(t)}) - {F}^{(t)}(\mathbf{x}^{*}) \leq \frac{\sqrt{T}}{2} \| \mathbf{x}^{(1)} - \mathbf{x}^{*} \|^2 + \frac{\mu^2 \sqrt{T}}{2}. \label{eqn: bound (a) 3_main}
\end{align}
In the following, we focus on bounding term (b) in Eq. \eqref{eqn: fedavg_dreg1_main}. More specifically, we want to show that ${F}^{(t)}(\mathbf{x}^{*}) -{F}^{(t)}(\mathbf{x}^{(t),*}) \leq 2\Delta_{[1,T]}$, $\forall t$. Suppose otherwise, in that case, $\exists ~t_0$ such that ${F}^{(t_0)}(\mathbf{x}^{*}) -{F}^{(t_0)}(\mathbf{x}^{(t_0),*}) > 2\Delta_{[1,T]}$. Then,
\begin{align}
    {F}^{(t)}(\mathbf{x}^{(t_0), *}) & \leq {F}^{(t_0)}(\mathbf{x}^{(t_0), *}) + \Delta_{[1,T]},  \label{eqn: term b 1_main}\\
    & <  {F}^{(t_0)}(\mathbf{x}^{*}) -  \Delta_{[1,T]}, \label{eqn: term b 2_main} \\ 
    & \leq F^{(t)}(\mathbf{x}^{*}), \hskip 1cm \forall t. \label{eqn: term b 3_main}
\end{align}
Summing over $t = 1$ to $t = T$ for both LHS and RHS of Eq. \eqref{eqn: term b 3_main}, we obtain:
\begin{align}
    \sum_{t = 1}^{T} {F}^{(t)}(\mathbf{x}^{(t_0), *}) <  \sum_{t = 1}^{T} F^{(t)}(\mathbf{x}^{*})  
\end{align}
thereby contradicting the definition of $\mathbf{x}^{*}$ presented in Eq. \eqref{eqn: static_comp_def_main}. Therefore, for each element of term (b), the following holds:
\begin{align}
    {F}^{(t)}(\mathbf{x}^{*}) -{F}^{(t)}(\mathbf{x}^{(t),*}) \leq 2\Delta_{[1,T]}, \forall t \label{eqn: term b 4_main}
\end{align}
Summing over $t = 1$ to $t = T$ for both LHS and RHS of Eq. \eqref{eqn: term b 4_main}, we get the final bound for term (b) as:
\begin{align}
    \sum_{t = 1}^{T} {F}^{(t)}(\mathbf{x}^{*}) - \sum_{t = 1}^{T} {F}^{(t)}(\mathbf{x}^{(t),*}) \leq 2T\Delta_{[1,T]} \label{eqn: term b 5_main}
\end{align}
Combining the bounds for term (a) and (b) as reflected via Eq. \eqref{eqn: bound (a) 3_main} and \eqref{eqn: term b 5_main} respectively, we get the final bound for cumulative dynamic regret $R_{[1,T]}$ as:
\begin{align}
    R_{[1,T]} \leq \frac{\sqrt{T}}{2} \| \mathbf{x}^{(1)} - \mathbf{x}^{*} \|^2 + \frac{{\mu}^2 \sqrt{T}}{2} + 2T\Delta_{[1, T]}
\end{align}
\fi 
\end{proof}
\textbf{\textit{FedOMD} \textit{dynamic regret} analysis}. In the following, we first explain the \texttt{FL-UPDATE}($\cdot$) rule at the DPUs in $\mathcal{N}$ for \textit{FedOMD} algorithm. In this context, we summarize the mathematical details around the notion of Bregman Divergence. Specifically, consider an arbitrary 1-strongly convex function $\phi : \mathbb{R}^d \rightarrow \mathbb{R}$ w.r.t. L2 euclidean norm i.e., $\|\cdot\|$. Therefore, due to strong convexity of $\phi(\cdot)$, the following holds:
\begin{align}
    \phi(\mathbf{y}) \geq \phi(\mathbf{x}) + \langle \mathbf{y}-\mathbf{x}, \nabla{\phi(\mathbf{x})}  \rangle + \frac{1}{2} \| \mathbf{y} - \mathbf{x} \|^2, ~\forall \mathbf{x}, \mathbf{y} \in \mathbb{R}^d. \label{eqn: phi_strongly_convex_defn_main}
\end{align}
Consequently, the Bregman Divergence w.r.t $\phi(\cdot)$ is defined as:
\begin{align}
    B_{\phi}(\mathbf{y};\mathbf{x}) \triangleq \phi(\mathbf{y}) -\phi(\mathbf{x}) - \langle \mathbf{y} - \mathbf{x}, \nabla{\phi}(\mathbf{x}) \rangle.
\end{align}
Furthermore, the Bregman Divergence $B_{\phi}$ is chosen such that it satisfies the assumption stated next.
\begin{assumption} \label{assumption: bregman_weighted_main}
For any collection of arbitrary points $\mathbf{z}_1, \mathbf{z}_2, \cdots, \mathbf{z}_m \in \mathbbm{R}^{d}$, with scalar weights $w_1, \cdots, w_j \in [0,1]$ such that $\sum_{i = 1}^j w_i = 1$, the following holds for all $u \in \mathbbm{R}^{d}$:
\begin{align}
    B_{\phi}(u,\sum_{i = 1}^{j} w_i z_i) \leq \sum_{i = 1}^{j} w_i B_{\phi}(u, z_i).
\end{align}
\end{assumption}
It is worth highlighting that Assumption \ref{assumption: bregman_weighted_main} is in fact true for commonly used Bregman divergences such as Euclidean Distance and KL-Divergence. Furthermore, we note that this assumption is only required to achieve sub-linear convergence speeds for \textit{FedOMD} algorithm \cite{fedomdpaper}, and not a general requirement for our multi-scale algorithmic framework.\\

The local model update, i.e., \texttt{FL-UPDATE}($\cdot$) during each Federated Learning round where \textit{FedOMD} Algorithm is executed can be described as:
\begin{align}
    & \texttt{FL-UPDATE:} \hspace{5mm} \mathbf{x}^{(t+1)}_{n} = \underset{x \in \mathbb{R}^p}{\argmin} ~\psi_{n}(\mathbf{x}; \mathbf{x}^{(t)}), \label{eqn:bregman_minimize_func_main} \\
    & \hspace{5mm} \psi_{n}(\mathbf{x}; \mathbf{x}^{(t)}) \triangleq \langle \nabla {F}_{n}^{(t)}(\mathbf{x}^{(t)}), \mathbf{x} \rangle + \frac{1}{\eta_t} B_{\phi} (\mathbf{x}; \mathbf{x}^{(t)}). \label{eqn:bregman_loss_func_defn_main}
\end{align}
We restate the global ML model aggregation procedure as summarized in Eq. \eqref{eqn: global_ML_aggr} in the following: 
\begin{align} 
    \mathbf{x}^{(t+1)} = \sum_{n \in \mathcal{N}} p_{n}^{(t)} \mathbf{x}^{(t+1)}_{n}. \label{eqn: global_ML_aggr_fedomd_main}
\end{align}

\begin{theorem}[Dynamic Regret with Convex Loss function for \textit{FedOMD}] \label{thm: fedomd_convex_regr_main} Assume that the underlying ML loss measure $f(\cdot;\cdot)$ satisfies Assumption \ref{assumption: convexity_+_lipschitz} and local learning rates at the DPUs collectively represented by $\mathcal{N}$ are set to $\eta_t = \frac{1}{\sqrt{T}}$ for $t \in  \{1,2, \cdots, T \}$, the cumulative dynamic regret incurred by \textit{FedOMD} Algorithm is bounded by:
\begin{align}
    R_{[1,T]} \leq \sqrt{T} B_{\phi}(\mathbf{x}^{*},\mathbf{x}^{(1)}) + \frac{\mu^2}{2}\sqrt{T} + 2T\Delta_{[1,T]}.
\end{align}
\end{theorem}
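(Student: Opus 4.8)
The plan is to mirror the structure of the proof of Theorem \ref{thm: fedavg_static_regr_main}. First I would introduce the same \emph{static comparator} $\mathbf{x}^{*} = \argmin_{\mathbf{x}} \sum_{t=1}^{T} F^{(t)}(\mathbf{x})$ and split the dynamic regret as $R_{[1,T]} = \text{(a)} + \text{(b)}$, where (a) $= \sum_{t=1}^{T} [F^{(t)}(\mathbf{x}^{(t)}) - F^{(t)}(\mathbf{x}^{*})]$ is the \emph{static} regret of \textit{FedOMD} and (b) $= \sum_{t=1}^{T} [F^{(t)}(\mathbf{x}^{*}) - F^{(t)}(\mathbf{x}^{(t),*})]$. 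Term (b) is bounded exactly as in Theorem \ref{thm: fedavg_static_regr_main}: a contradiction argument using Definition \ref{defn: model drift} shows $F^{(t)}(\mathbf{x}^{*}) - F^{(t)}(\mathbf{x}^{(t),*}) \le 2\Delta_{[1,T]}$ for every $t$, so (b) $\le 2T\Delta_{[1,T]}$. This part is verbatim from the \textit{FedAvg} proof since it only uses the drift definition and the optimality of $\mathbf{x}^{*}$, not the update rule.

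For term (a), the core is the standard online mirror descent argument applied per DPU and then aggregated. From the minimizer characterization of the update \eqref{eqn:bregman_minimize_func_main}, the first-order optimality condition together with the three-point identity for Bregman divergences gives, for every $u \in \mathbb{R}^d$ and every $n \in \mathcal{N}$,
$$\langle \nabla F_n^{(t)}(\mathbf{x}^{(t)}), \mathbf{x}^{(t)} - u\rangle \le \frac{1}{\eta_t}\big[B_{\phi}(u;\mathbf{x}^{(t)}) - B_{\phi}(u;\mathbf{x}^{(t+1)}_n)\big] + \frac{\eta_t}{2}\|\nabla F_n^{(t)}(\mathbf{x}^{(t)})\|^2,$$
where the last term uses the $1$-strong convexity of $\phi$ from \eqref{eqn: phi_strongly_convex_defn_main}, and Assumption \ref{assumption: convexity_+_lipschitz} ($\mu$-Lipschitz $f$, hence $\|\nabla F_n^{(t)}\| \le \mu$ by Jensen's inequality) bounds the squared-norm term by $\mu^2$. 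Convexity of $f$ then gives $F^{(t)}(\mathbf{x}^{(t)}) - F^{(t)}(u) = \sum_n p_n^{(t)}[F_n^{(t)}(\mathbf{x}^{(t)}) - F_n^{(t)}(u)] \le \sum_n p_n^{(t)}\langle \nabla F_n^{(t)}(\mathbf{x}^{(t)}), \mathbf{x}^{(t)} - u\rangle$. Taking the $p_n^{(t)}$-weighted combination of the displayed inequality and invoking Assumption \ref{assumption: bregman_weighted_main} with \eqref{eqn: global_ML_aggr_fedomd_main} to pass from $\sum_n p_n^{(t)} B_{\phi}(u;\mathbf{x}^{(t+1)}_n)$ to $B_{\phi}(u;\mathbf{x}^{(t+1)})$ yields the per-round bound
$$F^{(t)}(\mathbf{x}^{(t)}) - F^{(t)}(u) \le \frac{1}{\eta_t}\big[B_{\phi}(u;\mathbf{x}^{(t)}) - B_{\phi}(u;\mathbf{x}^{(t+1)})\big] + \frac{\eta_t}{2}\mu^2.$$
Summing over $t=1,\dots,T$ with the constant choice $\eta_t = 1/\sqrt{T}$, the Bregman terms telescope to $\sqrt{T}\big[B_{\phi}(u;\mathbf{x}^{(1)}) - B_{\phi}(u;\mathbf{x}^{(T+1)})\big] \le \sqrt{T}\,B_{\phi}(u;\mathbf{x}^{(1)})$ since $B_{\phi}\ge 0$, and the step-size terms sum to $\tfrac{\mu^2}{2}\sqrt{T}$. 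Setting $u=\mathbf{x}^{*}$ gives (a) $\le \sqrt{T}\,B_{\phi}(\mathbf{x}^{*},\mathbf{x}^{(1)}) + \tfrac{\mu^2}{2}\sqrt{T}$, and adding the bound on (b) gives the claim.

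The main obstacle — really the only non-routine step — is the aggregation: the per-DPU mirror descent recursion produces a potential $\sum_n p_n^{(t)} B_{\phi}(u;\mathbf{x}^{(t+1)}_n)$ that does not telescope against $B_{\phi}(u;\mathbf{x}^{(t)})$ at the \emph{next} round unless it can be lower-bounded by $B_{\phi}(u;\mathbf{x}^{(t+1)})$ evaluated at the server-averaged iterate \eqref{eqn: global_ML_aggr_fedomd_main}; this is precisely what Assumption \ref{assumption: bregman_weighted_main} supplies, and it is where the joint convexity (convexity in the second argument) of $B_\phi$ is essential — satisfied, as noted, by the Euclidean distance and the KL divergence. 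A secondary point to keep clean is consistency of the norm in the gradient bound $\|\nabla F_n^{(t)}\| \le \mu$ versus the norm with respect to which $\phi$ is $1$-strongly convex; taking both to be $\|\cdot\|$ as in Assumption \ref{assumption: convexity_+_lipschitz} keeps the duality bookkeeping routine. The detailed calculation will be deferred to an appendix, as was done for Theorem \ref{thm: fedavg_static_regr_main}.
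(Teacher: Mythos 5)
Your proposal is correct and follows essentially the same route as the paper's proof in Appendix D: the same static-comparator decomposition, the same contradiction argument giving $2T\Delta_{[1,T]}$ for the dynamic part, and the same per-round mirror-descent inequality (your single displayed bound is exactly the paper's terms (c) and (d) after the Fenchel--Young and strong-convexity cancellation), followed by the weighted aggregation via Assumption \ref{assumption: bregman_weighted_main} and telescoping. No gaps.
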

\begin{proof}
Please refer to Appendix \ref{app: base_alg_guarantees}.
\end{proof}
In a nutshell, proving Theorem \ref{thm: fedomd_convex_regr_main} uses the same \textit{dynamic regret} decomposition technique as done in Theorem \ref{thm: fedavg_static_regr_main} via Eq. \eqref{eqn: fedavg_dreg_def2_main} - \eqref{eqn: fedavg_dreg1_main}. Consequently, the static and dynamic components are individually bounded and combined to get the final \textit{dynamic regret} bound for \textit{FedOMD}.

\textbf{Interpretation of \textit{regret} results over varying degree of drifts.}  The \textit{dynamic regret} results obtained in Theorems \ref{thm: fedavg_static_regr_main} and \ref{thm: fedomd_convex_regr_main} corroborate a crucial intuition that supports our algorithm design. More specifically, it reflects that the aforementioned conventional FL methods may work well in dynamic environments which are \textit{near stationary}. To see this mathematically, we appeal to the notion of \textit{near stationarity} introduced in Assumption \ref{assmptn: near stationary} which implies : $\Delta_{[1,T]} \leq \rho(T)$. \\

{\color{black}
\begin{remark}
In development and analysis of our algorithmic framework, we have considered full-batch gradient computations for base FL algorithms \textit{FedAvg, FedOMD}. However, it is worth highlighting that in a mini-batch stochastic gradient computation setting, the statistical properties of the proxy gradient resembles that of the true gradient. Consequently, our algorithmic framework is directly extendable to a stochastic update approach, we present a proof sketch with mathematical details and explanations in Appendix \ref{app: mini-batch-explanation}.
\end{remark}
}

{ \color{black} \begin{remark} \label{remark: linfactoring} Although the performance guarantees of the vanilla algorithms don't change at such small drifts, however higher drifts cause the bounds to get worse (from $\Tilde{\mathcal{O}}(\sqrt{T})$ in small drift scenarios to $\Tilde{\Omega}(\sqrt{T})$ in presence of high drifts) due to \textit{linear factoring} of the drift terms with horizon length $T$. 
\end{remark}
In order to mathematically delineate the claim of Remark \ref{remark: linfactoring}, we first note that the terms $\Delta_{[1,T]} T$ in the RHS of regret expressions presented via Theorems \ref{thm: fedavg_static_regr_main}, \ref{thm: fedomd_convex_regr_main} reflect this aforementioned \textit{linear factoring} effect. To appreciate how exactly theoretical performance of vanilla algorithms deteriorates in high drift scenarios, consider the violation of the \textit{near-stationarity} condition stated in Assumption \ref{assmptn: near stationary} i.e., $\Delta_{[1,T]} > \rho(t) \geq \frac{1}{\sqrt{T}}$. Consequently, the bounds of vanilla \textit{FedAvg}, \textit{FedOMD} would become at least $\Tilde{\Omega}(\sqrt{T})$ in contrast to the \textit{near-stationarity} scenario where the bounds would be at worst $\Tilde{\mathcal{O}}(\sqrt{T})$, as suggested by RHS of regret bounds presented in Theorems \ref{thm: fedavg_static_regr_main}, \ref{thm: fedomd_convex_regr_main}.  }

To mitigate this, our multi-scale algorithmic framework leverages the \textit{near stationarity} guarantees of the base algorithm by augmenting it with carefully curated non-stationarity tests (\textbf{Test 1} and \textbf{Test 2} in Algorithm  \ref{detection}). We outline our theoretical findings of how better regrets could be achieved at higher degrees of drifts in the subsequent discussions.   
\subsection{Multi-Scale Algorithm Analysis} \label{sec: multi_scale_near_stationary_analysis}
The Multi-Scale FL Runner (Algorithm \ref{MALG}) is essentially a subroutine within Master-FL (Algorithm \ref{detection}) that orchestrates different scheduled base FL instances over a specified block of aggregation rounds, and produces the sequence of optimistic loss quantities $\{\Tilde{F}^{(t)} \}$. {\color{black} We redirect the readers to Appendix \ref{sec: optimistic_estimator_verification} wherein we outline the construction of the sequence $\{\Tilde{F}^{(t)} \}$ and validate Requirement \ref{assmptn: near stationary}  for \textit{FedAvg, FedOMD}.} 
\begin{lemma} \label{lemma:multi_scale_regr_main} Let $\hat{m} = \log_2 T + 1$ and Multi-Scale FL Runner (Algorithm \ref{MALG}) is executed with input $m \leq \log_2 T$. Furthermore, we assume that with any instance of base FL algorithm $\mathcal{A}$ initiated within  
Algorithm \ref{MALG} and any $t \in [\mathcal{A}.s, \mathcal{A}.e]$, the cumulative concept drift satisfies $\Delta_{[\mathcal{A}.s,t]} \leq \rho(t')$ where $t' = t - \mathcal{A}.s +1$. Then, with probability $1-\frac{\delta}{T}$, the following holds:
\begin{align}
    & \Tilde{F}^{(t)} \leq \underset{\tau \in [\mathcal{A}.s,t]}{\max} F^{(\tau)}(\mathbf{x}^{(\tau),*}) + \Delta_{[\mathcal{A}.s,t]}, \label{eqn: opt_bound_eq1_main}\\
    & \frac{1}{t'} \sum_{\tau = \mathcal{A}.s}^{t} F^{(\tau)}(\mathbf{x}^{(\tau)}) - \Tilde{F}^{(\tau)} \leq \hat{\rho}(t') + \hat{m}\Delta_{[\mathcal{A}.s,t]}, \label{eqn: opt_bound_eq2_main}
\end{align}
and, the number of instances running within the interval $[\mathcal{A}.s,t]$ is bounded by $6\hat{m}\log (T/\delta) \frac{C(t')}{C(1)}$, where $C(t)$ is as described in Definition \ref{defn: rho_def}.
\end{lemma}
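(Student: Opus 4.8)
\textbf{Proof proposal for Lemma \ref{lemma:multi_scale_regr_main}.} The plan is to transfer Requirement \ref{assmptn: near stationary}, which holds for each individual base FL instance, up to the level of the Multi-Scale FL Runner by a union bound over all instances scheduled by Algorithm \ref{scheduling_algo}. First I would fix the interval $[\mathcal{A}.s, t]$ associated with the instance $\mathcal{A}$ that actually produces $\{F^{(\tau)}, \tilde{F}^{(\tau)}\}$ at each round $\tau$ (recall step 7--8 of Algorithm \ref{MALG} selects the instance with shortest remaining run length). Because the hypothesis assumes $\Delta_{[\mathcal{A}.s,t]} \le \rho(t')$ with $t' = t - \mathcal{A}.s + 1$, i.e.\ every scheduled instance sees a near-stationary window, Requirement \ref{assmptn: near stationary} applies verbatim to each such instance: Eq.\ \eqref{eqn: optimistic_est_1} immediately gives \eqref{eqn: opt_bound_eq1_main} once one notes $\Delta_{[\mathcal{A}.s,t]}$ plays the role of $\Delta_{[1,t]}$ (indexing is just a time shift), and Eq.\ \eqref{eqn: optimistic_est_2} gives the per-instance averaged bound $\frac{1}{t'}\sum_{\tau} [F^{(\tau)}(\mathbf{x}^{(\tau)}) - \tilde{F}^{(\tau)}] \le \rho(t') + \Delta_{[\mathcal{A}.s,t]}$.

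Next I would handle the gap between the per-instance guarantee $\rho(t')$ and the claimed $\hat\rho(t') = 6(\log_2 T + 1)\log(T/\delta)\rho(t')$ in \eqref{eqn: opt_bound_eq2_main}, together with the $\hat m$ inflation of the drift term. The point is that over the interval $[\mathcal{A}.s, t]$ the rounds are covered not by one instance but by a sequence of consecutive base instances, each living on a dyadic sub-block; there are at most $\hat m = \log_2 T + 1$ distinct scale orders, and Algorithm \ref{scheduling_algo} schedules each candidate order-$k$ block only with probability $\rho(2^m)/\rho(2^k)$, so a Chernoff/Bernstein bound controls the realized count of instances at each order. Summing the per-instance regret bound $t_j' \rho(t_j')$ over the covering instances, using that $C(t) = t\rho(t)$ is increasing (Definition \ref{defn: rho_def}) so each sub-block contributes at most $C(t') $, and multiplying by the number of orders $\hat m$ and the high-probability covering factor $6\log(T/\delta)$, yields exactly the $\hat\rho(t')$ coefficient; the drift terms from the $O(\hat m)$ pieces add up to at most $\hat m \Delta_{[\mathcal{A}.s,t]}$ since drift is sub-additive over disjoint sub-intervals. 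The bound on the number of instances running within $[\mathcal{A}.s,t]$, namely $6\hat m \log(T/\delta) \frac{C(t')}{C(1)}$, comes from the same counting: at order $k$ there are $\lceil t'/2^k\rceil$ candidate blocks each kept w.p.\ $\rho(2^m)/\rho(2^k)$, so the expected kept count per order is $O(t'\rho(2^k)/2^k \cdot 1/\rho(2^m)) $-ish; more cleanly, $\sum_k (\text{kept at order }k) \le \sum_k \frac{t'}{2^k}\cdot\frac{\rho(2^k)\cdot 2^k}{2^k} \cdot(\ldots)$ telescopes against $C(\cdot)/C(1)$, and the $\hat m \log(T/\delta)$ factor absorbs the union bound over orders and the Chernoff slack.

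The main obstacle I anticipate is the probabilistic counting step: making the Chernoff argument uniform over all $O(T)$ possible intervals $[\mathcal{A}.s,t]$ and all $\hat m$ orders simultaneously, so that the single failure probability $\delta/T$ (rather than $\delta$) stated in the lemma is correct. This requires being careful that the randomness in Algorithm \ref{scheduling_algo} is fixed per block and that the selection rule in Algorithm \ref{MALG} (shortest remaining length) does not introduce dependence that breaks the concentration — one resolves this by conditioning on the schedule $\bm A$, proving the bounds deterministically given a ``good'' schedule, and showing the schedule is good with probability $1 - \delta/T$ via a union bound over the polynomially-many $(k, \text{block})$ pairs. A secondary technical point is verifying that the $\hat m$ factor on the drift term and the $6(\log_2 T+1)\log(T/\delta)$ factor on $\rho$ are genuinely sufficient and not off by additional logarithmic factors; this is a bookkeeping matter that I would settle by carefully tracking the constant through the geometric sum over dyadic scales. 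Since the excerpt states this largely follows \cite{wei2021non}, I would expect the argument to mirror the corresponding lemma there, with $F^{(t)}$ and $\tilde F^{(t)}$ substituted for the reward/optimistic-value quantities in the RL setting.
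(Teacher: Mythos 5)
Your proposal follows essentially the same route as the paper's proof: apply Requirement \ref{assmptn: near stationary} to the active instance for Eq.~\eqref{eqn: opt_bound_eq1_main}, decompose the sum in Eq.~\eqref{eqn: opt_bound_eq2_main} by dyadic order, bound the per-instance contribution by $C(\cdot)$ plus a drift term, control the number of order-$k$ instances via Bernstein's inequality applied to the scheduling probability $\rho(2^m)/\rho(2^k)$, and sum over the $\hat m$ orders to obtain both the $\hat\rho$ factor and the instance-count bound. The only blemish is the inverted ratio $\rho(2^k)/\rho(2^m)$ in your ``-ish'' expected-count formula (the correct expectation is $\tfrac{\rho(2^m)}{\rho(2^k)}\lceil t'/2^k\rceil$, which is then bounded by $C(t')/C(1)$ using monotonicity of $\rho$ and $C$), but your surrounding description makes clear you have the right mechanism.
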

\begin{proof}
Please refer to Appendix \ref{sec: multiscale_analysis}.
\end{proof}
\textbf{Interpretation of Lemma \ref{lemma:multi_scale_regr_main} results.} The first part of the Lemma ensures that the behavior of the baseline FL methods, i.e., \textit{FedAvg and FedOMD} remain unchanged even under the proposed multi-scale orchestration scheme. Formally, it proves a more general version of the requirements specified in Assumption \ref{assmptn: near stationary} for arbitrary time intervals, under the assumption that they are \textit{near-stationary}. Furthermore, it provides a storage complexity analysis in terms of worst case bound on the number of base FL instances scheduled as $\tilde{\mathcal{O}}(C(T))$ over a horizon of $T$ FL rounds. 

\subsection{Block Regret Analysis} \label{sec: block_regr_analysis_main}
{\color{black} In this subsection, we summarize the \textit{dynamic regret} for any such arbitrary block of order $m$ (see Definition \ref{def: block_endpoints_def_main}). More specifically, we consider this block scheduled to run for the rounds $[t_m, t_m + 2^m - 1]$. In proving the block \textit{dynamic regret} result of Lemma \ref{eqn: blk_dyn_regr_main} stated later, we basically divide $[t_m, t_m + 2^m - 1]$ into successive intervals $\mathcal{I}_1 = [s_1, e_1]$, $\mathcal{I}_2 = [s_2, e_2]$, $\cdots$, $\mathcal{I}_K = [s_K, e_K]$ ($s_1 = t_m, e_i + 1 = s_{i+1}, e_K = t_m + 2^m -1$). Furthermore, all these intervals are assumed to satisfy:}
\begin{align}
    \Delta_{\mathcal{I}_i} \leq \rho(|\mathcal{I}_i|), ~\forall ~i. \label{eqn: delta_rho_reln_1_main}
\end{align}
For our current set of baseline algorithms: \textit{FedAvg} and \textit{FedOMD}, according to Theorem \ref{thm: fedavg_static_regr_main}, \ref{thm: fedomd_convex_regr_main} we can have $C(t) = t\rho(t) = \min\{c_1\sqrt{t} + c_2, t \}$ since the losses are bounded in $[0,1]$.
\begin{lemma}[Block Dynamic regret] \label{eqn: blk_dyn_regr_main}
{\color{black} Let $\mathcal{B} = [t_m, E_m]$ be an order-$m$ block (see Definition \ref{def: block_endpoints_def_main}) for which {Master-FL} (Algorithm \ref{detection}) is executed.} Then, the dynamic regret incurred on $\mathcal{B}$ is bounded as:
\begin{align}
   {R}_{\mathcal{B}} \leq \Tilde{\mathcal{O}} \Big( \min \Big\{ {R}_{L}(\mathcal{B}), {R}_{\Delta}(\mathcal{B}) \Big \} + \Big( c_1 + \frac{c_2}{c_1} \Big) 2^{m/2} + c_{2}^{2} \Big),
\end{align}
where, ${R}_{L}(\mathcal{B}), {R}_{\Delta}(\mathcal{B})$ are defined as follows:
\begin{align}
    & {R}_{L}(\mathcal{B}) \triangleq c_1 \sqrt{{L} |\mathcal{B}|} + c_2 {L}, \\
    & {R}_{\Delta}(\mathcal{B}) \triangleq c_{1}^{\frac{2}{3}} \Delta^{\frac{1}{3}} {|\mathcal{B}|}^{\frac{2}{3}} + c_1 \sqrt{|\mathcal{B}|} + c_{1} \sqrt{\Delta |\mathcal{B}|} \nonumber \\
    & \hspace{1.3cm} + {c_2}c_1^{-\frac{2}{3}}\Delta^{\frac{2}{3}}T^{\frac{1}{3}} + c_2(1 + \Delta) ~,
\end{align}
and, where $L, \Delta$ are as described in Definition \ref{defn: model drift} , \ref{defn: num_drifts}.
\end{lemma}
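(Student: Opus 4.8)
\emph{Proof proposal.} The plan is to bound $R_{\mathcal{B}}$ through the optimism decomposition of \eqref{dyn_regr_resp10}: write $R_{\mathcal{B}} = \mathrm{(a)} + \mathrm{(b)}$ with $\mathrm{(a)} = \sum_{\tau \in \mathcal{B}}[F^{(\tau)}(\mathbf{x}^{(\tau)}) - \tilde{F}^{(\tau)}]$ and $\mathrm{(b)} = \sum_{\tau \in \mathcal{B}}[\tilde{F}^{(\tau)} - F^{(\tau)}(\mathbf{x}^{(\tau),*})]$, and then control $\mathrm{(a)}$ by \textbf{Test 2} and $\mathrm{(b)}$ by \textbf{Test 1} together with the multi-scale schedule. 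Term $\mathrm{(a)}$ is the ``observable'' part: since $\mathcal{B}=[t_m,E_m]$ is an order-$m$ block, a restart is triggered (if at all) only at its final round $E_m$, so for every $t \in [t_m, E_m-1]$ the \textbf{Test 2} condition in Algorithm \ref{detection} fails, i.e. $\sum_{\tau = t_m}^{t}[F^{(\tau)} - \tilde{F}^{(\tau)}] < 3(t - t_m + 1)\hat{\rho}(t - t_m + 1)$. Evaluating at $t = E_m - 1$ and adding the single last round (which costs at most $1$ since $f(\cdot;\xi)\in[0,1]$ by Assumption \ref{assumption: convexity_+_lipschitz}) gives $\mathrm{(a)} \le 3|\mathcal{B}|\hat{\rho}(|\mathcal{B}|) + 1 = \tilde{\mathcal{O}}(C(|\mathcal{B}|)) = \tilde{\mathcal{O}}(c_1 2^{m/2} + c_2)$, using that $C(t)=t\rho(t)$ is increasing (Definition \ref{defn: rho_def}) so $|\mathcal{B}|\rho(|\mathcal{B}|)\le 2^m\rho(2^m)=\min\{c_1 2^{m/2}+c_2,2^m\}$; this is absorbed by the additive slack in the statement.

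For term $\mathrm{(b)}$ I would partition $\mathcal{B}$ into consecutive \emph{maximal} near-stationary intervals $\mathcal{I}_1,\dots,\mathcal{I}_K$ obeying \eqref{eqn: delta_rho_reln_1_main}, and prove $\sum_{\tau\in\mathcal{I}_i}[\tilde{F}^{(\tau)} - F^{(\tau)}(\mathbf{x}^{(\tau),*})] = \tilde{\mathcal{O}}(C(|\mathcal{I}_i|))$ on each. When the instance $\mathcal{A}_\tau$ reporting $\tilde{F}^{(\tau)}$ has its whole horizon inside $\mathcal{I}_i$, the optimism inequality \eqref{eqn: opt_bound_eq1_main} of Lemma \ref{lemma:multi_scale_regr_main} directly bounds $\tilde{F}^{(\tau)}$ by $F^{(\tau)}(\mathbf{x}^{(\tau),*})$ up to $\mathcal{O}(\Delta_{\mathcal{I}_i})$ (near-stationarity of $\mathcal{I}_i$ lets one replace the running maximum of per-round optima by the current one); when $\mathcal{A}_\tau$ straddles a drift boundary, the fact that $\mathcal{B}$ is a block — hence \textbf{Test 1} never fired on any order-$k$ instance $\mathcal{A}$ ending inside $\mathcal{B}$, so $U_{\mathcal{A}.e} < \sum_{\tau=\mathcal{A}.s}^{\mathcal{A}.e}F^{(\tau)} + 9\hat{\rho}(2^k)$ — keeps the optimistic envelope $U_\tau$ within $\tilde{\mathcal{O}}(\hat{\rho}(2^k))$ of the realized average loss of $\mathcal{A}$, which by the base-algorithm near-stationary guarantee \eqref{eqn: opt_bound_eq2_main} is in turn $\tilde{\mathcal{O}}(\hat{\rho}(2^k))$-close to the relevant optima. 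I also need, via a union bound over the $\tilde{\mathcal{O}}(C(T))$ scheduled instances counted in Lemma \ref{lemma:multi_scale_regr_main}, that the dyadic randomized scheduler of Algorithm \ref{scheduling_algo} did place an instance of length $\Theta(|\mathcal{I}_i|)$ inside each $\mathcal{I}_i$ (which fails with probability $\le \delta/T$). Combining these yields $\sum_{\tau\in\mathcal{I}_i}[\tilde{F}^{(\tau)} - F^{(\tau)}(\mathbf{x}^{(\tau),*})] = \tilde{\mathcal{O}}(C(|\mathcal{I}_i|) + |\mathcal{I}_i|\Delta_{\mathcal{I}_i}) = \tilde{\mathcal{O}}(C(|\mathcal{I}_i|))$, the last step by $|\mathcal{I}_i|\Delta_{\mathcal{I}_i} \le |\mathcal{I}_i|\rho(|\mathcal{I}_i|) = C(|\mathcal{I}_i|)$.

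It then remains to sum $\sum_{i=1}^{K} C(|\mathcal{I}_i|)$ with $C(t) = \min\{c_1\sqrt{t} + c_2, t\}$, and the two branches of the bound arise from two ways of controlling the partition. Counting the interval boundaries that must coincide with a genuine drift round gives $K = \tilde{\mathcal{O}}(L+1)$, so by concavity of $\sqrt{\cdot}$ and Cauchy--Schwarz $\sum_i C(|\mathcal{I}_i|) = \tilde{\mathcal{O}}(c_1\sqrt{L|\mathcal{B}|} + c_2 L) = \tilde{\mathcal{O}}(R_L(\mathcal{B}))$; alternatively, using maximality (each non-final $\mathcal{I}_i$ obeys $\Delta_{\mathcal{I}_i} + \Delta_{e_i+1} > \rho(|\mathcal{I}_i|+1)$) together with $\sum_i \Delta_{\mathcal{I}_i} \le \Delta$, and optimizing over interval lengths — the worst case being $\Theta(\Delta)$ equal intervals of length $\Theta((|\mathcal{B}|/\Delta)^{2/3})$, which generates the $c_1^{2/3}\Delta^{1/3}|\mathcal{B}|^{2/3}$ term — gives $\sum_i C(|\mathcal{I}_i|) = \tilde{\mathcal{O}}(R_\Delta(\mathcal{B}))$. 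Adding $\mathrm{(a)}$, the trailing $+1$, and the residue from the regime $C(t)=t$ on short intervals (which is where the $(c_1 + c_2/c_1)2^{m/2} + c_2^2$ slack originates), and taking whichever branch is smaller, delivers $R_{\mathcal{B}} \le \tilde{\mathcal{O}}(\min\{R_L(\mathcal{B}), R_\Delta(\mathcal{B})\} + (c_1 + c_2/c_1)2^{m/2} + c_2^2)$. I expect the main obstacle to be term $\mathrm{(b)}$ — making the chain ``\textbf{Test 1} did not fire $\Rightarrow$ $U_\tau$ tracks the realized loss of the instance $\Rightarrow$ (via \eqref{eqn: opt_bound_eq1_main} and \eqref{eqn: opt_bound_eq2_main}) $\tilde{F}^{(\tau)}$ is close to $F^{(\tau)}(\mathbf{x}^{(\tau),*})$'' fully rigorous across instances that straddle drift boundaries while paying only $\tilde{\mathcal{O}}(C(\cdot))$ drift slack, and the $\Delta$-branch interval optimization; by contrast, term $\mathrm{(a)}$ and the high-probability scheduler-coverage claim are comparatively routine.
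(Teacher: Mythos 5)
Your overall strategy is the same as the paper's: decompose $R_{\mathcal{B}}$ into the observable gap $\sum_\tau[F^{(\tau)}(\mathbf{x}^{(\tau)})-\tilde F^{(\tau)}]$ (controlled by \textbf{Test 2} not having fired before $E_m$, giving $\tilde{\mathcal{O}}(\hat C(2^m))$) and the optimism gap $\sum_\tau[\tilde F^{(\tau)}-F^{(\tau)}(\mathbf{x}^{(\tau),*})]$ (controlled via \textbf{Test 1} and the multi-scale schedule), then partition $\mathcal{B}$ into near-stationary intervals and count them two ways ($\ell\le L_{\mathcal{B}}$, and $\ell\le 1+2c_1^{-2/3}\Delta_{\mathcal{B}}^{2/3}|\mathcal{B}|^{1/3}+\Delta_{\mathcal{B}}$ via H\"older) to produce the $R_L$ and $R_\Delta$ branches. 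That all matches Lemmas \ref{lemma: single_block_regret_lemma-I}--\ref{lemma: ell_delta_bound} and the final combination in Appendix \ref{Sec: block_dyn_regr_results}.

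There is, however, one step in your treatment of term (b) that would fail as written: the claim that, by a union bound, the randomized scheduler ``did place an instance of length $\Theta(|\mathcal{I}_i|)$ inside each $\mathcal{I}_i$ (which fails with probability $\le \delta/T$).'' Algorithm \ref{scheduling_algo} launches an order-$k$ instance at each aligned slot only with probability $\rho(2^m)/\rho(2^k)$, which is typically much smaller than $1$, so no such high-probability coverage guarantee holds. The paper instead pays for the \emph{geometric waiting time} until an order-$k$ instance is actually scheduled after the optimism gap first exceeds $12\hat\rho(2^k)$ (Definition \ref{def: tau_i_m} and Lemma \ref{lemma: masterlemma17}): the expected wait is $\rho(2^k)/\rho(2^m)$ aligned slots, i.e.\ $2^k\rho(2^k)/\rho(2^m)$ rounds, during which the gap can persist at level $\hat\rho(2^k)$; once such an instance does complete inside a near-stationary interval, \textbf{Test 1} must fire, which caps the excess $[\zeta_i(k)-4\cdot2^k]_+$. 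Summing $\frac{\rho(2^k)}{\rho(2^m)}\hat C(2^k)$ over scales $k\le m$ is precisely what produces the additive $(c_1+c_2/c_1)2^{m/2}+c_2^2$ slack — not, as you suggest, the regime $C(t)=t$ on short intervals. Because this waiting-time contribution is global over the block rather than per-interval, your claimed per-interval bound $\tilde{\mathcal{O}}(C(|\mathcal{I}_i|))$ for term (b) does not hold interval by interval; the extra term must be carried separately as in Eq.~\eqref{eqn: block_regr_2}. (A minor additional slip: in the $\Delta$-branch the worst-case partition has $\Theta(c_1^{-2/3}\Delta^{2/3}|\mathcal{B}|^{1/3})$ intervals, not $\Theta(\Delta)$ of them.)
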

\begin{proof}
Please refer to Appendix \ref{Sec: block_dyn_regr_results}.
\end{proof}
\textbf{Interpretation of Lemma \ref{eqn: blk_dyn_regr_main} results.} The results presented in this lemma offer a key insight regarding how Master-FL behavior on each small block of FL rounds. Roughly speaking, it leverages the fact that every arbitrary block of order $m$ contains smaller intervals which are \textit{near-stationary}. Then, it achieves the stated result by combining the result in Lemma  \ref{lemma:multi_scale_regr_main} and an upper bound on how many such smaller \textit{near-stationary} segments may exist in a given block. Furthermore, it expresses the regret as the smaller of the quantities ${R}_{L}(\mathcal{B}) = \tilde{\mathcal{O}}(\sqrt{{L} |\mathcal{B}|} + {L})$ and ${R}_{\Delta}(\mathcal{B}) = \tilde{\mathcal{O}}(\Delta^{\frac{1}{3}} {|\mathcal{B}|}^{\frac{2}{3}} + \sqrt{|\mathcal{B}|})$ for any arbitrary $\mathcal{B}$. In Section \ref{sec: dynamic_regr_maintext_disc}, we show how this result translates to a concrete mathematical bound for the overall \textit{dynamic regret} of {Master-FL}.
\subsection{Dynamic Regret Analysis of {Master-FL}} \label{sec: dynamic_regr_maintext_disc}
{\color{black} In this section, we first present the \textit{dynamic regret} result pertaining to a single epoch of FL rounds for which Master-FL (Algorithm \ref{detection}) conducts training.  We reiterate that an epoch is essentially an interval between two consecutive restart triggers.} 

\begin{lemma}[Single Epoch Regret Analysis] \label{lemma: single_epoch_regr_main} Let, $\mathcal{E} = [t_0, E]$ be an epoch containing blocks upto order $m$ for which {Master-FL} (Algorithm \ref{detection}) runs. Then, the dynamic regret associated with $\mathcal{E}$ is:  
\begin{align}
    {R}_{\mathcal{E}} \leq \Tilde{\mathcal{O}} \Big(\min \Big\{ {R}_{L}(\mathcal{E}), {R}_{\Delta}(\mathcal{E}) \Big \} + (c_1+ \frac{c_2}{c_1}) \sqrt{|\mathcal{E}|} + c_2^2 \Big).
\end{align}
where ${R}_{L}(\cdot)$, ${R}_{\Delta}(\cdot)$ are as defined in Lemma \ref{eqn: blk_dyn_regr_main}.
\end{lemma}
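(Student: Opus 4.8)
The plan is to decompose the epoch $\mathcal{E} = [t_0, E]$ into its constituent blocks and sum the per-block regret bounds already established in Lemma~\ref{eqn: blk_dyn_regr_main}. By Definition~\ref{defn: epoch_def_main} and the structure of Master-FL (lines 3--6 of Algorithm~\ref{detection}), within an epoch the algorithm restarts training on increasingly sized chunks of length $2^0, 2^1, \ldots, 2^m$, so $\mathcal{E}$ is partitioned into blocks $\mathcal{B}_0, \mathcal{B}_1, \ldots, \mathcal{B}_m$ with $|\mathcal{B}_j| \le 2^j$ and $\sum_{j=0}^m |\mathcal{B}_j| = |\mathcal{E}|$; moreover only the last block $\mathcal{B}_m$ can terminate early (all earlier blocks in the epoch ran their full length $2^j$ without triggering a restart — otherwise the epoch would have ended). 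Hence $|\mathcal{E}| \le 2^{m+1}$, and conversely $2^m \le |\mathcal{E}|$. First I would write $R_{\mathcal{E}} = \sum_{j=0}^m R_{\mathcal{B}_j}$ and apply Lemma~\ref{eqn: blk_dyn_regr_main} to each term.

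Next I would handle the three pieces of the per-block bound separately. The additive tail terms $\Tilde{\mathcal{O}}\big((c_1 + \tfrac{c_2}{c_1}) 2^{j/2} + c_2^2\big)$ summed over $j = 0, \ldots, m$ give a geometric series in $2^{j/2}$ dominated by its last term, yielding $\Tilde{\mathcal{O}}\big((c_1 + \tfrac{c_2}{c_1}) 2^{m/2} + c_2^2 \log|\mathcal{E}|\big) = \Tilde{\mathcal{O}}\big((c_1 + \tfrac{c_2}{c_1})\sqrt{|\mathcal{E}|} + c_2^2\big)$, absorbing the logarithmic factor into the $\Tilde{\mathcal{O}}$ notation. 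For the $\min\{R_L(\mathcal{B}_j), R_\Delta(\mathcal{B}_j)\}$ terms, the key observation is that $\min\{a,b\} + \min\{c,d\} \le \min\{a+c, b+d\}$, so summing the minima is bounded above by $\min\{\sum_j R_L(\mathcal{B}_j), \sum_j R_\Delta(\mathcal{B}_j)\}$. I would then show $\sum_j R_L(\mathcal{B}_j) \le \Tilde{\mathcal{O}}(R_L(\mathcal{E}))$ and $\sum_j R_\Delta(\mathcal{B}_j) \le \Tilde{\mathcal{O}}(R_\Delta(\mathcal{E}))$.

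For the $R_L$ bound: $R_L(\mathcal{B}) = c_1\sqrt{L\,|\mathcal{B}|} + c_2 L$, and here one must be careful that $L$ denotes the number of drift rounds — writing $L_j$ for the drifts falling in block $\mathcal{B}_j$, we have $\sum_j L_j \le L$. By Cauchy--Schwarz, $\sum_j c_1\sqrt{L_j |\mathcal{B}_j|} \le c_1 \sqrt{(\sum_j L_j)(\sum_j |\mathcal{B}_j|)} \le c_1\sqrt{L\,|\mathcal{E}|}$, and $\sum_j c_2 L_j \le c_2 L$, giving exactly $R_L(\mathcal{E})$ up to constants. (Strictly, one should also note that Lemma~\ref{eqn: blk_dyn_regr_main}'s block bound uses the global $L$; I would use the refined block-local version of the same proof, or simply observe the $m+1 = \Tilde{\mathcal{O}}(1)$ blocks contribute only a logarithmic overhead.) For the $R_\Delta$ bound: with $\Delta_j$ the cumulative drift in $\mathcal{B}_j$ and $\sum_j \Delta_j \le \Delta$, the dominant term $\sum_j c_1^{2/3}\Delta_j^{1/3}|\mathcal{B}_j|^{2/3}$ is handled by Hölder's inequality with exponents $(3, 3/2)$: $\sum_j \Delta_j^{1/3}|\mathcal{B}_j|^{2/3} \le (\sum_j \Delta_j)^{1/3}(\sum_j |\mathcal{B}_j|)^{2/3} \le \Delta^{1/3}|\mathcal{E}|^{2/3}$; the remaining terms ($\sqrt{|\mathcal{B}_j|}$, $\sqrt{\Delta_j|\mathcal{B}_j|}$, $\Delta_j^{2/3}T^{1/3}$, $\Delta_j$) are summed by Cauchy--Schwarz or trivially, each producing the corresponding term of $R_\Delta(\mathcal{E})$ up to constants and logs. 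Combining these with the tail-term bound from the previous paragraph yields the stated result.

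The main obstacle I anticipate is the bookkeeping around whether Lemma~\ref{eqn: blk_dyn_regr_main} is stated with the \emph{global} quantities $L, \Delta$ or with block-local ones: as written, naively summing $R_L(\mathcal{B}_j)$ with the global $L$ in each term would give an extra factor of $m+1 \approx \log|\mathcal{E}|$, which is acceptable under $\Tilde{\mathcal{O}}$ but needs to be stated carefully; alternatively one invokes the block-local refinement of the block lemma's proof. A secondary subtlety is justifying $|\mathcal{E}| \le 2^{m+1}$ (so that $2^{m/2} = \Tilde{\mathcal{O}}(\sqrt{|\mathcal{E}|})$) — this relies on the fact that, within a single epoch, no restart occurs until the order-$m$ block, hence all lower-order blocks $0, 1, \ldots, m-1$ completed their full lengths, giving $|\mathcal{E}| \ge 2^m$ as well. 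Both of these are routine once the block/epoch structure from Definitions~\ref{def: block_endpoints_def_main} and~\ref{defn: epoch_def_main} is unpacked.
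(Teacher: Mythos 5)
Your proposal is correct and follows essentially the same route as the paper's proof: decompose the epoch into its constituent blocks, sum the block regret bounds from Lemma~\ref{eqn: blk_dyn_regr_main}, bound the summed $R_L$ and $R_\Delta$ terms via Cauchy--Schwarz/H\"older using the block-local drift quantities (the paper writes $\sum_k L_{\mathcal{B}_k} \leq L_{\mathcal{E}} + m$ and absorbs the $m = \mathcal{O}(\log T)$ overhead into $\Tilde{\mathcal{O}}$, exactly the bookkeeping you flag), and control the tail terms by the geometric series $\sum_k 2^{k/2} = \mathcal{O}(2^{m/2}) = \mathcal{O}(\sqrt{|\mathcal{E}|})$. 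The subtleties you anticipate are the ones the paper handles, and your resolutions match.
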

\begin{proof}
Please refer to Appendix \ref{lemma: single_epoch_regr_discussion}.
\end{proof}
\begin{figure*}[ht]
\captionsetup[subfigure]{justification=Centering}
\begin{subfigure}[t]{0.24\textwidth}
\centering
    \includegraphics[scale = 0.25]{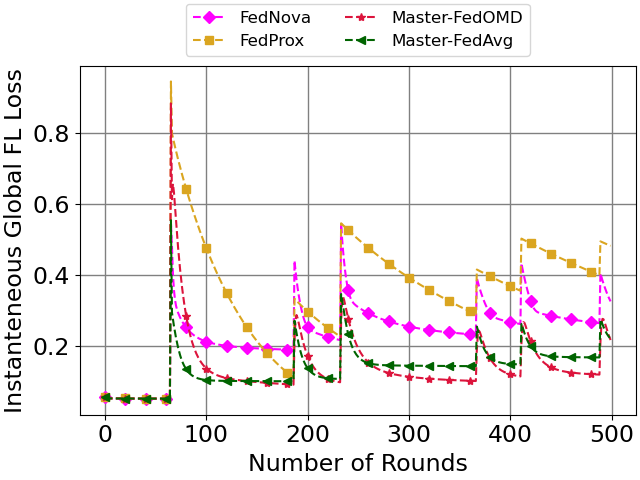}
    \caption{\color{black} Covtype (CI drift).}
\end{subfigure}\hspace{\fill}
\begin{subfigure}[t]{0.24\textwidth}
\centering
    \includegraphics[scale = 0.25]{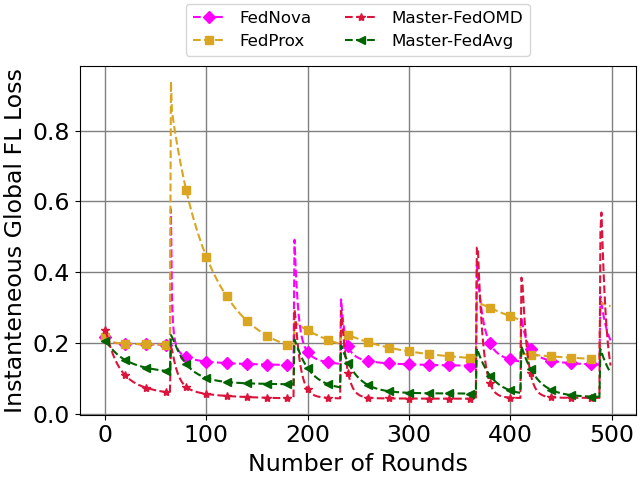}
    \caption{\color{black} Covtype (CS drift).}
\end{subfigure}
\begin{subfigure}[t]{0.24\textwidth}
    \centering
    \includegraphics[scale = 0.25]{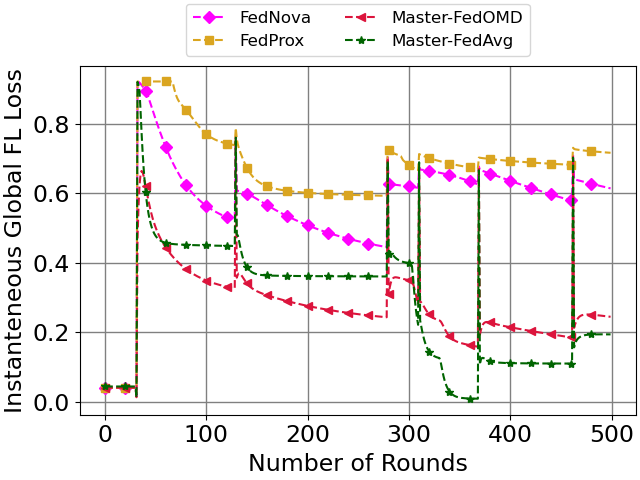}
    \caption{\color{black} MNIST (CI drift).}
\end{subfigure}\hspace{\fill}
\begin{subfigure}[t]{0.24\textwidth}
 \centering
    \includegraphics[scale = 0.25]{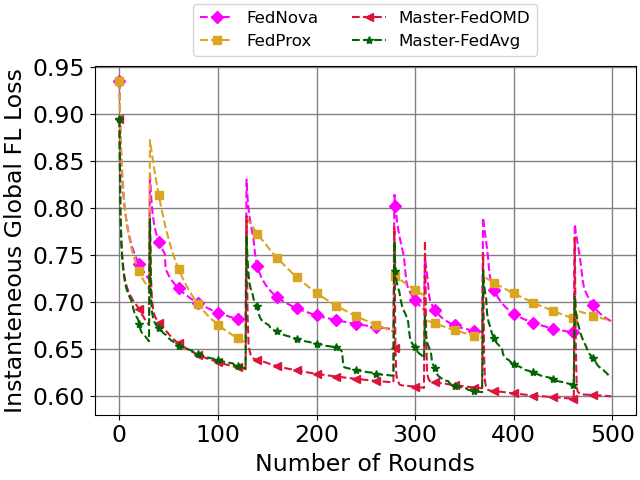}
    \caption{\color{black} MNIST (CS drift).}
\end{subfigure} %

\caption{\color{black} Instantaneous global FL loss vs aggregation rounds for various methods. CS ${\color{red} \rightarrow}$ Class Swap Drift, CI ${\color{red} \rightarrow}$ Class Introduction Drift.}
\label{fig: loss_plots}
\end{figure*}
\begin{lemma}[Statistical consistency of stationary test trigger events] \label{lemma: test_trigger_correctness_main} Let $t$ be a FL round within an epoch starting from $t_0$. If $\Delta_{[t_0, t]} \leq \rho(t - t_0 + 1)$, then with high probability, no restart is triggered during round $t$.
\end{lemma}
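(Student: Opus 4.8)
The plan is to observe that a restart at round $t$ happens only if \texttt{Test-1-Flag} or \texttt{Test-2-Flag} gets cleared to $0$ at that round (Algorithm~\ref{detection}, the \textbf{Test 1} and \textbf{Test 2} blocks), and then to show that on a high-probability event neither test's condition can hold under the hypothesis $\Delta_{[t_0,t]}\le\rho(t-t_0+1)$. The first move is to push near-stationarity down from the epoch to the \emph{current block}: let $t_{new}$ be the start of the block containing $t$ (the variable $t_{new}$ of Algorithm~\ref{detection}). Since each block of the epoch begins at $t_0$ or at the end of the previous block of the same epoch, $t_0\le t_{new}\le t$, so for every sub-interval $[s,\tau]\subseteq[t_{new},t]$ --- using non-negativity and additivity of $\Delta$ and that $\rho$ is non-increasing with $\tau-s\le t-t_{new}\le t-t_0$ ---
\[
\Delta_{[s,\tau]}\ \le\ \Delta_{[t_0,t]}\ \le\ \rho(t-t_0+1)\ \le\ \rho(\tau-s+1).
\]
In particular every base instance $\mathcal{A}$ scheduled inside the current block satisfies $\Delta_{[\mathcal{A}.s,\tau]}\le\rho(\tau-\mathcal{A}.s+1)$ for all $\tau\le t$, so the hypotheses of Lemma~\ref{lemma:multi_scale_regr_main} hold and its conclusions \eqref{eqn: opt_bound_eq1_main}--\eqref{eqn: opt_bound_eq2_main} hold with probability at least $1-\delta/T$ --- this is the ``high probability'' of the statement, and everything below is deterministic on this event.

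\textbf{Test 2.} I would instantiate \eqref{eqn: opt_bound_eq2_main} on $[t_{new},t]$ with $t'=t-t_{new}+1$, giving $\frac{1}{t'}\sum_{\tau=t_{new}}^{t}\big[F^{(\tau)}(\mathbf{x}^{(\tau)})-\Tilde{F}^{(\tau)}\big]\le\hat\rho(t')+\hat{m}\,\Delta_{[t_{new},t]}$. Since $\Delta_{[t_{new},t]}\le\rho(t')$ and $\hat{m}\,\rho(t')\le\hat\rho(t')$ (as $\hat\rho=6\hat{m}\log(T/\delta)\rho\ge\hat{m}\rho$), the left-hand side is at most $2\hat\rho(t')<3\hat\rho(t')$, so the \textbf{Test 2} condition fails.

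\textbf{Test 1.} This is checked only when $t=\mathcal{A}_t.e$ for the order-$k$ instance $\mathcal{A}_t$ currently run, and it compares the optimistic quantity $U_t=\max_{\tau\in[t_{new},t]}\Tilde{F}^{(\tau)}$ against the realized loss $\sum_{\tau=\mathcal{A}_t.s}^{\mathcal{A}_t.e}F^{(\tau)}$ of the just-completed instance, up to an additive slack of order $\hat\rho(2^k)$. The approach is two-sided: (i) cap $U_t$ via \eqref{eqn: opt_bound_eq1_main}, $\Tilde{F}^{(\tau)}\le\max_{s\in[t_{new},\tau]}F^{(s)}(\mathbf{x}^{(s),*})+\Delta_{[t_{new},\tau]}\le\max_{\tau\in[t_{new},t]}F^{(\tau)}(\mathbf{x}^{(\tau),*})+\rho(t')$; and (ii) floor the instance loss via \eqref{eqn: opt_bound_eq2_main} applied to $\mathcal{A}_t$ (equivalently Theorems~\ref{thm: fedavg_static_regr_main}--\ref{thm: fedomd_convex_regr_main}), namely $\sum_{\tau=\mathcal{A}_t.s}^{\mathcal{A}_t.e}F^{(\tau)}\ge\sum_{\tau=\mathcal{A}_t.s}^{\mathcal{A}_t.e}F^{(\tau)}(\mathbf{x}^{(\tau),*})-2^k\big(\hat\rho(2^k)+\hat{m}\,\Delta_{[\mathcal{A}_t.s,\mathcal{A}_t.e]}\big)$, while replacing each $\max_\tau F^{(\tau)}(\mathbf{x}^{(\tau),*})$ by the individual $F^{(\tau)}(\mathbf{x}^{(\tau),*})$ costs at most $\Delta_{[\mathcal{A}_t.s,\mathcal{A}_t.e]}\le\rho(2^k)$ per round. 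Combining (i)--(ii) with near-stationarity bounds the gap probed by \textbf{Test 1} by a fixed constant times $\hat\rho(2^k)$, which is strictly below the $9\hat\rho(2^k)$ slack, so \texttt{Test-1-Flag} stays $1$. (The exact normalization in the \textbf{Test 1} condition --- sum versus average of $F^{(\tau)}$ --- is immaterial, since $U_t$ and the loss term scale the same way.) As neither flag is cleared, no restart is triggered at round $t$ on the probability-$(1-\delta/T)$ event, proving the claim.

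\textbf{Where the difficulty is.} \textbf{Test 2} is a short calculation: it thresholds exactly the average of $F^{(\tau)}-\Tilde{F}^{(\tau)}$, which \eqref{eqn: opt_bound_eq2_main} controls verbatim. The real work is \textbf{Test 1}, which pits a \emph{running maximum} of $\Tilde{F}$ against the realized loss of one particular --- possibly very short --- sub-instance; handling it requires using both directions of Lemma~\ref{lemma:multi_scale_regr_main} simultaneously (the optimism bound \eqref{eqn: opt_bound_eq1_main} to cap $U_t$, the near-stationary regret bound \eqref{eqn: opt_bound_eq2_main} to floor the instance loss) and carefully accounting for the cheap ``drift-shift'' terms relating $F^{(\tau)}(\mathbf{x}^{(\tau),*})$ across rounds, so that every error stays $\mathcal{O}(\hat\rho(2^k))$ and the constant $9$ in the slack is comfortably large. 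One should also note that the only randomness invoked here is the single event of Lemma~\ref{lemma:multi_scale_regr_main} for the current block, so no extra union bound over rounds is needed.
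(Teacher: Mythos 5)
Your proposal is correct and follows essentially the same route as the paper: condition on the high-probability event of Lemma \ref{lemma:multi_scale_regr_main}, dispose of \textbf{Test 2} directly from \eqref{eqn: opt_bound_eq2_main} together with $\Delta_{[t_{new},t]}\le\rho(t')$, and dispose of \textbf{Test 1} by capping $U_t$ with the optimism bound \eqref{eqn: opt_bound_eq1_main} and relating the running maximum of per-round optima to the completed instance's realized loss up to $\mathcal{O}(\hat\rho(2^k))$ drift terms, comfortably under the $9\hat\rho(2^k)$ slack (the paper detours through the static comparator $\mathbf{x}^{*}$ of the sub-instance window and an explicit Azuma--Hoeffding step, but the accounting is the same). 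The one inaccuracy is your item (ii): \eqref{eqn: opt_bound_eq2_main} upper-bounds $F^{(\tau)}(\mathbf{x}^{(\tau)})-\Tilde{F}^{(\tau)}$ and therefore cannot ``floor'' the instance loss, but the floor you actually need, $F^{(\tau)}(\mathbf{x}^{(\tau)})\ge F^{(\tau)}(\mathbf{x}^{(\tau),*})$, holds trivially by optimality of $\mathbf{x}^{(\tau),*}$ (or in expectation via Azuma--Hoeffding in the paper's stochastic reading), and the max-to-pointwise replacement cost is governed by $\Delta_{[t_{new},t]}$ rather than $\Delta_{[\mathcal{A}_t.s,\mathcal{A}_t.e]}$, which under the hypothesis is still at most $\rho(2^k)$, so the conclusion stands.
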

\begin{proof}
Please refer to Appendix \ref{sec: test_trigger_correctness_desc}.
\end{proof}

\begin{lemma}[Bound for number of epochs $M$ over the entire horizon] \label{lemma: bound_epoch_num_main}
 For total number of epochs denoted by $M$ over the horizon of length $T$ on which {Master-FL} is executed, the following results hold with high probability:
\begin{align}
& M \leq L, \label{eqn: master_N_L_bound_main}\\
& M \leq 1 + 2c_1^{-\frac{2}{3}} \Delta^{\frac{2}{3}}T^{\frac{1}{3}} + \Delta. \label{eqn: master_N_Delta_bound_main}
\end{align}
\end{lemma}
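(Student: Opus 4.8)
The plan is to charge every restart trigger of \textit{Master-FL} to a genuine change of the environment, the whole weight of the argument resting on the one-sided guarantee of Lemma~\ref{lemma: test_trigger_correctness_main}. Recall that the epochs $\mathcal{E}_1,\dots,\mathcal{E}_M$ (Definition~\ref{defn: epoch_def_main}) partition $[1,T]$ into consecutive intervals and that a restart fires at the last round of each of the first $M-1$ epochs. First I would fix such a restart-closed epoch $\mathcal{E}=[t_0,E]$ with $E<T$, and apply the contrapositive of Lemma~\ref{lemma: test_trigger_correctness_main} on the high-probability event of that lemma (shared across all rounds via the union bound already built into its statement) to obtain
\[
\Delta_{\mathcal{E}} \;=\; \Delta_{[t_0,E]} \;>\; \rho(|\mathcal{E}|) \;\ge\; \tfrac{1}{\sqrt{|\mathcal{E}|}} \;>\; 0 ,
\]
using $\rho(\cdot)\ge 1/\sqrt{\cdot}$ from Definition~\ref{defn: rho_def}. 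This single inequality is the only nontrivial input; both displayed bounds then reduce to counting disjoint epochs.

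For \eqref{eqn: master_N_L_bound_main}: since $\Delta_{\mathcal{E}}>0$, each of the $M-1$ restart-closed epochs contains at least one round $\tau$ with $\Delta_\tau\neq 0$, and since these epochs are pairwise disjoint such rounds are distinct. Hence $L=\sum_{t=1}^{T}\mathbbm{1}[\Delta_t\neq 0]\ge M-1$, which is the content of \eqref{eqn: master_N_L_bound_main} (the terminal epoch requiring no change to be created, and the remaining $+1$ being immaterial to the final $\tilde{\mathcal O}$ regret).

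For \eqref{eqn: master_N_Delta_bound_main}: I would use the explicit profile $C(t)=t\rho(t)=\min\{c_1\sqrt{t}+c_2,\,t\}$ recorded in Section~\ref{sec: block_regr_analysis_main}, i.e.\ $\rho(t)=\min\{c_1/\sqrt{t}+c_2/t,\ 1\}$, and split the $M-1$ restart-closed epochs according to which branch of the minimum is active. If $\rho(|\mathcal{E}|)=1$ then $\Delta_{\mathcal{E}}>1$, and since $\sum_{\mathcal{E}}\Delta_{\mathcal{E}}\le \Delta_{[1,T]}=\Delta$ there are at most $\Delta$ such epochs. Otherwise $\rho(|\mathcal{E}|)=c_1/\sqrt{|\mathcal{E}|}+c_2/|\mathcal{E}|\ge c_1/\sqrt{|\mathcal{E}|}$, so $\Delta_{\mathcal{E}}\sqrt{|\mathcal{E}|}>c_1$, which rearranges to $\Delta_{\mathcal{E}}^{2/3}|\mathcal{E}|^{1/3}>c_1^{2/3}$; summing over this second group and applying H\"older's inequality with exponents $(3/2,3)$,
\[
c_1^{2/3}\cdot(\text{number of such epochs}) \;<\; \sum_{\mathcal{E}}\Delta_{\mathcal{E}}^{2/3}|\mathcal{E}|^{1/3} \;\le\; \Big(\sum_{\mathcal{E}}\Delta_{\mathcal{E}}\Big)^{2/3}\Big(\sum_{\mathcal{E}}|\mathcal{E}|\Big)^{1/3} \;\le\; \Delta^{2/3}T^{1/3},
\]
so this group contributes fewer than $c_1^{-2/3}\Delta^{2/3}T^{1/3}$ epochs. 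Adding the two groups plus the single terminal epoch yields $M\le 1+c_1^{-2/3}\Delta^{2/3}T^{1/3}+\Delta$, and absorbing the lower-order $c_2/|\mathcal{E}|$ contribution and the boundary case $c_1/\sqrt{t}+c_2/t\approx 1$ into the leading term accounts for the stated factor $2c_1^{-2/3}$.

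I expect the only delicate step to be the first one --- arguing that \emph{no} restart can fire inside a near-stationary epoch --- which is precisely Lemma~\ref{lemma: test_trigger_correctness_main} and must be invoked so that its high-probability event is uniform over all rounds; everything after that is the standard change-counting / H\"older bookkeeping and is routine.
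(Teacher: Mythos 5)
Your proposal is correct and follows essentially the same route as the paper: both arguments rest on the contrapositive of Lemma~\ref{lemma: test_trigger_correctness_main} to conclude that every restart-closed epoch satisfies $\Delta_{\mathcal{E}}>\rho(|\mathcal{E}|)$, then count epochs by splitting according to which branch of $\rho(t)=\min\{c_1/\sqrt{t}+c_2/t,\,1\}$ is active and applying H\"older's inequality with exponents $(3/2,3)$ to the first group. Your version is in fact slightly more careful than the paper's on the $L$ bound (you correctly obtain $M\le L+1$ rather than the stated $M\le L$, which fails in the edge case $L=0$; the discrepancy is immaterial to the final $\Tilde{\mathcal{O}}$ regret).
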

\begin{proof}
Please refer to Appendix \ref{sec: bound_epoch_num_main_desc}.
\end{proof}
\begin{theorem}[Dynamic Regret of {Master-FL} Algorithm]  \label{thm: final_dynamic_regr_master_main} Assume that individual base FL instances satisfy the conditions specified in Assumption \ref{assmptn: near stationary}, then without the knowledge of non-stationary measures $L$ or $\Delta$, {Master-FL} (Algorithm \ref{detection}) incurs dynamic regret over $T$ FL aggregation rounds which is bounded by:
\begin{align}
    \mathcal{R}_{[1,T]} = \Tilde{\mathcal{O}} \Big( \min \Big\{ \Big(c_1 + \frac{c_2}{c_1} \Big)\sqrt{LT} , c_2 c_1^{- \frac{4}{3}}\Delta^{\frac{1}{3}}T^{\frac{2}{3}} + \sqrt{T} \Big\} \Big). 
\end{align}
with high probability.
\end{theorem}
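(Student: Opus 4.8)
The plan is to assemble the final dynamic regret bound from the per-epoch regret of Lemma \ref{lemma: single_epoch_regr_main} together with the two bounds on the number of epochs $M$ from Lemma \ref{lemma: bound_epoch_num_main}. First I would write the total regret as a sum over epochs, $\mathcal{R}_{[1,T]} = \sum_{j=1}^{M} R_{\mathcal{E}_j}$, where $\mathcal{E}_1, \ldots, \mathcal{E}_M$ partition $[1,T]$ and $\sum_j |\mathcal{E}_j| = T$. By Lemma \ref{lemma: single_epoch_regr_main}, each $R_{\mathcal{E}_j} \le \tilde{\mathcal{O}}(\min\{R_L(\mathcal{E}_j), R_\Delta(\mathcal{E}_j)\} + (c_1 + c_2/c_1)\sqrt{|\mathcal{E}_j|} + c_2^2)$. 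Summing the ``overhead'' terms $(c_1 + c_2/c_1)\sqrt{|\mathcal{E}_j|}$ over $j$ and applying Cauchy–Schwarz gives $\sum_j \sqrt{|\mathcal{E}_j|} \le \sqrt{M}\sqrt{\sum_j |\mathcal{E}_j|} = \sqrt{MT}$, and the $c_2^2$ terms contribute $M c_2^2$; both of these will be absorbed into the two target expressions once $M$ is bounded.

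Next I would handle the two branches of the $\min$ separately, since the minimum of a sum is at most the sum of either choice. For the $L$-branch: bound $R_{\mathcal{E}_j} \lesssim R_L(\mathcal{E}_j) = c_1\sqrt{L_j |\mathcal{E}_j|} + c_2 L_j$ where $L_j$ is the number of drift rounds inside $\mathcal{E}_j$, so $\sum_j L_j \le L$. Then $\sum_j c_1 \sqrt{L_j|\mathcal{E}_j|} \le c_1 \sqrt{\sum_j L_j}\sqrt{\sum_j |\mathcal{E}_j|} \le c_1 \sqrt{LT}$ by Cauchy–Schwarz, and $\sum_j c_2 L_j \le c_2 L \le c_2 \sqrt{LT}$ (using $L \le T$); combined with $M \le L$ from \eqref{eqn: master_N_L_bound_main} the overhead terms also become $\tilde{\mathcal{O}}((c_1+c_2/c_1)\sqrt{LT} + c_2^2 L)$, giving the first term $\tilde{\mathcal{O}}((c_1 + c_2/c_1)\sqrt{LT})$. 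For the $\Delta$-branch: bound $R_{\mathcal{E}_j} \lesssim R_\Delta(\mathcal{E}_j)$; the dominant piece is $c_1^{2/3}\Delta_j^{1/3}|\mathcal{E}_j|^{2/3}$, and by Hölder's inequality $\sum_j \Delta_j^{1/3}|\mathcal{E}_j|^{2/3} \le (\sum_j \Delta_j)^{1/3}(\sum_j |\mathcal{E}_j|)^{2/3} = \Delta^{1/3}T^{2/3}$. The remaining pieces of $R_\Delta(\mathcal{E}_j)$ — $c_1\sqrt{|\mathcal{E}_j|}$, $c_1\sqrt{\Delta_j|\mathcal{E}_j|}$, $c_2 c_1^{-2/3}\Delta_j^{2/3}T^{1/3}$, $c_2(1+\Delta_j)$ — are summed using Cauchy–Schwarz (for the $\sqrt{|\mathcal{E}_j|}$ and $\sqrt{\Delta_j|\mathcal{E}_j|}$ terms, yielding $\sqrt{MT}$ and $\sqrt{\Delta T}$ respectively) and subadditivity of $x \mapsto x^{2/3}$ (for the $\Delta_j^{2/3}$ term), then the $M$-dependent leftovers are controlled via $M \le 1 + 2c_1^{-2/3}\Delta^{2/3}T^{1/3} + \Delta$ from \eqref{eqn: master_N_Delta_bound_main}. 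Collecting the dominant order in $T$, this branch yields $\tilde{\mathcal{O}}(c_2 c_1^{-4/3}\Delta^{1/3}T^{2/3} + \sqrt{T})$ after bookkeeping of the constant factors.

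Finally, since the two branch estimates hold simultaneously (each bounds the same quantity $\mathcal{R}_{[1,T]}$ via a different upper bound on each $R_{\mathcal{E}_j}$), I would take the minimum of the two to conclude $\mathcal{R}_{[1,T]} = \tilde{\mathcal{O}}(\min\{(c_1 + c_2/c_1)\sqrt{LT},\ c_2 c_1^{-4/3}\Delta^{1/3}T^{2/3} + \sqrt{T}\})$ with high probability — the ``high probability'' qualifier coming from a union bound over the $\tilde{\mathcal{O}}(1)$ epochs/blocks where Lemmas \ref{lemma:multi_scale_regr_main}, \ref{lemma: test_trigger_correctness_main}, \ref{lemma: bound_epoch_num_main} are invoked, each with failure probability $O(\delta/T)$. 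The main obstacle I anticipate is not any single inequality but the careful constant-tracking in the $\Delta$-branch: reconciling the per-epoch leftover terms such as $c_2 c_1^{-2/3}\Delta_j^{2/3}T^{1/3}$ and the overhead $c_2^2$ against the stated form $c_2 c_1^{-4/3}\Delta^{1/3}T^{2/3}$ requires using $M$'s $\Delta$-dependence to upgrade an exponent, and one must check that these subdominant terms are genuinely absorbed (in the $\tilde{\mathcal{O}}$ sense, possibly using $\Delta \le T$ and $c_1 = \Theta(1)$) rather than dominating; verifying the interplay between the two epoch-count bounds and the Hölder/Cauchy–Schwarz steps is where the real work lies.
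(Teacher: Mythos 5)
Your proposal follows essentially the same route as the paper: decompose the total regret as a sum of per-epoch regrets via Lemma \ref{lemma: single_epoch_regr_main}, pull the minimum outside the sum, bound the $L$-branch with Cauchy--Schwarz together with $M \leq L$, bound the $\Delta$-branch with H\"older's inequality together with $M \leq 1 + 2c_1^{-2/3}\Delta^{2/3}T^{1/3} + \Delta$, and absorb the overhead terms $\sqrt{MT}$ and $c_2^2 M$ using the same epoch-count bounds. The argument is correct and matches the paper's proof in structure and in all the key inequalities.
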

\begin{proof}
Please refer to Appendix \ref{sec: final_dynamic_regr_master}.
\end{proof}

\textbf{Interpretation of Lemma  \ref{lemma: single_epoch_regr_main} - \ref{lemma: bound_epoch_num_main}, Theorem \ref{thm: final_dynamic_regr_master_main} results.} At a high level, Lemma \ref{lemma: single_epoch_regr_main} verifies that the block \textit{dynamic regret} result directly translates to an epoch with multiple blocks as long as the change-detection/restart events are correctly triggered via \textbf{Test 1}, \textbf{Test 2} in Master-FL. Therefore, in the next step, we formally verify that restarts are not triggered as long as the blocks of FL rounds are \textit{near-stationary} in Lemma \ref{lemma: test_trigger_correctness_main}. With the exact mathematical bound in place for regret incurred in each epoch and the proof of correctness of restart triggers, we move to characterizing the upper limit on number of epochs in terms of horizon length and non-stationary measures, i.e., $T, L$, and $\Delta$ respectively in Lemma \ref{lemma: bound_epoch_num_main}. Finally, we combine the bounds of each individual epoch \textit{dynamic regret} and number of epochs, to obtain the final regret bound incurred by Master-FL in Theorem \ref{thm: final_dynamic_regr_master_main}.

{\color{black} \section{Experimental Evaluations} \label{Sec: experiments}

In this section, we perform proof-of-concept experiments comparing Master-FL-\textit{FedAvg} and Master-FL-\textit{FedOMD} with the closest competing algorithms \textit{FedNova} \cite{wang2020tackling} and vanilla \textit{FedProx} \cite{li2018federated} on 2 LIBSVM \footnote{LIBSVM Collection of dataset  can be accessed at \url{https://www.csie.ntu.edu.tw/~cjlin/libsvmtools/datasets/}} classification datasets: { covtype, mnist}. \textit{FedNova} and \textit{FedProx} are widely-used benchmark FL methods that mitigate drifts collected in the global objective via careful modifications in their optimization frameworks. The aforementioned datasets are extensively leveraged for experimental studies pertaining to FL and distributed ML architectures \cite{kovalev2021lower, canonaco2021adaptive}, as well as can be conveniently used to simulate time-varying non-stationary scenarios.

\begin{table}[t]
\centering
{
\color{black}
\begin{tabularx}{.45 \textwidth}{c *{4}{Y}}
\toprule[.05em]
\multirow{0}{*}{} & \multicolumn{2}{c}{\bf{covtype}} & \multicolumn{2}{c}{\bf{mnist}} \\
\cmidrule(lr){2-3} \cmidrule(lr){4-5}   %
& CI & CS & CI & CS   \\
\midrule
Master-FL-\textit{FedAvg} & 0.874 & 0.852  & 0.719 & 0.739  \\
Master-FL-\textit{FedOMD} & 0.876 & 0.864  & 0.682 & 0.780  \\
\textit{FedNova}          & 0.799 & 0.753  & 0.629 & 0.644  \\
\textit{FedProx}          & 0.673 & 0.686  & 0.577 & 0.601  \\
\bottomrule
\end{tabularx}

}
\caption{\centering \color{black} Average FL classification accuracy for various methods across total $T= 500$ rounds. CS ${\color{red} \rightarrow}$ Class Swap Drift, CI ${\color{red} \rightarrow}$ Class Introduction Drift.} 
\label{tab:ML_loss_table}
\end{table}

For our experiments, we consider L1 regularized multi-class logistic regression formulation for the underlying ML loss function, i.e., $f(\cdot; \xi)$ for datapoint $\xi$. More precisely, for a model $\mathbf{x}$ and datapoint $\xi$, we have:
\begin{align}
    f(\mathbf{x}; \xi) = \log \big(1 + exp(-\xi_{(2)}.\mathbf{x}^{T} \xi_{(1)})\big) + \frac{\lambda}{2} \|x\|_{2}, \label{eq: experiment_loss_func}
\end{align}
wherein every datapoint is defined as $\xi \triangleq (\xi_{(1)}, \xi_{(2)})$, i.e., tuple of the feature vector $\xi_{(1)}$ and class label $\xi_{(2)}$ respectively. We use grid-search for tuning regularization parameter\footnote{Note that \textit{FedNova} and \textit{FedProx} use L2 regularization instead of L1 in Eq. \eqref{eq: experiment_loss_func}} and presented results are for $\lambda = 2e-4$. Additionally, the learning rates for the different algorithms, $\eta$, is chosen as  $\eta = 1/\sqrt{T}$ for $T$ rounds. The number of client DPUs are $|\mathcal{N}| = 20 $, and we conduct FL training upto $T = 500$ rounds. During every FL training round $t$, each DPU $n \in \mathcal{N}$ acquires its local dataset $\mathcal{D}_{n}^{(t)}$ with sizes distributed as $\mathcal{N}(1000, 200)$ via random sampling without replacement from the core data-source. 

We consider two different categories of \textit{concept drift} that are particularly aligned with our classification tasks: \textit{class introduction (CI) drift}  and \textit{class swap (CS) drift}  \cite{canonaco2021adaptive}. In CI experiments, new classes are injected into the DPUs at certain rounds, thereby adding a non-stationary shock to the system. For ``covtype" dataset, we add CI drift for the rounds $t \in \{65, 187, 233, 367, 411, 489\}$ wherein the classes are added sequentially in the order $\{0\} \rightarrow \{1\} \rightarrow \{2\} \rightarrow \{3\} \rightarrow \{4\} \rightarrow \{5\} \rightarrow \{6\}$ starting from $t=0$ and at each drift round in the set. On the other hand, for ``mnist" dataset, CI drift is injected at rounds $t \in \{31, 129, 279, 310, 369, 462\}$ wherein the classes are added in the order $\{0,1\} \rightarrow \{2,3\} \rightarrow \{4\} \rightarrow \{5,6\} \rightarrow \{7\} \rightarrow \{8\} \rightarrow \{9\}$. In CS experiments, labels are swapped for pair(s) of classes. We follow the same schedule for injecting shock into the system. At each drift round, we used 3 pairs of classes for swapping, i.e., $\{(0,1), (2,3), (4,5)\}$.

 Figure \ref{fig: loss_plots} demonstrates how the performance of the FL methods are impacted by drift shocks. Observe that global losses spike approximately around the drift injection rounds, signifying unexpected non-stationarity experienced by the network of DPUs. Consequently, note that Master-FL-\textit{FedAvg}, Master-FL-\textit{FedOMD} outperform competing FL methods for all the dataset-drift combinations in terms of instanteneous FL loss, and this is also corroborated by the cumulative average FL classification accuracies presented in Table \ref{tab:ML_loss_table}. The results obtained supports our framework's ability to quickly re-train foregoing models that were impacted by \textit{concept drift}.

}

\section{Conclusions} \label{Sec: conclusion}
In this paper, we propose Master-FL a multi-scale change detection-restart based algorithmic framework that can leverage any baseline FL optimizer that works well in  \textit{near-stationary} environments to adaptively learn in a highly drifting environment. We derive online \textit{dynamic regret} bounds for vanilla \textit{FedAvg} and \textit{FedOMD}, as well as demonstrate their \textit{near-stationary} properties which are essential for augmentation with Master-FL. Subsequently, we provide rigorous mathematical analysis for convex loss functions leading to \textit{dynamic regret} bounds in terms of non-stationary measures $\Delta, L$ with no prior knowledge requirement or stronger convexity assumptions.  To the best of our knowledge, the \textit{dynamic regret} bounds revealed in this work are novel in the non-stationary federated optimization setting, while it extends the existing literature in the general centralized as well as distributed online optimization paradigms. As a future direction, extending to other FL baseline algorithms which may demonstrate favorable performance under stationary assumptions definitely deserves consideration. %

\bibliographystyle{IEEEtran}
\bibliography{references}
\newpage
    \setcounter{page}{1}

\onecolumn
\appendices

\section{Description of Non-Stationary Measures $C_T, D_T$} \label{app: other_drift_measures_defn}
In this Section, we present the definitions of non-stationary measures $C_T, D_T$ as provided in \cite{jadbabaie2015online}.
\begin{definition} 
The \textit{comparator regularity} measure $C_{T}$ for an $T$ length sequence of minimizers $\{\mathbf{x}^{(1)}, \mathbf{x}^{(2)}, \cdots, \mathbf{x}^{(T)}\}$ is defined as:
\begin{align}
    C_{T} \triangleq \sum_{t=1}^{T} \|\mathbf{x}^{(t)} - \mathbf{x}^{(t-1)}\|.
\end{align}
\end{definition}
\begin{definition}
The gradient variability measure $D_T$ for an $T$ length sequence of minimizers $\{\mathbf{x}^{(1)}, \mathbf{x}^{(2)}, \cdots, \mathbf{x}^{(T)}\}$ over an arbitrary sequence of differentiable loss functions $\{g^{(1)}, g^{(2)}, \ldots, g^{(T)}\}$ is defined as:
\begin{align}
    D_T \triangleq \sum_{t=1}^{T} \|\nabla{g}^{(t)}(\mathbf{x}^{(t)}) - M^{(t)} \|^{2},
\end{align}
where $\{M^{(1)}, M^{(2)}, \cdots, M^{(T)}\}$ is a predictable sequence computed by the learner during each training round $t$.
\end{definition}
\vspace{3cm}
{\color{black} \section{Outline of Master-FL extension to model differential based syncronization setups}
\label{app: model_differential_revision}
To see how our framework can be leveraged with FL setups equipped with model syncronization via differentials, first note that message passing for ML model at each training round $t$ between the client DPUs and the server in our proposed method currently involves the following two steps (eq. \eqref{eqn: FL-update} and \eqref{eqn: global_ML_aggr_app} in Section \ref{sec: problem_formulation} of manuscript):
\begin{align}
    \label{eqn: FL-update_app} \mathbf{x}_n^{(t)} = \texttt{FL-UPDATE}\big(\mathbf{x}^{(t-1)}, \nabla {F}_{n}^{(t)}(\mathbf{x}^{(t-1)}) \big).
\end{align}
\begin{align} 
    \mathbf{x}^{(t)} = \sum_{n \in \mathcal{N}} p_{n}^{(t)} \mathbf{x}^{(t)}_{n}. \label{eqn: global_ML_aggr_app}
\end{align}
Also, we have $\sum_{n \in \mathcal{N}} p_{n}^{(t)} = 1, ~\forall t \in [T]$, i.e., fractional dataset sizes add upto 1 across all the DPUs. Hence, eq. \eqref{eqn: global_ML_aggr}, can be alternatively expressed as:
\begin{align}
    \mathbf{x}^{(t)} - \mathbf{x}^{(t-1)}=  \sum_{n \in \mathcal{N}} p_{n}^{(t)} (\mathbf{x}^{(t)}_{n} - \mathbf{x}^{(t-1)}), \label{eqn: global_ML_aggr} 
\end{align}
Eq. \eqref{eqn: global_ML_aggr_app} implies that given the global model at round $t-1$ i.e., $\mathbf{x}^{(t-1)}$ is known to the network, the DPUs can communicate local model differentials i.e., $\{\mathbf{x}^{(t)}_{n} - \mathbf{x}^{(t-1)} \}_{n \in \mathcal{N}}$ instead of locally updated model itself i.e., $\{\mathbf{x}^{(t)}_{n}\}_{n \in \mathcal{N}}$ to the central server $\mathcal{S}$. In the next step, the central server will compute global model differential for round $t$ i.e., $\mathbf{x}^{(t)} - \mathbf{x}^{(t-1)}$ according to eq. \eqref{eqn: global_ML_aggr_app} and broadcast it over the network of DPUs. Hence, at the end of synchronization step during round $t$ via model differential message passing, $\mathbf{x}^{(t)}$ can still be computed at all the nodes in $\mathcal{N} \cup \mathcal{S}$.  Furthermore, it is critical to stress the fact the transmission of local dataset sizes to the central server from the DPUs are strictly unavoidable for calculation of weights $\{p^{(t)}_{n}\}_{n \in \mathcal{N}}$, however they simply are scalars and incur significantly much less communication cost compared to the ML model. Hence, with the aforementioned modifications, base FL algorithms which may use communication efficient model differential syncronization techniques can be directly integrated with our methodology.}

\clearpage
\section{Dynamic Regret Analysis for FedAvg Algorithm  - Proof of Theorem \ref{thm: fedavg_static_regr_main}}\label{apd: fedavg_static_regr_main}

From the definition of \textit{dynamic regret} in \eqref{defn: dynamic regret}, we have:
\begin{align}
    R_{[1,T]} &= \sum_{t = 1}^{T} {F}^{(t)}(\mathbf{x}^{(t)}) -  \sum_{t = 1}^{T} {F}^{(t)}(\mathbf{x}^{(t),*}), \label{eqn: fedavg_dreg_def2_main} \\
    & = \underbrace{\sum_{t = 1}^{T} {F}^{(t)}(\mathbf{x}^{(t)}) - \sum_{t = 1}^{T} {F}^{(t)}(\mathbf{x}^{*})}_\text{(a)} \nonumber \\
    & + \underbrace{\sum_{t = 1}^{T} {F}^{(t)}(\mathbf{x}^{*}) - \sum_{t = 1}^{T} {F}^{(t)}(\mathbf{x}^{(t),*})}_\text{(b)},   \label{eqn: fedavg_dreg1_main}
\end{align}
where, we define the \textit{static comparator} $\mathbf{x}^{*}$ as follows:
\begin{align}
    \mathbf{x}^{*} = \underset{\mathbf{x}}{\min} ~\sum_{t=1}^{T} F^{(t)}(\mathbf{x}). \label{eqn: static_comp_def_main}
\end{align}
Henceforth, we focus on individually bounding terms (a) and (b) in Eq. \eqref{eqn: fedavg_dreg1_main}. In order to bound term (a), we proceed as follows:
\begin{align}
    \|\mathbf{x}^{(t + 1)} &- \mathbf{x}^{*}\|^2 = \|\mathbf{x}^{(t)} - \mathbf{x}^{*} - \eta_t\sum_{n \in \mathcal{N}} p_{n}^{(t)}\nabla {F}_{n}^{(t)}(\mathbf{x}^{(t)}) \|^2, \label{eqn: fedavg_term_a_1_main} \\
    & = \| \mathbf{x}^{(t)} - \mathbf{x}^{*} \|^2 + \eta_t^2\|\sum_{n \in \mathcal{N}} p_{n}^{(t)}\nabla {F}_{n}^{(t)}(\mathbf{x}^{(t)}) \|^2 \nonumber \\
    & -2\eta_t \sum_{n \in \mathcal{N}} p_{n}^{(t)} \langle \nabla {F}_{n}^{(t)}(\mathbf{x}^{(t)}), \mathbf{x}^{(t)} - \mathbf{x}^{*} \rangle \\
    & \leq  \| \mathbf{x}^{(t)} - \mathbf{x}^{*} \|^2 + \sum_{n \in \mathcal{N}} p_{n}^{(t)} \eta_t^2 \|\nabla {F}_{n}^{(t)}(\mathbf{x}^{(t)}) \|^2 \nonumber \\
    & -2\eta_t \sum_{n \in \mathcal{N}} p_{n}^{(t)} \langle \nabla {F}_{n}^{(t)}(\mathbf{x}^{(t)}), \mathbf{x}^{(t)} - \mathbf{x}^{*} \rangle \label{eqn: fedavg_term_a_2_main} \\
    & \leq \| \mathbf{x}^{(t)} - \mathbf{x}^{*} \|^2 + \eta_t^2{\mu}^2 \nonumber \\
    & -2\eta_t \sum_{n \in \mathcal{N}} p_{n}^{(t)} \langle \nabla {F}_{n}^{(t)}(\mathbf{x}^{(t)}), \mathbf{x}^{(t)} - \mathbf{x}^{*} \label{Eq.fedavg_term_a_temp_main}\rangle
\end{align}
We note that Eq. \eqref{eqn: fedavg_term_a_1_main} is due to the aggregated model update produced by \textit{FedAvg} as indicated by Eq. \eqref{eqn: fedavg_global_model_update_main}. Also, it is worth highlighting that Eq. \eqref{eqn: fedavg_term_a_2_main} is due to convexity of squared L2 euclidean norm. Furthermore, we use $\mu$-Lipschitz property of underlying ML loss function $f(\cdot;\cdot)$ as described by Assumption \ref{assumption: convexity_+_lipschitz} to obtain Eq. \eqref{Eq.fedavg_term_a_temp_main}. 
After re-arranging Eq. \eqref{Eq.fedavg_term_a_temp_main}, we get:
\begin{align}
     \sum_{n \in \mathcal{N}} p_{n}^{(t)} \langle \nabla {F}_{n}^{(t)}(\mathbf{x}^{(t)}), & \mathbf{x}^{(t)} - \mathbf{x}^{*} \rangle  \leq \frac{1}{2\eta_t} \| \mathbf{x}^{(t)} -  \mathbf{x}^{*} \|^2 \nonumber \\
     &- \frac{1}{2\eta_t}\|\mathbf{x}^{(t + 1)} - \mathbf{x}^{*}\|^2 
     + \frac{\eta_t {\mu}^2}{2}. \label{eqn: bound (a) 1_main} 
\end{align}
Also, due to the convexity of underlying ML loss function $f(\cdot;\cdot)$ as specified in Assumption \ref{assumption: convexity_+_lipschitz}, we have:
\begin{align}
    & {F}_{n}^{(t)}(\mathbf{x}^{(t)}) - {F}_{n}^{(t)}(\mathbf{x}^{*}) \leq \langle \nabla {F}_{n}^{(t)}(\mathbf{x}^{(t)}), \mathbf{x}^{(t)} - \mathbf{x}^{*} \rangle.
\end{align}
\begin{align}
{F}^{(t)}(\mathbf{x}^{(t)}) - {F}^{(t)}(\mathbf{x}^{*}) & = \sum_{n \in \mathcal{N}} p_{n}^{(t)} \big[{F}_{n}^{(t)}(\mathbf{x}^{(t)}) - {F}_{n}^{(t)}(\mathbf{x}^{*})\big] \nonumber \\
    & \leq \sum_{n \in \mathcal{N}} p_{n}^{(t)} \langle \nabla {F}_{n}^{(t)}(\mathbf{x}^{(t)}), \mathbf{x}^{(t)} - \mathbf{x}^{*} \rangle \label{eqn: fedavg_term_a_3_main}
\end{align}
Now, we use Eq. \eqref{eqn: bound (a) 1_main} to upper bound the RHS of Eq. \eqref{eqn: fedavg_term_a_3_main}, thereby obtaining:
\begin{align}
    {F}^{(t)}(\mathbf{x}^{(t)}) - {F}^{(t)}(\mathbf{x}^{*}) \leq & \frac{1}{2\eta_t} \Big[\| \mathbf{x}^{(t)} - \mathbf{x}^{*} \|^2 - \|\mathbf{x}^{(t + 1)} &- \mathbf{x}^{*}\|^2 \Big] \nonumber \\
    & + \frac{\eta_t {\mu}^2}{2}. \label{eqn: bound (a) 2_main}
\end{align}
Conducting summation over $t = 1$ to $t = T$ in Eq. \eqref{eqn: bound (a) 2_main} with learning rates $\eta_t = \frac{1}{\sqrt{T}}$, $\forall t$, we obtain the following bound for term (a):
\begin{align}
    \sum_{t=1}^{T} {F}^{(t)}(\mathbf{x}^{(t)}) - {F}^{(t)}(\mathbf{x}^{*}) \leq \frac{\sqrt{T}}{2} \| \mathbf{x}^{(1)} - \mathbf{x}^{*} \|^2 + \frac{\mu^2 \sqrt{T}}{2}. \label{eqn: bound (a) 3_main}
\end{align}
In the following, we focus on bounding term (b) in Eq. \eqref{eqn: fedavg_dreg1_main}. More specifically, we want to show that ${F}^{(t)}(\mathbf{x}^{*}) -{F}^{(t)}(\mathbf{x}^{(t),*}) \leq 2\Delta_{[1,T]}$, $\forall t$. Suppose otherwise, in that case, $\exists ~t_0$ such that ${F}^{(t_0)}(\mathbf{x}^{*}) -{F}^{(t_0)}(\mathbf{x}^{(t_0),*}) > 2\Delta_{[1,T]}$. Then,
\begin{align}
    {F}^{(t)}(\mathbf{x}^{(t_0), *}) & \leq {F}^{(t_0)}(\mathbf{x}^{(t_0), *}) + \Delta_{[1,T]},  \label{eqn: term b 1_main}\\
    & <  {F}^{(t_0)}(\mathbf{x}^{*}) -  \Delta_{[1,T]}, \label{eqn: term b 2_main} \\ 
    & \leq F^{(t)}(\mathbf{x}^{*}), \hskip 1cm \forall t. \label{eqn: term b 3_main}
\end{align}
Summing over $t = 1$ to $t = T$ for both LHS and RHS of Eq. \eqref{eqn: term b 3_main}, we obtain:
\begin{align}
    \sum_{t = 1}^{T} {F}^{(t)}(\mathbf{x}^{(t_0), *}) <  \sum_{t = 1}^{T} F^{(t)}(\mathbf{x}^{*})  
\end{align}
thereby contradicting the definition of $\mathbf{x}^{*}$ presented in Eq. \eqref{eqn: static_comp_def_main}. Therefore, for each element of term (b), the following holds:
\begin{align}
    {F}^{(t)}(\mathbf{x}^{*}) -{F}^{(t)}(\mathbf{x}^{(t),*}) \leq 2\Delta_{[1,T]}, \forall t \label{eqn: term b 4_main}
\end{align}
Summing over $t = 1$ to $t = T$ for both LHS and RHS of Eq. \eqref{eqn: term b 4_main}, we get the final bound for term (b) as:
\begin{align}
    \sum_{t = 1}^{T} {F}^{(t)}(\mathbf{x}^{*}) - \sum_{t = 1}^{T} {F}^{(t)}(\mathbf{x}^{(t),*}) \leq 2T\Delta_{[1,T]} \label{eqn: term b 5_main}
\end{align}
Combining the bounds for term (a) and (b) as reflected via Eq. \eqref{eqn: bound (a) 3_main} and \eqref{eqn: term b 5_main} respectively, we get the final bound for cumulative dynamic regret $R_{[1,T]}$ as:
\begin{align}
    R_{[1,T]} \leq \frac{\sqrt{T}}{2} \| \mathbf{x}^{(1)} - \mathbf{x}^{*} \|^2 + \frac{{\mu}^2 \sqrt{T}}{2} + 2T\Delta_{[1, T]}
\end{align}
\newpage

\section{Dynamic Regret Analysis for FedOMD Algorithm - Proof of Theorem \ref{thm: fedomd_convex_regr_main}} \label{app: base_alg_guarantees}
In the following subsections, we provide explicit analysis of \textit{dynamic regret}, i.e., $ R_{[1,T]}$ as defined in Eq. \eqref{defn: dynamic regret} for \textit{FedOMD}. First, we recall the details of FL model update procedure for \textit{FedOMD} in Section \ref{sec: fedomd_analysis}, and subsequently derive the \textit{dynamic regret} bound in Section \ref{sec: proof_fedomd_appendix}.
\subsection{Summary of FedOMD Algorithm} \label{sec: fedomd_analysis}
Here, we briefly restate the modeling assumptions, update rule and aggregation step for \textit{FedOMD} algorithm which is detailed in the manuscript in Sec. \ref{sec: base_FL_disc_main}. For a 1-strongly convex function $\phi : \mathbb{R}^d \rightarrow \mathbb{R}$, the following holds:
\begin{align}
    \phi(\mathbf{y}) \geq \phi(\mathbf{x}) + \langle \mathbf{y}-\mathbf{x}, \nabla{\phi(\mathbf{x})}  \rangle + \frac{1}{2} \| \mathbf{y} - \mathbf{x} \|^2, ~\forall \mathbf{x}, \mathbf{y} \in \mathbb{R}^d. \label{eqn: phi_strongly_convex_defn}
\end{align}
Consequently, the Bregman Divergence w.r.t $\phi(.)$ is:
\begin{align}
    B_{\phi}(\mathbf{y};\mathbf{x}) \triangleq \phi(\mathbf{y}) -\phi(\mathbf{x}) - \langle \mathbf{y} - \mathbf{x}, \nabla{\phi}(\mathbf{x}) \rangle.
\end{align}
The Bregman Divergence $B_{\phi}$ is assumed to satisfy Assumption \ref{assumption: bregman_weighted_main}.
The local model update i.e., \texttt{FL-UPDATE}($\cdot$) during each FL round is:
\begin{align}
    & \texttt{FL-UPDATE:} \hspace{13mm} \mathbf{x}^{(t+1)}_{n} = \underset{x \in \mathbb{R}^p}{\argmin} ~\psi_{n}(\mathbf{x}; \mathbf{x}^{(t)}), \label{eqn:bregman_minimize_func} \\
    & \hspace{39mm} \psi_{n}(\mathbf{x}; \mathbf{x}^{(t)}) \triangleq \langle \nabla {F}_{n}^{(t)}(\mathbf{x}^{(t)}), \mathbf{x} \rangle + \frac{1}{\eta_t} B_{\phi} (\mathbf{x}; \mathbf{x}^{(t)}). \label{eqn:bregman_loss_func_defn}
\end{align}
The global ML model aggregation procedure as summarized in Eq. \eqref{eqn: global_ML_aggr} in the following: 
\begin{align} 
    \mathbf{x}^{(t+1)} = \sum_{n \in \mathcal{N}} p_{n}^{(t)} \mathbf{x}^{(t+1)}_{n}. \label{eqn: global_ML_aggr_fedomd}
\end{align}
\subsection{Proof of Theorem \ref{thm: fedomd_convex_regr_main}} \label{sec: proof_fedomd_appendix}
First, we restate the decomposition of \textit{dynamic regret} as conducted in Theorem \eqref{thm: fedavg_static_regr_main} via Eq. \eqref{eqn: fedavg_dreg_def2_main} - \eqref{eqn: fedavg_dreg1_main}:
\begin{align}
    R_{[1,T]} &= \sum_{t = 1}^{T} {F}^{(t)}(\mathbf{x}^{(t)}) -  \sum_{t = 1}^{T} {F}^{(t)}(\mathbf{x}^{(t),*}), \label{eqn: fedomd_dreg_def2} \\
    & = \underbrace{\sum_{t = 1}^{T} {F}^{(t)}(\mathbf{x}^{(t)}) - \sum_{t = 1}^{T} {F}^{(t)}(\mathbf{x}^{*})}_\text{(a)} + \underbrace{\sum_{t = 1}^{T} {F}^{(t)}(\mathbf{x}^{*}) - \sum_{t = 1}^{T} {F}^{(t)}(\mathbf{x}^{(t),*})}_\text{(b)},   \label{eqn: fedomd_dreg1}
\end{align}
In order to bound (a) in Eq. \eqref{eqn: fedomd_dreg1}, we leverage the convexity of $F_{n}^{(t)}, ~n \in \mathcal{N}$ to write:
\begin{align}
    {F}_{n}^{(t)}(\mathbf{x}^{(t)}) - {F}_{n}^{(t)}(\mathbf{x}^{*})  & \leq \langle \mathbf{x}^{(t)} - \mathbf{x}^{*}, \nabla {F}_{n}^{(t)}(\mathbf{x}^{(t)}) \rangle, \\
    & \leq \underbrace{\langle \mathbf{x}^{(t)} - \mathbf{x}^{(t+1)}_n, \nabla {F}_{n}^{(t)}(\mathbf{x}^{(t)}) \rangle}_\text{(c)} + \underbrace{\langle \mathbf{x}^{(t+1)}_n - \mathbf{x}^{*}, \nabla {F}_{n}^{(t)}(\mathbf{x}^{(t)}) \rangle}_\text{(d)}. \label{eqn: term_a_convexity_result}
\end{align}
We use {Fenchel-Young inequality} \cite{ando1995matrix} to bound term (c) in Eq. \eqref{eqn: term_a_convexity_result} as follows:
\begin{align}
    \langle \mathbf{x}^{(t)} - \mathbf{x}^{(t+1)}_n , \nabla {F}_{n}^{(t)}(\mathbf{x}^{(t)}) \rangle & \leq \frac{1}{2 \eta_t} \| \mathbf{x}^{(t)} - \mathbf{x}^{(t+1)}_n \|^2 + \frac{\eta_t}{2} \| \nabla {F}_{n}^{(t)}(\mathbf{x}^{(t)}) \|^2, \\
    & \leq \frac{1}{2 \eta_t} \| \mathbf{x}^{(t)} - \mathbf{x}^{(t+1)}_n \|^2 + \frac{\eta_t}{2} \mu^2, \label{eqn:bound_term_c}
\end{align}
where we use $\mu$-Lipschitz property of $f(\cdot;\cdot)$ in Eq. \eqref{eqn:bound_term_c}. Next, in order to bound term (d) we first note that strong-convexity in conjunction with the {first-order optimality condition} associated with $\psi_{n}(\mathbf{x}; \mathbf{x}^{(t)})$ allows us to obtain the following:
\begin{align}
    & \langle \nabla{\psi}(\mathbf{x}^{(t + 1)}_{n}; \mathbf{x}^{(t)}), \mathbf{x}^{*} - \mathbf{x}^{(t + 1)}_{n} \rangle \geq 0, \\
    & \langle \nabla {F}_{n}^{(t)}(\mathbf{x}^{(t)}) + \frac{1}{\eta_t}\nabla{\phi}(\mathbf{x}^{(t+1)}_{n}) - \frac{1}{\eta_t}\nabla{\phi}(\mathbf{x}^{(t)}), \mathbf{x}^{*} - \mathbf{x}^{(t + 1)}_{n} \rangle \geq 0, \label{eqn: fedomd_term_b_3} 
\end{align}

After re-arranging terms in Eq. \eqref{eqn: fedomd_term_b_3}, we get:
\begin{align}
    \langle \mathbf{x}^{(t+1)}_n - \mathbf{x}^{*}, \nabla {F}_{n}^{(t)}(\mathbf{x}^{(t)}) \rangle & \leq \frac{1}{\eta_t} \langle \nabla{\phi}(\mathbf{x}^{(t+1)}_n) - \nabla{\phi}(\mathbf{x}^{(t)}) , \mathbf{x}^{*} - \mathbf{x}^{(t+1)}_n \rangle, \\
    & = \frac{1}{\eta_t} \big[ B_{\phi}(\mathbf{x}^{*};\mathbf{x}^{(t)}) - B_{\phi}(\mathbf{x}^{*};\mathbf{x}^{(t+1)}_n) \big] - \frac{1}{\eta_t} B_{\phi}(\mathbf{x}^{(t+1)}_n; \mathbf{x}^{(t)}), \label{eqn:bound_termd_1} \\
    & \leq \frac{1}{\eta_t} \big[ B_{\phi}(\mathbf{x}^{*};\mathbf{x}^{(t)}) - B_{\phi}(\mathbf{x}^{*};\mathbf{x}^{(t+1)}_n) \big] -\frac{1}{2 \eta_t} \| \mathbf{x}^{(t)} - \mathbf{x}^{(t+1)}_n \|^2. \label{eqn:bound_termd_2}
\end{align}
Eq. \eqref{eqn:bound_termd_1} is due to {``three-point equality" for Bregman divergences} which implies $\mathbf{x}, \mathbf{y}, \mathbf{z} \in \mathbb{R}^p$, the following holds:
\begin{align}
    \langle \nabla{\phi}(\mathbf{x}) - \nabla{\phi}(\mathbf{y}), \mathbf{x} - \mathbf{z} \rangle =  B_{\phi}(\mathbf{x},\mathbf{y}) + B_{\phi}(\mathbf{z},\mathbf{x}) - B_{\phi}(\mathbf{z},\mathbf{y}). 
\end{align}
Furthemore, we note that Eq. \eqref{eqn:bound_termd_2} is a result of strong convexity of $\phi(\cdot)$. Consequently, combining Eq. \eqref{eqn: term_a_convexity_result}, \eqref{eqn:bound_term_c} and \eqref{eqn:bound_termd_2}, as well as summing over $n \in \mathcal{N}$, we obtain the following at each $t \leq T$:
\begin{align}
    {F}^{(t)}(\mathbf{x}^{(t)}) - {F}^{(t)}(\mathbf{x}^{(*)}) & \leq  \frac{1}{\eta_t}\big[ B_{\phi}(\mathbf{x}^{*},\mathbf{x}^{(t)}) - B_{\phi}(\mathbf{x}^{*},\mathbf{x}^{(t+1)}) \big] \nonumber \\
    & \hspace{5mm} + \frac{1}{\eta_t} \underbrace{\big[B_{\phi}(\mathbf{x}^{*},\mathbf{x}^{(t+1)}) - \sum_{n \in \mathcal{N}} p_n^{(t)}  B_{\phi}(\mathbf{x}^{*};\mathbf{x}^{(t+1)}_n) \big]}_\text{(e)} + \frac{\eta_t}{2} \mu^2, \\
     & \leq \frac{1}{\eta_t}\big[ B_{\phi}(\mathbf{x}^{*},\mathbf{x}^{(t)}) - B_{\phi}(\mathbf{x}^{*},\mathbf{x}^{(t+1)}) \big] + \frac{\eta_t}{2} \mu^2. \label{eqn: term_a_2}
\end{align}
We note that Eq. \eqref{eqn: term_a_2} is due to the fact that Assumption \ref{assumption: bregman_weighted_main} implies term (e) $\leq 0$. Summing LHS, RHS of Eq. \eqref{eqn: term_a_2} over $t=1$ to $t = T$, and using the choice of learning rate as $\eta_t = \frac{1}{\sqrt{T}}$, we collect the final bound for term (a):
\begin{align}
    \sum_{t = 1}^{T} {F}^{(t)}(\mathbf{x}^{(t)}) - \sum_{t = 1}^{T} {F}^{(t)}(\mathbf{x}^{*}) & \leq \sum_{t=1}^{T} \frac{1}{\eta_t} \big[ B_{\phi}(\mathbf{x}^{*},\mathbf{x}^{(t)}) - B_{\phi}(\mathbf{x}^{*},\mathbf{x}^{(t+1)}) \big] + \sum_{t=1}^{T} \frac{\eta_t}{2} \mu^2, \\
    & \leq \sqrt{T}\big[B_{\phi}(\mathbf{x}^{*},\mathbf{x}^{(1)}) - B_{\phi}(\mathbf{x}^{*},\mathbf{x}^{(T+1)}) \big] + \frac{\mu^2}{2}\sqrt{T}, \\
    & \leq \sqrt{T} B_{\phi}(\mathbf{x}^{*},\mathbf{x}^{(1)}) + \frac{\mu^2}{2}\sqrt{T}. \label{eqn: term_a_3 fedomd}
\end{align}

To bound (b), the exact same set of arguments detailed in Theorem \ref{thm: fedavg_static_regr_main} via Eq. \eqref{eqn: term b 1_main} - \eqref{eqn: term b 5_main} can be directly reused to obtain:
\begin{align}
    \sum_{t = 1}^{T} {F}^{(t)}(\mathbf{x}^{*}) - \sum_{t = 1}^{T} {F}^{(t)}(\mathbf{x}^{(t),*}) \leq 2T\Delta_{[1,T]}. \label{eqn: term b 5 fedomd}
\end{align}
Using the bounds for term (a) and term (b) presented via Eq. \eqref{eqn: term_a_3 fedomd} - \eqref{eqn: term b 5 fedomd}, we obtain the final bound for \textit{dynamic regret}, i.e., $R_{[1,T]}$ as:
\begin{align}
    & R_{[1,T]} \leq \sqrt{T} B_{\phi}(\mathbf{x}^{*},\mathbf{x}^{(1)}) + \frac{\mu^2}{2}\sqrt{T} + 2T\Delta_{[1,T]}.
\end{align}
\newpage
\section{Construction of optimistic Loss function $\Tilde{F}^{(t)}$ and Validation of Assumption \ref{assmptn: near stationary} for FedAvg, FedOMD} \label{sec: optimistic_estimator_verification}
In this section, we summarize the construction of optimistic Loss function $\Tilde{F}^{(t)}$ for \textit{FedOMD}, the same choice of optimistic Loss tracker holds for \textit{FedAvg} which can be verified by leveraging the mathematical justifications we provide in the subsequent discussion. More specifically, we provide a proof sketch verifying that Assumption \ref{assmptn: near stationary} holds for prescribed choice of the optimistic estimator. 
The iterative procedure for ML model updates for \textit{FedOMD} is illustrated via Eq. \eqref{eqn: phi_strongly_convex_defn_main} - \eqref{eqn: global_ML_aggr_fedomd_main} in Section \ref{sec: base_FL_disc_main}. At round $t$, the optimistic global Loss Estimator for \textit{FedOMD} algorithm can be constructed as:
\begin{align}
    & \Tilde{F}^{(t)} = \underbrace{\frac{1}{t}\sum_{\tau = 1}^{t} F^{(\tau)}({\mathbf{x}}^{(t)})}_\text{(a)} -\Tilde{c}\sqrt{\frac{log(T/\delta)}{\Bar{D}^{(t)}}}, \label{eqn: optimistic_est_def_1}\\
    & \Bar{D}^{(t)} \triangleq \sum_{\tau = 1}^{t} D^{(\tau)}, \label{eqn: optimistic_est_def_2}\\
\end{align}
where $D^{(t)}$ is defined via Eq. \eqref{eqn: ML dataset definition}. We note that the first term in RHS of Eq. \eqref{eqn: optimistic_est_def_1} denoted by (a) is the empirical mean of ML losses across all datapoint collected till time $t$ at the most recent ML model $\mathbf{x}^{(t)}$, $\Bar{D}^{(t)}$ is the cumulative number of datapoints collected till $t$. We highlight that in Eq. \eqref{eqn: optimistic_est_def_1}, $\Tilde{c}$ is an artifact of the application of Azuma-Hoeffding inequality for martingales with bounded variations \cite{cesa2004generalization}, \cite{azuma1967weighted}. Hence, as a consequence of the arguments provided in Theorem 3 of \cite{fedomdpaper}, we have the following:
\begin{align}
  \Tilde{F}^{(t)} &\leq \underset{\mathbf{x} \in \mathbbm{R}^d}{\min} ~\frac{1}{t} \sum_{\tau=1}^{t} F^{(\tau)}(\mathbf{x}), \label{eqn: optimistic_est_def_4} \\
   & \leq \frac{1}{t} \sum_{\tau=1}^{t} F^{(\tau)}(\mathbf{x}^{*}), \\
   & \leq \underset{\tau \leq t}{\max} ~F^{(\tau)}(\mathbf{x}^{*}), \\
   & \leq F^{(\tau_{max})}(\mathbf{x}^{(\tau_{max}), *}) + 2\Delta_{[1,t]}, \label{eqn: optimistic_est_def_5} \\
   & \leq \underset{\tau \leq t}{\max} ~F^{(\tau)}(\mathbf{x}^{(\tau), *}) + 2\Delta_{[1,t]},
\end{align}
where $\tau_{max} \triangleq \underset{\tau \leq t}{\argmax} ~F^{(\tau)}(\mathbf{x}^{*})$. We highlight that Eq. \eqref{eqn: optimistic_est_def_4} holds with high probability as a direct consequence of Azuma-Hoeffding's inequality. Eq. \eqref{eqn: optimistic_est_def_5} can be verified using the mathematical arguments detailed through Eq. \eqref{eqn: term b 1_main} - \eqref{eqn: term b 4_main}. This verifies Eq. \eqref{eqn: optimistic_est_1} in Assumption \ref{assmptn: near stationary}. Next, with probability $1-\delta$, we must have:
\begin{align}
    \sum_{\tau = 1}^{t} \Big[ {F}^{(\tau)}(\mathbf{x}^{(\tau)}) - \Tilde{F}^{(\tau)} \Big] & \leq \sum_{\tau = 1}^{t}  \Big[{F}^{(\tau)}(\mathbf{x}^{(\tau)}) - \frac{1}{\tau}\sum_{\tau' = 1}^{\tau} F^{(\tau')}({\mathbf{x}}^{(\tau)})\Big] + \mathcal{O}(\sqrt{t~log(T/\delta)}), \\
    & \leq t\Delta_{[1,t]} + \mathcal{O}(\sqrt{t~log(T/\delta)}),
\end{align}
where first term on the RHS of Eq. \eqref{eqn: optimistic_est_def_5} is due to direct application of cumulative \textit{concept drift} (see Definition \ref{defn: model drift}). Hence, Eq. \eqref{eqn: optimistic_est_2} of Assumption \ref{assmptn: near stationary} is valid with high probability with $\rho(t) = \mathcal{O}\Big(\sqrt{\frac{log(T/\delta)}{t}}\Big)$.
\newpage
{\color{black} \section{Intuitive understanding of Requirement \ref{assmptn: near stationary} via mathematical justifications } \label{sec: near_stationary_intuition}
For ease of our discussion, we state the requirement next.
\\
\begin{requirement*} [Base algorithm performance guarantee in a near-stationary environment] \label{assmptn: near stationary appendix} We assume that the base algorithm produces an auxiliary quantity $\Tilde{F}^{(t)}$ at the end of each global round of aggregation $t \in \{1,2, \cdots, T\}$ satisfying the following:  
\begin{align}
    & \Tilde{F}^{(t)} \leq \underset{\tau \in [1, t]}{\max} F^{(\tau)}(\mathbf{x}^{(\tau),*}) + \Delta_{[1, t]}, \label{eqn: optimistic_est_1_R024} \\
    & \frac{1}{t}\sum_{t = 1}^{t} [{F}^{(t)}(\mathbf{x}^{(t)}) - \Tilde{F}^{(t)}] \leq \rho(t) + \Delta_{[1, t]}. \label{eqn: optimistic_est_2_R024}
\end{align}
where $\rho(.)$ is described in Definition 3. Further, $\Delta_{[1,t]}$ represents the cumulative concept drift experienced with $\Delta_{[1,t]} \leq \rho(t)$ , i.e., near-stationary environment.
\end{requirement*}
In Section \ref{sec: problem_formulation}, prior to introducing Requirement \ref{assmptn: near stationary}, we mention that our algorithmic framework needs underlying baseline FL algorithms to have certain theoretical performance guarantees in environments where the drift is small, which we also alternatively denote as \textit{near-stationary} environments in our manuscript. With this requirement, our goal is to identify baseline FL algorithms which can be run standalone in aforementioned \textit{near-stationary} environments. Consequently, only such baseline algorithms could be wrapped with our algorithmic framework to deal with non-stationary environments characterized by high degrees of drifts.\\

To develop a mathematical intuition of this requirement, consider a \textit{perfectly stationary} learning setting, where global losses are collected on time-invariant datasets implying $F^{(\tau)} \rightarrow F$  and $\Delta_{[1, t]} = 0$ (i.e., cumulative \textit{concept drift} is 0). This also implies minimizers $\{\mathbf{x}^{(\tau),*}\}_{1 \leq \tau T}$ are also \textit{static} since $F$ is unchanged, we denote this minimizer by $\overline{x}^{*}$. Hence, for this particular learning setting, Eq. \eqref{eqn: optimistic_est_1_R024}, \eqref{eqn: optimistic_est_2_R024}, can be reduced to:
\begin{align}
    & \Tilde{F}^{(t)} \leq F(\overline{x}^{*}), \label{eqn: optimistic_est_1_R24} \\
    & \sum_{t = 1}^{t} [{F}^{(t)}(\mathbf{x}^{(t)}) - \Tilde{F}^{(t)}] \leq t.\rho(t). \label{eqn: optimistic_est_2_R24}
\end{align}
Note that, if we choose $\Tilde{F}^{(t)} = F(\overline{x}^{*})$, Eq. \eqref{eqn: optimistic_est_1_R24} holds trivially. To confirm that \eqref{eqn: optimistic_est_2_R24} holds when baseline algorithms are \textit{FedAvg, FedOMD} with this choice for $\Tilde{F}^{(t)}$, we first recall the results of Theorem \ref{thm: fedavg_static_regr_main}, \ref{thm: fedomd_convex_regr_main} in the manuscript.
\begin{theorem}[Dynamic Regret for Convex Loss function with \textit{FedAvg}] \label{thm: fedavg_static_regr_resp1} Assume that the underlying ML loss measure $f(\cdot;\cdot)$ satisfies Assumption 2 and local learning rates at the DPUs collectively represented by $\mathcal{N}$ are set to $\eta_t = \frac{1}{\sqrt{T}}$ for $t \in  \{1,2,\cdots, T\}$, the cumulative dynamic regret incurred by \textit{FedAvg} Algorithm is bounded by:
\begin{align}
    R_{[1,T]} \leq \frac{\sqrt{T}}{2} \| \mathbf{x}^{(1)} - \mathbf{x}^{*} \|^2 + \frac{{\mu}^2 \sqrt{T}}{2} + 2\underbrace{T\Delta_{[1, T]}}_\text{(*)}, \label{eq: fedavg_regr_resp1}
\end{align}
\end{theorem}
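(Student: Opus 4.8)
The plan is to split the dynamic regret into a \emph{static} component measured against the best fixed comparator and a \emph{drift} component. I would introduce the static comparator $\mathbf{x}^{*} = \argmin_{\mathbf{x}} \sum_{t=1}^{T} F^{(t)}(\mathbf{x})$ and write $R_{[1,T]} = (a) + (b)$, where $(a) = \sum_{t=1}^{T}[F^{(t)}(\mathbf{x}^{(t)}) - F^{(t)}(\mathbf{x}^{*})]$ is a static-regret quantity for a gradient-descent-type iteration and $(b) = \sum_{t=1}^{T}[F^{(t)}(\mathbf{x}^{*}) - F^{(t)}(\mathbf{x}^{(t),*})]$ is the price of comparing against a single fixed point rather than the per-round minimizers. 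Term $(a)$ should succumb to standard online convex optimization machinery, while term $(b)$ is controlled purely by the magnitude of the concept drift.

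For term $(a)$, I would start from the aggregated FedAvg update $\mathbf{x}^{(t+1)} = \mathbf{x}^{(t)} - \eta_t \sum_{n\in\mathcal{N}} p_n^{(t)} \nabla F_n^{(t)}(\mathbf{x}^{(t)})$ in Eq.~\eqref{eqn: fedavg_global_model_update_main}, expand $\|\mathbf{x}^{(t+1)} - \mathbf{x}^{*}\|^2$, and push the convex combination through the squared norm by Jensen so that the quadratic term becomes $\eta_t^2 \sum_{n} p_n^{(t)} \|\nabla F_n^{(t)}(\mathbf{x}^{(t)})\|^2 \le \eta_t^2 \mu^2$ via the $\mu$-Lipschitz bound of Assumption~\ref{assumption: convexity_+_lipschitz}. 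Rearranging isolates $\sum_{n} p_n^{(t)} \langle \nabla F_n^{(t)}(\mathbf{x}^{(t)}), \mathbf{x}^{(t)} - \mathbf{x}^{*}\rangle$ as a telescoping difference of $\frac{1}{2\eta_t}\|\mathbf{x}^{(t)} - \mathbf{x}^{*}\|^2$-type terms plus $\frac{\eta_t \mu^2}{2}$; convexity of each $F_n^{(t)}$ then upper-bounds $F^{(t)}(\mathbf{x}^{(t)}) - F^{(t)}(\mathbf{x}^{*})$ by that same inner-product sum, and summing over $t$ with the constant step $\eta_t = 1/\sqrt{T}$ collapses the distance terms, giving $(a) \le \frac{\sqrt{T}}{2}\|\mathbf{x}^{(1)} - \mathbf{x}^{*}\|^2 + \frac{\mu^2\sqrt{T}}{2}$.

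For term $(b)$, I would prove the pointwise inequality $F^{(t)}(\mathbf{x}^{*}) - F^{(t)}(\mathbf{x}^{(t),*}) \le 2\Delta_{[1,T]}$ for every $t$ by contradiction: if some round $t_0$ violated it, then invoking the drift bound $|F^{(t)}(\mathbf{x}) - F^{(t_0)}(\mathbf{x})| \le \Delta_{[1,T]}$ (a consequence of Definition~\ref{defn: model drift}) at both $\mathbf{x}^{(t_0),*}$ and $\mathbf{x}^{*}$ yields $F^{(t)}(\mathbf{x}^{(t_0),*}) \le F^{(t_0)}(\mathbf{x}^{(t_0),*}) + \Delta_{[1,T]} < F^{(t_0)}(\mathbf{x}^{*}) - \Delta_{[1,T]} \le F^{(t)}(\mathbf{x}^{*})$ for all $t$, and summing over $t$ contradicts the minimality of $\mathbf{x}^{*}$. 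Summing the pointwise bound over $t$ gives $(b) \le 2T\Delta_{[1,T]}$, and adding the two estimates yields the claimed bound. I expect the only delicate point to be the direction in which the drift inequality is applied inside the contradiction argument for $(b)$ — it must be used once to \emph{lower} one side and once to \emph{raise} the other so the chain stays strict — whereas the estimate for $(a)$ is routine gradient-descent algebra; note that the cumulative term $T\Delta_{[1,T]}$ is precisely what makes this bound uninteresting once $\Delta_{[1,T]} \gg 1/\sqrt{T}$, motivating the multi-scale construction.
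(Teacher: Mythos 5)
Your proposal is correct and follows essentially the same route as the paper's own proof in Appendix \ref{apd: fedavg_static_regr_main}: the same static/drift decomposition via the fixed comparator $\mathbf{x}^{*}$, the same Jensen-plus-Lipschitz telescoping argument for the static term, and the same contradiction argument (using the drift bound in both directions around round $t_0$) to get the pointwise $2\Delta_{[1,T]}$ control on the drift term. No gaps.
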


\begin{theorem}[Dynamic Regret with Convex Loss function for \textit{FedOMD}] \label{thm: fedomd_convex_regr_resp1} Assume that the underlying ML loss measure $f(\cdot;\cdot)$ satisfies Assumption 2 and local learning rates at the DPUs collectively represented by $\mathcal{N}$ are set to $\eta_t = \frac{1}{\sqrt{T}}$ for $t \in  \{1,2, \cdots, T \}$, the cumulative dynamic regret incurred by \textit{FedOMD} Algorithm is bounded by:
\begin{align}
    R_{[1,T]} \leq \sqrt{T} B_{\phi}(\mathbf{x}^{*},\mathbf{x}^{(1)}) + \frac{\mu^2}{2}\sqrt{T} + 2\underbrace{T\Delta_{[1, T]}}_\text{(*)}. \label{eq: fedomd_regr_resp1} 
\end{align}
\end{theorem}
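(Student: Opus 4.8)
The plan is to reuse, essentially verbatim, the two-part decomposition that worked for \textit{FedAvg} in Theorem \ref{thm: fedavg_static_regr_main}, since the only \textit{FedOMD}-specific feature is the mirror-descent update. I would write $R_{[1,T]} = (a) + (b)$ with $(a) = \sum_{t=1}^{T} F^{(t)}(\mathbf{x}^{(t)}) - \sum_{t=1}^{T} F^{(t)}(\mathbf{x}^{*})$ the \emph{static} regret against the single best offline model $\mathbf{x}^{*} = \argmin_{\mathbf{x}} \sum_{t=1}^{T} F^{(t)}(\mathbf{x})$, and $(b) = \sum_{t=1}^{T} F^{(t)}(\mathbf{x}^{*}) - \sum_{t=1}^{T} F^{(t)}(\mathbf{x}^{(t),*})$ the gap between this static comparator and the per-round minimizers. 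Term $(b)$ is handled exactly as in Theorem \ref{thm: fedavg_static_regr_main}: assuming $F^{(t_0)}(\mathbf{x}^{*}) - F^{(t_0)}(\mathbf{x}^{(t_0),*}) > 2\Delta_{[1,T]}$ for some $t_0$ and applying Definition \ref{defn: model drift} twice shows $\mathbf{x}^{(t_0),*}$ would strictly beat $\mathbf{x}^{*}$ on the cumulative objective, contradicting optimality of $\mathbf{x}^{*}$; hence $F^{(t)}(\mathbf{x}^{*}) - F^{(t)}(\mathbf{x}^{(t),*}) \le 2\Delta_{[1,T]}$ for all $t$ and $(b) \le 2T\Delta_{[1,T]}$.

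For $(a)$ I would run a standard online mirror descent argument per client and then aggregate. Using convexity of each $F_n^{(t)}$, split $F_n^{(t)}(\mathbf{x}^{(t)}) - F_n^{(t)}(\mathbf{x}^{*}) \le \langle \mathbf{x}^{(t)} - \mathbf{x}^{(t+1)}_n, \nabla F_n^{(t)}(\mathbf{x}^{(t)})\rangle + \langle \mathbf{x}^{(t+1)}_n - \mathbf{x}^{*}, \nabla F_n^{(t)}(\mathbf{x}^{(t)})\rangle$. Bound the first inner product by Fenchel--Young plus the $\mu$-Lipschitz hypothesis of Assumption \ref{assumption: convexity_+_lipschitz} as $\tfrac{1}{2\eta_t}\|\mathbf{x}^{(t)} - \mathbf{x}^{(t+1)}_n\|^2 + \tfrac{\eta_t}{2}\mu^2$. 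For the second, invoke the first-order optimality condition of the regularized subproblem $\psi_n(\cdot;\mathbf{x}^{(t)})$ from Eq. \eqref{eqn:bregman_minimize_func_main}, then the three-point identity for Bregman divergences and $1$-strong convexity of $\phi$ to obtain $\langle \mathbf{x}^{(t+1)}_n - \mathbf{x}^{*}, \nabla F_n^{(t)}(\mathbf{x}^{(t)})\rangle \le \tfrac{1}{\eta_t}\big(B_{\phi}(\mathbf{x}^{*};\mathbf{x}^{(t)}) - B_{\phi}(\mathbf{x}^{*};\mathbf{x}^{(t+1)}_n)\big) - \tfrac{1}{2\eta_t}\|\mathbf{x}^{(t)} - \mathbf{x}^{(t+1)}_n\|^2$.

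Then I aggregate: multiply by $p_n^{(t)}$ and sum over $n \in \mathcal{N}$. The two $\pm\tfrac{1}{2\eta_t}\|\mathbf{x}^{(t)}-\mathbf{x}^{(t+1)}_n\|^2$ contributions cancel, and Assumption \ref{assumption: bregman_weighted_main} together with the aggregation rule \eqref{eqn: global_ML_aggr_fedomd_main} gives $\sum_n p_n^{(t)} B_{\phi}(\mathbf{x}^{*};\mathbf{x}^{(t+1)}_n) \ge B_{\phi}(\mathbf{x}^{*};\mathbf{x}^{(t+1)})$, leaving the per-round inequality $F^{(t)}(\mathbf{x}^{(t)}) - F^{(t)}(\mathbf{x}^{*}) \le \tfrac{1}{\eta_t}\big(B_{\phi}(\mathbf{x}^{*};\mathbf{x}^{(t)}) - B_{\phi}(\mathbf{x}^{*};\mathbf{x}^{(t+1)})\big) + \tfrac{\eta_t}{2}\mu^2$. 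Summing over $t$ with the constant step size $\eta_t = 1/\sqrt{T}$ telescopes the Bregman terms to at most $\sqrt{T}\,B_{\phi}(\mathbf{x}^{*};\mathbf{x}^{(1)})$ (dropping the nonnegative $B_{\phi}(\mathbf{x}^{*};\mathbf{x}^{(T+1)})$) and the step-size terms sum to $\tfrac{\mu^2}{2}\sqrt{T}$; adding the bound on $(b)$ yields exactly the claimed inequality. The only non-mechanical point — the ``hard part'' — is the aggregation step: one must check (i) the exact cancellation of the $\|\mathbf{x}^{(t)}-\mathbf{x}^{(t+1)}_n\|^2$ terms between the Fenchel--Young and optimality/three-point pieces, and (ii) that Assumption \ref{assumption: bregman_weighted_main} is precisely the convexity-in-the-second-argument property needed to pass from $\sum_n p_n^{(t)} B_{\phi}(\mathbf{x}^{*};\mathbf{x}^{(t+1)}_n)$ to $B_{\phi}(\mathbf{x}^{*};\mathbf{x}^{(t+1)})$, so that the round-to-round telescoping survives averaging across the DPUs.
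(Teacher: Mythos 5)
Your proposal matches the paper's own proof essentially step for step: the same static/dynamic decomposition with term (b) bounded by the contradiction argument from Theorem \ref{thm: fedavg_static_regr_main}, and term (a) handled via convexity, Fenchel--Young, first-order optimality of $\psi_n$, the three-point identity, strong convexity of $\phi$ (cancelling the $\|\mathbf{x}^{(t)}-\mathbf{x}^{(t+1)}_n\|^2$ terms), Assumption \ref{assumption: bregman_weighted_main} to pass the Bregman term through the weighted aggregation, and telescoping with $\eta_t = 1/\sqrt{T}$. The argument is correct and no gaps remain.
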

Firstly, note that $\rho(t) = \mathcal{O}(\frac{1}{\sqrt{t}})$ for \textit{FedAvg, FedOMD}, which is the worst case regret when the drift term $(*)$ goes to 0 in a perfectly stationary setting as per Eq. \eqref{eq: fedavg_regr_resp1}, \eqref{eq: fedomd_regr_resp1}. So, the RHS becomes $\Tilde{\mathcal{O}}(\sqrt{t})$ for $t$ rounds with $\Delta_{[1, t]} = 0$, resembling RHS of Eq. \eqref{eqn: optimistic_est_2_R24} since $t.\rho(t) = \mathcal{O}(\sqrt{t})$ in this case. Furthermore, we observe that as long as $\Delta_{[1,t]} \leq \rho(t)$ for these baseline algorithms (i.e., \textit{near-stationarity} condition), \textit{dynamic regret} bounds eq. \eqref{eq: fedavg_regr_resp1}, \eqref{eq: fedomd_regr_resp1}, still stays equivalent to eq. \eqref{eqn: optimistic_est_2_R24} with the choice $\Tilde{F}^{(t)} = F^{(t)}(\mathbf{x}^{(t),*})$, and this choice also satisfies eq. \eqref{eqn: optimistic_est_1_R024} trivially. Therefore, Requirement \ref{assmptn: near stationary}  holds for a single instance of these baseline FL algorithms without the need to execute Master-FL algorithmic framework ranging from \textit{perfectly stationary} i.e., $\Delta_{[1, t]} = 0$ to \textit{near-stationary} $\Delta_{[1,t]} \leq \rho(t)$ learning regimes.\\

Our non-stationary detection and adaptation metholodgy gains relevance in the more interesting regime where drifts are large i.e., $\Delta_{[1, t]} > \rho(t)$. Note that, just a single instance of \textit{FedAvg, FedOMD} would end up incurring $\Tilde{O}(t.\Delta_{[1, t]})$ \textit{dynamic regret} irrespective of scale of $\rho(t)$ as suggested by Eq. \eqref{eq: fedavg_regr_resp1}, \eqref{eq: fedomd_regr_resp1}. Especially, in Master-FL (Algorithm \ref{detection}) \textbf{Test 1} and \textbf{Test 2} are executed in conjunction with multi-scale instantiations (Algorithm \ref{scheduling_algo}, \ref{MALG}) over the training horizons, to detect if indeed the environment significantly deviated from \textit{near-stationarity} behavior. In order to see how \textbf{Test 1} and \textbf{Test 2} in Master-FL are connected to Requirement \ref{assmptn: near stationary}, we will first restate them here.
\begin{align}
   & U_t = {\max}_{\tau \in [{ t_{new}}, t]} ~\tilde{F}^{(t)}, \\
   & \text{\underline{\textbf{Test 1:}} Current $\mathcal{A}_t$ is some order $k$ base instance.} \nonumber \\
   & \hspace{1.5cm} U_t \geq \sum_{\tau = \mathcal{A}.s}^{\mathcal{A}.e} ~F^{(t)} + 9\hat{\rho}(2^k), \label{eq: test1}\\
   & \text{\underline{\textbf{Test 2:}}} \nonumber \\
   & \hspace{1.5cm} \frac{1}{t - t_{new} + 1} \sum_{\tau = t_{new}}^{t} [F^{(t)} - \Tilde{F}^{(t)}] \geq 3 \hat{\rho}(t - t_{new} + 1), \label{eq: test2}
\end{align}
where $t_{new}$ is the first timestamp where learning begins after a restart (please to refer to Algorithm \ref{detection} in the manuscript). Note that, \textbf{Test 1} i.e., eq. \eqref{eq: test1}  intuitively acts as a proxy for Eq. \eqref{eqn: optimistic_est_1_R024} which attempts to detect ``sudden" or drastic environment drifts. And, \textbf{Test 2} i.e., eq. \eqref{eq: test2} tries to mimic the second condition of Requirement \ref{assmptn: near stationary}, i.e., Eq. \eqref{eqn: optimistic_est_2_R024} attempting to identify a large cumulative drift that ``gradually" accumulated over the training time. We use proxy tests since conditions of Requirement \ref{assmptn: near stationary} cannot be tested without knowledge of actual environment drifts i.e., $\Delta_{[1,t]}$.\\

Finally, note that \textbf{Test 1} and \textbf{Test 2} requires careful construction of optimistic loss measure sequence $\{\tilde{F}^{(t)}\}_{t \in [0,T]}$ that would also satisfy the conditions of Requirement \ref{assmptn: near stationary}. In Appendix \ref{sec: optimistic_estimator_verification}, we propose how to construct $\tilde{F}^{(t)}$ using history till $t$ for \textit{FedAvg, FedOMD} algorithms and verify the conditions stated in Requirement \ref{assmptn: near stationary}. We emphasize that our algorithmic framework can only be equipped with baseline algorithms for which a suitable $\{\tilde{F}^{(t)}\}_{t \in [0,T]}$ exists that satisfy Requirement \ref{assmptn: near stationary}. We use \textit{FedAvg, FedOMD} as we could construct such a sequence and verify that the conditions of the aforementioned requirement hold. Furthermore, we also mathematically show that our multi-scale instantiations still preserves the conditions of Requirement \ref{assmptn: near stationary} and does not restart learning as long as the environment drifts are within \textit{near stationary} regime ($\Delta_{[1,t]} \leq \rho(t)$) (Results of Lemma \ref{lemma:multi_scale_regr_main}, \ref{lemma: test_trigger_correctness_main} in the main text). The overall theoretical \textit{dynamic regret} analysis presented in Section \ref{sec: key_theory} supports the fact that with the aforementioned tests and suitable construction of $\tilde{F}^{(t)}$  indeed restarts multi-scale learning consistently upon violations on \textit{near-stationarity} in the environment and achieves \textit{sub-linear} convergence rates without knowledge of degree of the drifts.
}
\newpage
\section{Analysis for Multi-Scale Algorithm - Proof of Lemma \ref{lemma:multi_scale_regr_main}} \label{sec: multiscale_analysis}
For our analysis, we consider the base FL instance denoted by $\mathcal{A}$ and $t \in [\mathcal{A}.s, \mathcal{A}.e]$. Furthermore, we reiterate that the cumulative \textit{concept drift} is bounded as $\Delta_{[\mathcal{A}.s,t]} \leq \rho(t')$ as has been specified in the statement of the Lemma. In order to prove the initial claim presented in the Lemma statement, we first highlight that $\Tilde{F}^{(t)}$ corresponds to the optimistic estimator produced by base FL instance $\mathcal{A}'$ which is active during round $t$. We denote this optimistic quantity as $\Tilde{F}^{(t)}_{\mathcal{A}'}$. We note that due to Randomized Scheduling Procedure (Algorithm  \ref{scheduling_algo}), $\mathcal{A}'$ must be initiated within $[\mathcal{A}.s, t]$, in other words $\mathcal{A}'.s \geq \mathcal{A}.s$. Therefore, it is straightforward to see that the drift experienced by $\mathcal{A}'$ is upto $\Delta_{[\mathcal{A}.s,t]} \leq \rho(t')$. Moreover, this is further upper bounded by $\rho(t'')$, with $t''$ being the actual number of rounds when $\mathcal{A}'$ was executed and $\rho(.)$ being a decreasing function. This ensures that the necessary conditions for Assumption \ref{assmptn: near stationary} are true for $\mathcal{A}'$, which subsequently implies that the following holds:
\begin{align}
    \Tilde{F}^{(t)} = \Tilde{F}^{(t)}_{\mathcal{A}'} & \leq \underset{\tau \leq t: \mathcal{A}' ~\text{active at} ~\tau}{\max} ~F^{(\tau)}(\mathbf{x}^{(\tau),*}) + \Delta_{[\mathcal{A}'.s, t]}, \\
    &\leq \underset{\tau \in [\mathcal{A}.s, t]}{\max}F^{(\tau)}(\mathbf{x}^{(\tau),*}) + \Delta_{[\mathcal{A}.s, t]}
\end{align}
This completes the proof of Eq. \eqref{eqn: opt_bound_eq1_main}.\\

We continue our analysis with the aforementioned base FL algorithm instance denoted by $\mathcal{A}$ which is instantiated for execution over the rounds $t \in [\mathcal{A}.s, \mathcal{A}.e]$. 
We now focus on proving the second part of the Lemma statement presented in Eq. \eqref{eqn: opt_bound_eq2_main}. To this end, we first denote the collection of all order $k$ base FL algorithm instances generated within $[\mathcal{A}.s, t]$ by $\mathcal{S}_k$. Therefore, we have:
\begin{align}
    \sum_{\tau = \mathcal{A}.s}^{t} {F}^{(\tau)}(\mathbf{x}^{(\tau)}) ~-  \sum_{\tau = \mathcal{A}.s}^{t} \Tilde{F}^{(\tau)} & = \sum_{\tau = \mathcal{A}.s}^{t} \sum_{k = 0}^{m} \sum_{\Tilde{\mathcal{A}} \in \mathcal{S}_k} \mathbbm{1}[\Tilde{\mathcal{A}} ~\text{is active during round} ~\tau] \Big[ {F}^{(\tau)}_{\Tilde{\mathcal{A}}}(\mathbf{x}^{(\tau)}) -  \Tilde{F}^{(\tau)}_{\Tilde{\mathcal{A}}} \Big], \label{eqn: regr_active_m_1}\\
    & = \sum_{k = 0}^{m} \underbrace{\sum_{\Tilde{\mathcal{A}} \in \mathcal{S}_k} \sum_{\tau = \mathcal{A}.s}^{t} \mathbbm{1}[\Tilde{\mathcal{A}} ~\text{is active during round} ~\tau] \Big[ {F}^{(\tau)}_{\Tilde{\mathcal{A}}}(\mathbf{x}^{(\tau)}) -  \Tilde{F}^{(\tau)}_{\Tilde{\mathcal{A}}} \Big]}_\text{(a)}. \label{eqn: regr_active_m_2}
\end{align}
We note that Eq. \eqref{eqn: regr_active_m_1} is a direct consequence of the fact that if $\Tilde{\mathcal{A}}$ is the unique active FL base algorithm instance during round $\tau$
In the following, we first characterize the bound pertaining to term (a) in Eq. \eqref{eqn: regr_active_m_2} with a fixed order $m$. For our convenience, we denote $|\mathcal{S}_k| = \tilde{k}$ and $\mathcal{S}_m = \{\Tilde{\mathcal{A}}_1, \Tilde{\mathcal{A}}_2, \cdots , \Tilde{\mathcal{A}}_{\tilde{k}}\}$. Furthermore, each FL instance $\Tilde{\mathcal{A}}_i$ is instantiated for the rounds $[\Tilde{\mathcal{A}}_i.s, \Tilde{\mathcal{A}}_i.e]$, and the overlapping intervals for each such instance is denoted by $\mathcal{I}_i \triangleq [\Tilde{\mathcal{A}}_i.s, \Tilde{\mathcal{A}}_i.e] \cap [\mathcal{A}.s, t]$. Also, the cumulative \textit{concept drift} associated with the FL rounds in $\mathcal{I}_i$ is denoted by  $\Delta_{\mathcal{I}_i}$. Now, we bound the aforementioned term (a) as follows:
\begin{align}
     \nonumber \text{(a)} &=  \sum_{i = 1}^{\tilde{k}} \sum_{\tau = \mathcal{A}.s}^{t} \mathbbm{1}[\Tilde{\mathcal{A}}_i ~\text{is active during round} ~\tau] \Big[ {F}^{(\tau)}_{\Tilde{\mathcal{A}}}(\mathbf{x}^{(\tau)}) -  \Tilde{F}^{(\tau)}_{\Tilde{\mathcal{A}}} \Big], \\
    & \label{eqn: term_a_bound1} \leq \sum_{i = 1}^{\tilde{k}} \big(C(|\mathcal{I}_{i}|) +  |\mathcal{I}_{i}| \Delta_{\mathcal{I}_i} \big), \\
    & \label{eqn: term_a_bound2} \leq \tilde{k} \underbrace{C(\min \{2^m, t - \mathcal{A}.s + 1\})}_\text{(b)} + \underbrace{(t - \mathcal{A}.s + 1)\Delta_{[\mathcal{A}.s, t]}}_\text{(c)}.
\end{align}
We note that Eq. \eqref{eqn: term_a_bound1} is a direct consequence of Assumption \ref{assmptn: near stationary}. For verifying Eq. \eqref{eqn: term_a_bound2}, we note that for each FL instance $\Tilde{\mathcal{A}}_i$, we have $|\mathcal{I}_i| \leq \min \{\Tilde{\mathcal{A}}_i.e - \Tilde{\mathcal{A}}_i.s + 1,t - \mathcal{A}.s + 1  \} = \min\{2^k, t - \mathcal{A}.s + 1\}$ and term (b) appears in the bound especially due to $C(\cdot)$ being a increasing function. And, the second term (c) is a result of the fact that $\sum_{i} \Delta_{\mathcal{I}_i} \leq \sum_{\tau = \mathcal{A}.s}^{t} \Delta_{\tau} = \Delta_{[\mathcal{A}.s, t]}$ since intervals $\mathcal{I}_1, \mathcal{I}_2, \cdots, \mathcal{I}_k$ are non-intersecting within $[\mathcal{A}.s, t]$ by algorithm design.

Due to the Randomized Scheduling Procedure (Algorithm \ref{scheduling_algo}), for each order $k$, the expected number of base FL algorithm instances scheduled within the interval $[\mathcal{A}.s, t]$, i.e. $\mathbb{E}\big[|\mathcal{S}_k|\big]$, can be restricted as follows:
\begin{align}
    \mathbb{E}\big[|\mathcal{S}_k|\big] & \leq \frac{\rho(2^m)}{\rho(2^k)}\ceil*{ \frac{t- \mathcal{A}.s + 1}{2^k}}, \\
    & \leq \frac{\rho(2^m)}{\rho(2^k)}\Big[ \frac{t- \mathcal{A}.s + 1}{2^k} + 1 \Big], \\
    & \leq \frac{\rho(2^m)}{\rho(2^k)}\frac{t- \mathcal{A}.s + 1}{2^k} + 1.
\end{align}
Now, as a direct consequence of Bernstein's inequality \cite{bernstein}, with probability $1-\frac{\delta}{T}$, we have:
\begin{align}
    |\mathcal{S}_k| & \leq \mathbb{E}\big[|\mathcal{S}_k|\big] + \sqrt{2\mathbb{E}\big[|\mathcal{S}_k|\big]\log (T/\delta)} + \log (T/\delta), \\ 
    & \leq 2\mathbb{E}\big[|\mathcal{S}_k|\big] + 2 \log (T/\delta), \\
    & \label{eqn: |SM| bound} \leq 2 \Bigg[ \frac{\rho(2^m)}{\rho(2^k)}\frac{t- \mathcal{A}.s + 1}{2^k} + 1 \Bigg] + 2 \log (T/\delta).
\end{align}
Since, $\tilde{k} = |\mathcal{S}_k|$, Eq. \eqref{eqn: term_a_bound2} can be further upper bounded using Eq. \eqref{eqn: |SM| bound} as follows:
\begin{align}
    \text{(a)} & \nonumber  \leq  2 \Bigg[ \frac{\rho(2^m)}{\rho(2^k)}\frac{t- \mathcal{A}.s + 1}{2^k} + 1 \Bigg] C(\min \{2^k, t - \mathcal{A}.s + 1\}) + 2 \log (T/\delta) C(\min \{2^k, t - \mathcal{A}.s + 1\}) \\
     & \hspace{4mm} + (t - \mathcal{A}.s + 1)\Delta_{[\mathcal{A}.s, t]}, \\
    & \label{eqn: term_a_bound3} \leq 2 \Bigg[\frac{C(t- \mathcal{A}.s + 1)}{C(2^k)} + 2\Bigg] \log (T/\delta) C(\min \{2^k, t - \mathcal{A}.s + 1\}) + (t - \mathcal{A}.s + 1)\Delta_{[\mathcal{A}.s, t]}, \\
    & \label{eqn: term_a_bound4} \leq 6C(t- \mathcal{A}.s + 1)\log (T/\delta) + (t - \mathcal{A}.s + 1)\Delta_{[\mathcal{A}.s, t]}. 
\end{align}
We note that Eq. \eqref{eqn: term_a_bound3} is a direct consequence of the fact that $\rho(\cdot)$ is a decreasing function i.e., $\rho(2^m) \leq \rho(t- \mathcal{A}.s + 1)$.  Eq. \eqref{eqn: term_a_bound4} is justified due to $C(\cdot)$ being an increasing function. Next, we replace the upper bound of (a) as derived in Eq. \eqref{eqn: term_a_bound4} in Eq. \eqref{eqn: regr_active_m_2} to obtain the following bound with probability at least $1- \frac{\delta}{T}$:
\begin{align}
   \sum_{\tau = \mathcal{A}.s}^{t} {F}^{(\tau)}(\mathbf{x}^{(\tau)}) ~-  \sum_{\tau = \mathcal{A}.s}^{t} \Tilde{F}^{(\tau)} & \leq \sum_{k = 0}^{m} \big[ 6C(t- \mathcal{A}.s + 1)\log (T/\delta) + (t - \mathcal{A}.s + 1)\Delta_{[\mathcal{A}.s, t]} \big], \\
    & \leq 6(m + 1) C(t- \mathcal{A}.s + 1)\log (T/\delta) + (t - \mathcal{A}.s + 1) (m + 1) \Delta_{[\mathcal{A}.s, t]}. 
\end{align}
This concludes the formal verification of the second claim of the Lemma statement, i.e., Eq. \eqref{eqn: opt_bound_eq2_main}.\\

Now, in order to prove the bound for the number of base FL algorithm instances initiated, we utilize Eq. \eqref{eqn: |SM| bound}, and noting that with probability at least $1- \frac{\delta}{T}$, the following result holds:
\begin{align}
    \sum_{k = 0}^{m} |\mathcal{S}_k| & \leq \sum_{k = 0}^{m} 2 \Bigg[ \frac{\rho(2^m)}{\rho(2^k)}\frac{t- \mathcal{A}.s + 1}{2^k} + 2 \Bigg] \log (T/\delta), \\
    & \label{eqn: num_instances_bound1} \leq 2 \hat{m}\Bigg[ \frac{C(t- \mathcal{A}.s + 1)}{C(1)} + 2 \Bigg] \log (T/\delta), \\
    & \label{eqn: num_instances_bound2} \leq 6\hat{m}\frac{C(t- \mathcal{A}.s + 1)}{C(1)}\log (T/\delta),
\end{align}
where, we leverage $\rho(2^k)2^k = C(2^k) \geq C(1)$ and $\rho(2^m) \leq \rho(t- \mathcal{A}.s + 1)$ to obtain Eq. \eqref{eqn: num_instances_bound1} - \eqref{eqn: num_instances_bound2}.  
\clearpage
\section{Block Regret Analysis - Proof of Lemma \ref{eqn: blk_dyn_regr_main}} \label{Sec: block_dyn_regr_results}
In this section, we consider an arbitrary block of order $m$ for which Master-FL (Algorithm \ref{detection}) is run. We consider this block run for the rounds $[t_m, t_m + 2^m - 1]$. We decompose this single order $m$ block into successive intervals $\mathcal{I}_1 = [s_1, e_1]$, $\mathcal{I}_2 = [s_2, e_2]$, $\cdots$ , $\mathcal{I}_K = [s_K, e_K]$ ($s_1 = t_m, e_i + 1 = s_{i+1}, e_K = t_m + 2^m -1$). Furthermore, all these intervals satisfy:
\begin{align}
    \Delta_{\mathcal{I}_i} \leq \rho(|\mathcal{I}_i|). \label{eqn: delta_rho_reln_1}
\end{align}
Also, we reiterate that with our current set of baseline algorithms \textit{FedAvg}, \textit{FedOMD}, according to Theorem \ref{thm: fedavg_static_regr_main}, \ref{thm: fedomd_convex_regr_main} we can choose $C(t) = t\rho(t) = \min\{c_1\sqrt{t} + c_2, t \}$ since the losses are bounded in $[0,1]$.
\begin{definition} \label{def: block_endpoints_def}
We introduce $1 \leq \ell \leq K$, s.t. $E_m \in \mathcal{I}_{\ell}$ (basically $\ell$ denotes the index of the last stationary interval where the order $n$ block terminates). Also, let $\Tilde{e}_i = \min \{e_i, E_m \}$ and $\Tilde{\mathcal{I}}_i \triangleq [s_i, \Tilde{e}_i ]$ (hence, $|\Tilde{\mathcal{I}}_i| = 0, ~i > \ell$).   
\end{definition}
\begin{definition} \label{def: tau_i_m}
For $i \in \{1,2,\cdots, K\}$ and $k \in {0, 1, 2, ..., m}$, we define:
\begin{align}
    \tau_i(k) \triangleq \min \{\tau \in  \Tilde{\mathcal{I}}_i: ~ \Tilde{F}^{(\tau)} - {F}^{(\tau)}(\mathbf{x}^{(\tau), *})  \geq 12 \hat{\rho}(2^k)\}. 
\end{align}
If such a $\tau$ doesn't exist or $|\Tilde{\mathcal{I}}_i| = 0$, we set $\tau_i(k) = \infty$. Furthermore, for the convenience of our analysis, we introduce $\zeta_i(k) \triangleq [\Tilde{e}_i - \tau_i(k) + 1]_{+}$ (it is the length of the interval $[\tau_i(k), \Tilde{e}_i]$ when $\tau_i(k)$ is not $\infty$).
\end{definition}
In the following, we first prove three auxiliary results, i.e., Lemma \ref{lemma: single_block_regret_lemma-I}, \ref{lemma: masterlemma17}, \ref{lemma: ell_delta_bound} in Section \ref{sec: auxiliary_block_regr}. We use these results to finally prove Lemma \ref{eqn: blk_dyn_regr_main} in Section \ref{sec: block_regr_lemma_subsec}.
\subsection{Auxiliary Block Regret Results} \label{sec: auxiliary_block_regr}
\begin{lemma} \label{lemma: single_block_regret_lemma-I}
Let the high-probability events described in Lemma \ref{lemma:multi_scale_regr_main} hold, then with high probability the following holds for a block $\mathcal{B} = [t_m, E_m]$:
\begin{align}
    & \sum_{\tau = t_m}^{E_m} \big[{F}^{(\tau)}(\mathbf{x}^{(\tau)}) - \Tilde{F}^{(\tau)} \big] \leq 4 \hat{C}(2^m), \label{eqn: block_regr_1} \\
    & \sum_{\tau = t_m}^{E_m} \big[ \Tilde{F}^{(\tau)} - {F}^{(\tau)}(\mathbf{x}^{(\tau), *}) \big] \leq 96\hat{m} \sum_{i = 1}^{\ell} \hat{C}(|\Tilde{\mathcal{I}}_i|) + 60 \sum_{k = 0}^{m} \frac{\rho(2^k)}{\rho(2^m)} \hat{C}(2^k) \log (T/\delta), \label{eqn: block_regr_2}  
\end{align}
which leads to the following block dynamic regret bound:
\begin{align}
   R_{[t_m, E_m]} = \sum_{\tau = t_m}^{E_m} \big[{F}^{(\tau)}(\mathbf{x}^{(\tau)}) - {F}^{(\tau)}(\mathbf{x}^{(\tau), *}) \big] \leq \Tilde{\mathcal{O}} \Big(\sum_{i = 1}^{\ell} C(|\mathcal{I}_{i}|) + \sum_{k = 0}^{m} \frac{\rho(2^k)}{\rho(2^m)} C(2^k) \Big). \label{eqn: single_block_dynamic_reg-I}
\end{align}
\end{lemma}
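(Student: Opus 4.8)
The plan is to begin from the same optimistic split used for the base algorithms in Section~\ref{sec: key_theory}, writing $R_{[t_m,E_m]} = \sum_{\tau=t_m}^{E_m}\bigl[F^{(\tau)}(\mathbf{x}^{(\tau)})-\tilde F^{(\tau)}\bigr] + \sum_{\tau=t_m}^{E_m}\bigl[\tilde F^{(\tau)}-F^{(\tau)}(\mathbf{x}^{(\tau),*})\bigr]$. Once \eqref{eqn: block_regr_1} and \eqref{eqn: block_regr_2} are in hand, \eqref{eqn: single_block_dynamic_reg-I} is immediate, since $\hat C(t)=t\hat\rho(t)$ differs from $C(t)=t\rho(t)$ only by the polylogarithmic factor $6(\log_2 T+1)\log(T/\delta)$ that $\tilde{\mathcal O}(\cdot)$ swallows, $\hat m=\log_2 T+1$ is polylogarithmic, and $|\tilde{\mathcal I}_i|\le|\mathcal I_i|$.

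For \eqref{eqn: block_regr_1} I would simply read the bound off the stopping rule of \textbf{Test~2}. By Definition~\ref{def: block_endpoints_def} an order-$m$ block terminates at the first round where \texttt{Test-1-Flag} or \texttt{Test-2-Flag} becomes $0$, so for every $t$ with $t_m\le t\le E_m-1$ the \textbf{Test~2} condition fails, i.e. $\sum_{\tau=t_m}^{t}\bigl[F^{(\tau)}-\tilde F^{(\tau)}\bigr] < 3(t-t_m+1)\hat\rho(t-t_m+1)=3\hat C(t-t_m+1)\le 3\hat C(2^m)$, using that $\hat C$ is increasing and $t-t_m+1\le 2^m$. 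The one remaining round contributes $F^{(E_m)}(\mathbf{x}^{(E_m)})-\tilde F^{(E_m)}$, which is $\mathcal O(\hat C(1))$ because $F^{(E_m)}\in[0,1]$ while the constructed $\tilde F^{(E_m)}$ (see Appendix~\ref{sec: optimistic_estimator_verification}) lies at most a polylogarithmic amount below $0$ and $\hat\rho(1)\ge 1$; combining with $\hat C(1)\le\hat C(2^m)$ gives the claimed constant $4$.

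The substantive part is \eqref{eqn: block_regr_2}. I would work with the realized near-stationary intervals $\tilde{\mathcal I}_1,\dots,\tilde{\mathcal I}_\ell$ of Definition~\ref{def: block_endpoints_def} and peel the non-negative optimism gap $g^{(\tau)}:=[\tilde F^{(\tau)}-F^{(\tau)}(\mathbf{x}^{(\tau),*})]_+$ dyadically: since $\hat\rho(2^0)\ge\cdots\ge\hat\rho(2^m)$, one has $g^{(\tau)}\le 12\hat\rho(2^m)+\sum_{k=1}^{m}12\bigl(\hat\rho(2^{k-1})-\hat\rho(2^k)\bigr)\mathbbm{1}[g^{(\tau)}\ge 12\hat\rho(2^k)]$, and by Definition~\ref{def: tau_i_m} the set $\{\tau\in\tilde{\mathcal I}_i:g^{(\tau)}\ge 12\hat\rho(2^k)\}$ sits inside $[\tau_i(k),\tilde e_i]$ and so has cardinality at most $\zeta_i(k)$. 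Summing over $\tau$ and then over $i\le\ell$ reduces \eqref{eqn: block_regr_2} to $12\hat\rho(2^m)\sum_i|\tilde{\mathcal I}_i|\le 12\hat\rho(2^m)2^m=12\hat C(2^m)$ plus a bound on $\sum_{k=1}^{m}(\hat\rho(2^{k-1})-\hat\rho(2^k))\sum_{i=1}^{\ell}\zeta_i(k)$. The crux (which I would likely isolate as an auxiliary claim) is that $\zeta_i(k)$ cannot much exceed $2^k$ once $2^k\le|\tilde{\mathcal I}_i|$: were $\zeta_i(k)>2^k$, the window $[\tau_i(k),\tau_i(k)+2^k-1]\subseteq\tilde{\mathcal I}_i$ would, by Algorithm~\ref{scheduling_algo}, with probability $\rho(2^m)/\rho(2^k)$ contain a fully executed order-$k$ instance $\mathcal{A}$; on the near-stationary $\tilde{\mathcal I}_i$, \eqref{eqn: opt_bound_eq2_main} bounds the running-average loss of $\mathcal{A}$ over its horizon by $\tilde{\mathcal O}(\hat\rho(2^k))$ above $U_{\mathcal{A}.e}$, whereas \eqref{eqn: opt_bound_eq1_main} and the definition of $\tau_i(k)$ give $U_{\mathcal{A}.e}\ge F^{(\tau)}(\mathbf{x}^{(\tau),*})+12\hat\rho(2^k)$ for some $\tau$ in the window; chaining these makes the \textbf{Test~1} inequality (lines~15--20 of Algorithm~\ref{detection}, threshold $9\hat\rho(2^k)$) hold at round $\mathcal{A}.e<E_m$, contradicting that no restart precedes $E_m$. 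Hence $\hat\rho(2^k)\zeta_i(k)\le\hat C(2^k)\le\hat C(|\tilde{\mathcal I}_i|)$ when $2^k\le|\tilde{\mathcal I}_i|$, and $\zeta_i(k)\le|\tilde{\mathcal I}_i|$ together with $\hat\rho$ decreasing handles $2^k>|\tilde{\mathcal I}_i|$; the contribution of the at most $\tilde{\mathcal O}(|\tilde{\mathcal I}_i|/2^k)$ windows in which the order-$k$ schedule could have failed, handled by the $|\mathcal S_k|$-concentration of Lemma~\ref{lemma:multi_scale_regr_main}, produces the $\sum_{k=0}^{m}\tfrac{\rho(2^k)}{\rho(2^m)}\hat C(2^k)\log(T/\delta)$ term, and summing the telescoping weights over the $m+1\le\hat m$ levels produces the $96\hat m\sum_{i=1}^{\ell}\hat C(|\tilde{\mathcal I}_i|)$ term.

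I expect the \textbf{Test~1} argument to be the main obstacle: it must weld together the \emph{optimism} property \eqref{eqn: opt_bound_eq1_main} (that $U_t$ is a genuine lower proxy for $\min_{\tau\le t}F^{(\tau)}(\mathbf{x}^{(\tau),*})$ on a near-stationary window), the randomized-scheduling guarantee that a long enough sub-window contains a complete order-$k$ instance with a controllable failure probability, and the numeric constants ($12$ in the threshold defining $\tau_i(k)$ versus $9\hat\rho(2^k)$ in \textbf{Test~1}, plus the extra $\hat m$ inherited from \eqref{eqn: opt_bound_eq2_main}) so that the two estimates genuinely violate the \textbf{Test~1} threshold rather than merely being of the same order. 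The follow-up accounting that turns the ``missing order-$k$ instance'' events across the $\tilde{\mathcal O}(|\tilde{\mathcal I}_i|/2^k)$ scheduling windows into the $\rho(2^k)/\rho(2^m)$-weighted sum is the second place where care is needed.
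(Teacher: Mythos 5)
Your proposal is correct and follows essentially the same route as the paper: Eq.~\eqref{eqn: block_regr_1} is read off the \textbf{Test~2} stopping rule, and Eq.~\eqref{eqn: block_regr_2} is obtained by the same dyadic layer-cake over the thresholds $12\hat\rho(2^k)$ with $\zeta_i(k)$ counting the exceedance set, the same contradiction via a scheduled order-$k$ instance violating \textbf{Test~1} on a near-stationary interval, and the same geometric-trials concentration yielding the $\tfrac{\rho(2^k)}{\rho(2^m)}\hat C(2^k)\log(T/\delta)$ term (the paper isolates this as its Lemma~\ref{lemma: masterlemma17}, exactly as you propose to do with your auxiliary claim). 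The only cosmetic difference is that you peel with telescoping weights $\hat\rho(2^{k-1})-\hat\rho(2^k)$ where the paper uses bracketed indicators, which changes nothing of substance.
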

\begin{proof}
In order to prove Eq. \eqref{eqn: block_regr_1}, we note that as a consequence of \textbf{Test 2} in {Master-FL} (line 16 Algorithm \ref{detection}), the following holds:
\begin{align}
    \sum_{\tau = t_m}^{E_m} \big[{F}^{(\tau)}(\mathbf{x}^{(\tau)}) - \Tilde{F}^{(\tau)} \big] & \leq 3 \hat{C}(E_m - t_m + 1) + 1, \\
    & \leq 4 \hat{C}(2^m).
\end{align}
In order to prove Eq. \eqref{eqn: block_regr_2}, $\forall ~i = 1,2,\cdots, K$, we have:
\begin{align}
    \sum_{\tau \in \Tilde{\mathcal{I}}_i} \big[ \Tilde{F}^{(\tau)} - {F}^{(\tau)}(\mathbf{x}^{(\tau), *}) \big] & \leq 12 \sum_{\tau \in \Tilde{\mathcal{I}}_i} \Big[\mathbbm{1}[\Tilde{F}^{(\tau)} - {F}^{(\tau)}(\mathbf{x}^{(\tau), *}) \leq 12 \hat{\rho}(2^m) ]\hat{\rho}(2^m) \nonumber \\
    & \hspace{5mm} + \sum_{k = 1}^{m}\mathbbm{1}[ \hat{\rho}(2^{k}) \leq \Tilde{F}^{(\tau)} - {F}^{(\tau)}(\mathbf{x}^{(\tau), *}) \leq 12 \hat{\rho}(2^{k-1}) ] \hat{\rho}(2^{k-1})  \nonumber \\
    & \hspace{5mm} + \mathbbm{1}[\Tilde{F}^{(\tau)} - {F}^{(\tau)}(\mathbf{x}^{(\tau), *}) > 12 \hat{\rho}(0) ] \Big], \\
    & \leq 12 \Big[|\Tilde{\mathcal{I}}_i| \hat{\rho}(2^{m})  + \sum_{k = 1}^{m} \hat{\rho}(2^{k-1}) \zeta_i (k)  + \rho(1)\zeta_i(0) \Big], \\
    & \leq 12 |\Tilde{\mathcal{I}}_i| \hat{\rho}(2^{m}) + 24 \sum_{k = 0}^{m} \hat{\rho}(2^{k}) \zeta_i (k).
\end{align}
Summing over all the non-stationary intervals and observing that $\sum_{i = 1}^{\ell} |\Tilde{\mathcal{I}}_i| \leq 2^m$, we obtain:
\begin{align}
    \sum_{\tau = t_m}^{E_m} \big[ \Tilde{F}^{(\tau)} - {F}^{(\tau)}(\mathbf{x}^{(\tau), *}) \big] & \leq 12.2^n\hat{\rho}(2^{m}) + 24\sum_{k = 0}^{m} \sum_{i = 1}^{\ell} \hat{\rho}(2^{k}) \zeta_i (k), \\
    & = 12 \hat{C}(2^m) + 24\sum_{k = 0}^{m} \underbrace{\sum_{i = 1}^{\ell} \hat{\rho}(2^{k}) \zeta_i(k)}_\text{(a)}. \label{eqn: masterlemma16_dynamic_regr_1}
\end{align}
Next, we bound term (a) in Eq. \eqref{eqn: masterlemma16_dynamic_regr_1} for each $k$ as follows:
\begin{align}
    \sum_{i = 1}^{\ell} \hat{\rho}(2^{k}) \zeta_i(k) =  \underbrace{\sum_{i = 1}^{\ell} \hat{\rho}(2^{k}) \min \{\zeta_i(m), 4.2^k\}}_\text{(b)} + \underbrace{\sum_{i = 1}^{\ell} \hat{\rho}(2^{k})[\zeta_i(k) - 4.2^k]_{+}}_\text{(c)}~. \label{eqn: masterlemma16_dynamic_regr_2}
\end{align}
We bound term (b) in Eq. \eqref{eqn: masterlemma16_dynamic_regr_2} as follows:
\begin{align}
    \sum_{i = 1}^{\ell} \hat{\rho}(2^{k}) \min \{\zeta_i(k), 4.2^k\} & \leq 4 \sum_{i = 1}^{\ell} \hat{\rho}(2^{k}) \min \{\zeta_i(k), 2^k\}, \\
    & \leq 4 \sum_{i = 1}^{\ell} \hat{\rho}(\min \{\zeta_i(k), 2^k\}) \min \{\zeta_i(k), 2^k\}, \\
    & = 4 \sum_{i = 1}^{\ell} \hat{C} (\min \{\zeta_i(k), 2^k\}), \\
    & \leq 4 \sum_{i = 1}^{\ell} \hat{C} (|\Tilde{\mathcal{I}}_i|).
\end{align}
We bound the term (c) in Eq. \eqref{eqn: masterlemma16_dynamic_regr_2} in Lemma \ref{lemma: masterlemma17}, thereby obtaining Eq. \eqref{eqn: block_regr_2} - \eqref{eqn: single_block_dynamic_reg-I} as specified in the Lemma statement.
\end{proof}
\begin{lemma} \label{lemma: masterlemma17}
Let the high probability events described in Lemma \ref{lemma:multi_scale_regr_main} hold, then the following holds with high probability:
\begin{align}
    \sum_{i = 1}^{\ell} \hat{\rho}(2^{k})[\zeta_i(k) - 4.2^k]_{+} \leq \frac{2\rho(2^k)}{\rho(2^m)}\hat{C}(2^k) \log (T/\delta).
\end{align}
\end{lemma}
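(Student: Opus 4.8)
Notice first that $\hat\rho(2^{k})[\zeta_i(k)-4\cdot 2^{k}]_{+}$ is nonzero only when $\tau_i(k)<\infty$ and $\zeta_i(k)>4\cdot 2^{k}$, i.e.\ when the ``overshoot interval'' $\mathcal{O}_i:=[\tau_i(k),\Tilde{e}_i]$ is long; here $\mathcal{O}_i\subseteq\Tilde{\mathcal{I}}_i\subseteq\mathcal{I}_i$ and $\mathcal{I}_i$ is near-stationary, $\Delta_{\mathcal{I}_i}\le\rho(|\mathcal{I}_i|)$, by \eqref{eqn: delta_rho_reln_1}. Since an order-$k$ base instance occupies $2^{k}$ consecutive rounds and is only launched at multiples of $2^{k}$, the interval $\mathcal{O}_i$ contains at least $\zeta_i(k)/2^{k}-2\ge[\zeta_i(k)-4\cdot 2^{k}]_{+}/2^{k}$ order-$k$ ``slots'' lying entirely inside it; call these the candidate slots of $\mathcal{O}_i$. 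Writing $p_k:=\rho(2^{m})/\rho(2^{k})$ for the order-$k$ scheduling probability in Algorithm~\ref{scheduling_algo} and using $\hat{C}(2^{k})=2^{k}\hat{\rho}(2^{k})$ and $1/p_k=\rho(2^{k})/\rho(2^{m})$, the claim reduces to proving that, with high probability, $\sum_{i=1}^{\ell}\#\{\text{candidate slots of }\mathcal{O}_i\}\le 2\log(T/\delta)/p_k$.

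Step~1 (a scheduled candidate slot forces a restart inside the block). Suppose an order-$k$ instance $\mathcal{A}$ occupies a candidate slot of $\mathcal{O}_i$, so $[\mathcal{A}.s,\mathcal{A}.e]\subseteq[\tau_i(k),\Tilde{e}_i]\subseteq\mathcal{I}_i$, whence $\Delta_{[\mathcal{A}.s,\mathcal{A}.e]}\le\rho(|\mathcal{I}_i|)\le\rho(2^{k})$ with $|\mathcal{I}_i|\ge 2^{k}$. Since $U_t$ is non-decreasing along the block and $t_{new}=t_m\le\tau_i(k)$, the definition of $\tau_i(k)$ gives $U_{\mathcal{A}.e}\ge\Tilde{F}^{(\tau_i(k))}\ge F^{(\tau_i(k))}(\mathbf{x}^{(\tau_i(k)),*})+12\hat{\rho}(2^{k})$; conversely, conditioning on the events of Lemma~\ref{lemma:multi_scale_regr_main} and applying \eqref{eqn: opt_bound_eq1_main}--\eqref{eqn: opt_bound_eq2_main} to $[\mathcal{A}.s,\mathcal{A}.e]$, together with $\max_{\tau\in\mathcal{I}_i}F^{(\tau)}(\mathbf{x}^{(\tau),*})\le F^{(\tau_i(k))}(\mathbf{x}^{(\tau_i(k)),*})+\rho(2^{k})$ (drift within $\mathcal{I}_i$), bounds the average realized loss $\tfrac{1}{2^{k}}\sum_{\tau=\mathcal{A}.s}^{\mathcal{A}.e}F^{(\tau)}(\mathbf{x}^{(\tau)})$ by $F^{(\tau_i(k))}(\mathbf{x}^{(\tau_i(k)),*})+3\hat{\rho}(2^{k})$ — this is exactly why the constant $12$ in $\tau_i(k)$ and $9$ in \textbf{Test~1} are calibrated. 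Hence the inequality checked by \textbf{Test~1} holds at round $\mathcal{A}.e$, triggering a restart. But by Definition~\ref{def: block_endpoints_def_main} no restart occurs in $[t_m,E_m-1]$, so a scheduled candidate slot can only be one whose right endpoint equals $E_m$ — at most one such, and necessarily the chronologically last candidate slot.

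Step~2 (a single geometric tail). Enumerate the candidate slots over all $i\le\ell$ in chronological order as $\pi_1,\pi_2,\dots$; their positions are a function of the environment drift and of all scheduling randomness \emph{except} the order-$k$ indicators on the $\pi_j$ themselves, because before the block's terminating restart the trajectory depends on those particular order-$k$ instances only through \textbf{Test~1}, and by Step~1 the first $\pi_j$ that is scheduled is exactly what ends the block (any earlier termination only shrinks the $\mathcal{O}_i$'s, which only helps the bound). Thus, letting $X_{\pi_j}$ be i.i.d.\ $\mathrm{Bernoulli}(p_k)$ indicators independent of $\{\pi_j\}$, the realized number of candidate slots is at most $j^{*}:=\min\{j:X_{\pi_j}=1\}$, and $\mathbb{P}(j^{*}>t)=(1-p_k)^{t}\le e^{-p_k t}$ for $t$ not exceeding the total number of candidate slots. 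Choosing $t=\lceil 2\log(T/\delta)/p_k\rceil$ makes the failure probability at most $\delta/T$; a union bound over $k\in\{0,\dots,m\}$ retains ``high probability''. On this event $\sum_{i\le\ell}\#\{\text{candidate slots of }\mathcal{O}_i\}\le 2\log(T/\delta)/p_k$, and combining with the slot count of the first paragraph gives $\sum_{i=1}^{\ell}\hat{\rho}(2^{k})[\zeta_i(k)-4\cdot 2^{k}]_{+}\le\hat{\rho}(2^{k})\cdot 2^{k}\cdot\tfrac{2\log(T/\delta)}{p_k}=\tfrac{2\rho(2^{k})}{\rho(2^{m})}\hat{C}(2^{k})\log(T/\delta)$, which (used to bound term~(c) in the proof of Lemma~\ref{lemma: single_block_regret_lemma-I}) is the asserted inequality.

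The main obstacle is precisely the independence used in Step~2: one must guarantee the random candidate-slot positions $\{\pi_j\}$ are not themselves functions of the Bernoulli indicators being tail-bounded, which is what upgrades the naive per-interval estimate — carrying a lossy factor $\ell$ — to a single geometric tail. Formalizing this (reality coincides with the ``no in-overshoot order-$k$ scheduling'' process up to the first such scheduling, and that scheduling instantly terminates the block) is done as in \cite{wei2021non}. A secondary technicality, also handled there, is verifying that a scheduled order-$k$ instance is the one actually executed — hence the one evaluated by \textbf{Test~1} — at its own end round under the shortest-remaining-length rule, including ties against coinciding higher-order instances ending at the same round.
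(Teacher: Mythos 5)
Your proof is correct and follows essentially the same route as the paper's: you lower-bound the number of aligned order-$k$ slots inside each overshoot interval $[\tau_i(k),\Tilde{e}_i]$ by $[\zeta_i(k)-4\cdot 2^k]_{+}/2^k$, show via the calibration of the constants $12$ and $9$ that any order-$k$ instance scheduled in such a slot and completing before $E_m$ would fire \textbf{Test 1} (the paper's term (b), shown to vanish), and bound the remaining count by the geometric waiting time for the first success at rate $\rho(2^m)/\rho(2^k)$ (the paper's term (a)). Your explicit flagging of the measurability issue in the geometric-tail step is a point the paper's write-up passes over silently, but it does not change the argument.
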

\begin{proof}
\begin{align}
    [\zeta_i(k) - 4.2^k]_{+} = [\Tilde{e}_i - \tau_i(k) + 1 - 4.2^k]_{+} \label{eqn: masterlemma17_eq1}
\end{align}
In the following analysis, we first focus on the quantity: $Q_i$: the total number of rounds in the interval $[\tau_i(k), \Tilde{e}_i - 2.2^k]$ where an order-$k$ base FL algorithm can potentially get scheduled. To this end, we note that the Randomized Scheduling Procedure (Algorithm \ref{scheduling_algo}) allows us to provide a lower bound for $Q_i$ as:
\begin{align}
    Q_i \triangleq \sum_{t \in \mathcal{I}_i} \mathbbm{1} \big[ t \in [\tau_i(k), \Tilde{e}_i - 2.2^k], ~\text{and} ~(t - t_m) ~\text{mod} ~2^k = 0 \big] \geq \frac{[\Tilde{e}_i - \tau_i(k) + 1 - 4.2^k]_{+}}{2^k}, \label{eqn: masterlemma17_eq2}
\end{align}
We note that RHS of Eq. \eqref{eqn: masterlemma17_eq2} is precisely the term we are interested in the Lemma as corroborated by Eq. \eqref{eqn: masterlemma17_eq1}. Henceforth, we aim to provide an upper bound for LHS of Eq. \eqref{eqn: masterlemma17_eq2}. Formally, we define the following set of events on the FL round index $t$:
\begin{align*}
    & W_t = \{t \in \mathcal{I}_i ~\text{and} ~t \in [\tau_i(k), e_i - 2.2^k] \}, \nonumber \\
    & X_t = \{t \leq E_m -2.2^k \}, \nonumber \\
    & Y_t = \{t \leq E_m  ~\text{and} ~(t - t_m) ~\text{mod} ~2^k = 0\}, \nonumber \\
    & Z_t = \{ \exists ~\text{order}-k ~\text{base FL algorithm with start point scheduled at} ~t \}, \nonumber \\
    & V_t = \{ \exists \tau \in [t_n, t] ~\text{s.t.} W_{\tau} \cap Y_{
    \tau} \cap Z_{\tau} \neq \emptyset \}, \nonumber \\
\end{align*}
Therefore, we can express the summation of quantity $Q_i$ over $i \in [\ell]$ as:
\begin{align}
    \sum_{i = 1}^{\ell} Q_i = \sum_{i = 1}^{K} Q_i = \sum_{t = t_m}^{t_m + 2^m - 1} \mathbbm{1} \big[W_t \cap X_t \cap Y_t \big] \leq \underbrace{\sum_{t = t_m}^{t_m + 2^m - 1} \mathbbm{1} \big[W_t \cap Y_t \cap \overline{V}_t \big]}_\text{(a)} + \underbrace{\sum_{t = t_m}^{t_m + 2^m - 1} \mathbbm{1} \big[X_t \cap V_t \big]}_\text{(b)}. \label{eqn: masterlemma17_eq3}
\end{align}
For term (a), we first highlight that $Z_t$ happens with probability $\frac{\rho(2^m)}{\rho(2^k)}$ given $W_t \cap Y_t$ owing to Randomized Scheduling Procedure (Algorithm \ref{scheduling_algo}). Hence, term (a) is the count of number of trials needed for the first order-$k$ algorithm to be scheduled with probability of success being $\frac{\rho(2^m)}{\rho(2^k)}$. Hence, with probability $1-\frac{\delta}{T}$, we have the following bound for term (a):
\begin{align}
    \sum_{t = t_m}^{t_m + 2^m - 1} \mathbbm{1} \big[W_t \cap Y_t \cap \overline{V}_t \big] \leq 1 + \frac{\log (T/\delta)}{ - \log \big(1- \frac{\rho(2^m)}{\rho(2^k)} \big)} \leq \frac{2\rho(2^k)}{\rho(2^m)} \log (T/\delta). \label{eqn: masterlemma17_eq7}
\end{align}
In order to bound term (b) in Eq. \eqref{eqn: masterlemma17_eq3}, we first note that event $V_t$ corresponds to existence of an order-$k$ base FL algorithm $\mathcal{A}$ such that $\mathcal{A}.s = t^{*}, ~t^{*} \leq t$ and $\tau_i (k) \leq t^{*} \leq e_i -2.2^k$. Furthermore, $\mathcal{A}.e = \mathcal{A}.s + 2^k - 1 = t^{*} + 2^k - 1 \leq e_i - 2^k - 1 \leq e_i$, which implies $[\mathcal{A}.s, \mathcal{A}.e] \subseteq \mathcal{I}_i$. Consequently, $X_t \cap V_t$ mean that $\mathcal{A}.e = \mathcal{A}.s + 2^k - 1 \leq t + 2^k - 1 < E_m$, hence at $t = \mathcal{A}.e$ the larger order-$m$ block is still running. As a result, \textbf{Test 1} is performed within the ongoing order-$m$ block. Due to Lemma \ref{lemma:multi_scale_regr_main}, the following holds:
\begin{align}
    \frac{1}{2^k} \sum_{\tau = \mathcal{A}.s}^{\mathcal{A}.e} F^{(\tau)}(x^{(\tau)}) & \leq \frac{1}{2^k} \sum_{\tau = alg.s}^{alg.e} \Tilde{F}^{(\tau)} + \hat{\rho}(2^k) + \hat{n}\Delta_{[alg.s, alg.e]}, \\
    & \leq \underset{\tau \in \mathcal{I}_i}{\max} ~F^{(\tau)}(x^{(\tau),*}) + \hat{\rho}(2^k) + (\hat{n} + 1)\Delta_{\mathcal{I}_i}, \label{eqn: masterlemma17_eq4} \\
    & = F^{(\tau_{max})}(x^{(\tau_{max}),*}) + \hat{\rho}(2^k) + (\hat{m} + 1)\Delta_{\mathcal{I}_i}, \\
    & \leq F^{(\tau_{max})}(x^{*})+ \hat{\rho}(2^k) + (\hat{m} + 1)\Delta_{\mathcal{I}_i}, \\
    & \leq F^{(\tau_i(k))}(x^{*}) + \hat{\rho}(2^k) + (\hat{m} + 2)\Delta_{\mathcal{I}_i}, \label{eqn: masterlemma17_eq5} \\
    & \leq F^{(\tau_i(k))}(x^{(\tau_i(k)),*}) + \hat{\rho}(2^k) + (\hat{m} + 4)\Delta_{\mathcal{I}_i}, \label{eqn: masterlemma17_eq5_}
\end{align}
where Eq. \eqref{eqn: masterlemma17_eq4} is a consequence of $[\mathcal{A}.s, \mathcal{A}.e] \subseteq \mathcal{I}_i$ and Eq. \eqref{eqn: masterlemma17_eq5} holds due to the fact that $|\underset{\tau \in \mathcal{I}_i}{\max}~F^{(\tau)}(x^{*}) - F^{(\tau_i(k))}(x^{*})| \leq \Delta_{\mathcal{I}_i}$. Eq. \eqref{eqn: masterlemma17_eq5_} holds due to the exact same set of arguments presented detailed in Theorem \ref{thm: fedavg_static_regr_main} via Eq. \eqref{eqn: term b 1_main} - \eqref{eqn: term b 5_main}.\\

In Eq. \eqref{eqn: masterlemma17_eq5_} using the definition of $\tau_i (k)$ (see Definition \ref{def: tau_i_m}) and the fact that $\Delta_{\mathcal{I}_i} \leq \rho(|\mathcal{I}_i|) \leq \rho(2^k) \leq \frac{\hat{\rho}(2^k)}{6\hat{m}}$, we obtain:
\begin{align}
   & \frac{1}{2^k} \sum_{\tau = \mathcal{A}.s}^{\mathcal{A}.e} F^{(\tau)}(\mathbf{x}^{(\tau)})  \leq \Tilde{F}^{(\tau_i (k))} - 10 \hat{\rho}(2^k), \\
    & \frac{1}{2^k} \sum_{\tau = \mathcal{A}.s}^{\mathcal{A}.e} F^{(\tau)}(\mathbf{x}^{(\tau + 1)}) + 10 \hat{\rho}(2^k) \leq U_{\mathcal{A}.e}, \label{eqn: masterlemma17_eq6}
\end{align}
Eq. \eqref{eqn: masterlemma17_eq6} is owed to $\mathcal{A}.e \geq \tau_i(k)$ and by definition of $U_{\tau}$ provided in {Master-FL}. We highlight that Eq. \eqref{eqn: masterlemma17_eq6} must trigger a restart at time $\mathcal{A}.e < E_m$ due to \textbf{Test 1} in Master-FL rendering  $\mathbbm{1} \big[X_t \cap V_t \big] = 0$. Hence, combining the results in Eq. \eqref{eqn: masterlemma17_eq1} - \eqref{eqn: masterlemma17_eq7}, we finally obtain :
\begin{align}
    \sum_{i = 1}^{\ell} \hat{\rho}(2^{k})[\zeta_i(k) - 4.2^k]_{+} &= \sum_{i = 1}^{\ell} \hat{\rho}(2^{k}) [\Tilde{e}_i - \tau_i(k) + 1 - 4.2^k]_{+}, \\
    & \leq \hat{\rho}(2^{k}) 2^k \sum_{i = 1}^{\ell} Q_i, \\
    & = \hat{C}(2^k) \sum_{i = 1}^{\ell} Q_i, \\
    & \leq \frac{2 \rho(2^k)}{\rho(2^m)} \hat{C}(2^k) \log (T/\delta).
\end{align}
This proves the result as in the statement of the Lemma.
\end{proof}
\clearpage
\begin{lemma} \label{lemma: ell_delta_bound}
Let $\mathcal{B} = [t_m, E_m]$ denotes the order-$m$ block for which {Master-FL} (Algorithm \ref{detection}) is executed. Then $\ell$ as described in Definition \ref{def: block_endpoints_def} satisfies:
\begin{align}
  & \ell \leq L_{\mathcal{B}}, \label{eqn: ell_L_bound}\\
    & \mathcal{\ell} \leq 1 + 2 c_{1}^{-\frac{2}{3}}\Delta_{\mathcal{B}}^{\frac{2}{3}}{|\mathcal{B}|}^{\frac{1}{3}} + \Delta_{\mathcal{B}}, \label{eqn: ell_Delta_bound}
\end{align}
\end{lemma}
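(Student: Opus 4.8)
The plan is to regard $\ell$ as, up to an additive constant, the number of maximal near-stationary sub-intervals needed to tile the order-$m$ block $\mathcal{B}=[t_m,E_m]$, and to bound that number twice: once by $L_{\mathcal{B}}$ and once by a function of $\Delta_{\mathcal{B}}$. Without loss of generality I would take the $\mathcal{I}_i$ to be the greedily left-extended partition, i.e.\ $\mathcal{I}_i$ is as long as possible subject to $\Delta_{\mathcal{I}_i}\le\rho(|\mathcal{I}_i|)$; because the near-stationarity predicate is closed under taking prefixes ($\Delta_{[a,b']}\le\Delta_{[a,b]}\le\rho(b-a+1)\le\rho(b'-a+1)$ for $b'\le b$), greedy extension produces the partition with the fewest near-stationary pieces, which by Cauchy--Schwarz is the one that minimizes the quantity $\sum_i C(|\mathcal{I}_i|)$ charged by Lemma \ref{lemma: single_block_regret_lemma-I}. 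I would then exploit $C(t)=t\rho(t)=\min\{c_1\sqrt t+c_2,\,t\}$, the monotonicity of $\rho$ and $C$ (Definition \ref{defn: rho_def}), and the fact that $\Delta_\tau=0$ forces $F^{(\tau)}\equiv F^{(\tau-1)}$ (Definition \ref{defn: model drift}).

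For \eqref{eqn: ell_L_bound}: between two consecutive non-zero-drift rounds the global loss is constant, so each maximal drift-free run of rounds is automatically near-stationary. Cutting $\mathcal{B}$ at its non-zero-drift rounds therefore gives a near-stationary tiling into at most $L_{\mathcal{B}}+1$ pieces, and since the greedy partition uses no more pieces than any other near-stationary partition, $\ell\le L_{\mathcal{B}}+1$. A more careful accounting removes the additive $1$: for each $i<\ell$ the maximality of $\mathcal{I}_i$ forces $\Delta_{[s_i,\,s_{i+1}]}>\rho(|\mathcal{I}_i|+1)>0$, so $[s_i,s_{i+1}]$ contains a non-zero-drift round, and comparing $\Delta_{\mathcal{I}_i}\le\rho(|\mathcal{I}_i|)$ with $\Delta_{\mathcal{I}_{i+1}}\le\rho(|\mathcal{I}_{i+1}|)$ shows these witnesses can be taken pairwise distinct, charging $\ell-1$ against the $L_{\mathcal{B}}$ drift rounds.

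For \eqref{eqn: ell_Delta_bound} I would run a volume argument. Maximality gives, for every $i<\ell$, $\Delta_{[s_i,\,e_i+1]}>\rho(|\mathcal{I}_i|+1)$; the $\mathcal{I}_i$ are disjoint and the augmenting rounds $e_i+1=s_{i+1}$ are distinct, so summing yields $\sum_{i<\ell}\rho(|\mathcal{I}_i|+1)=O(\Delta_{\mathcal{B}})$. I would split the indices $i<\ell$ into the "short'' ones, where $C(|\mathcal{I}_i|+1)=|\mathcal{I}_i|+1$ and hence $\rho(|\mathcal{I}_i|+1)=1$ so that each contributes at least $1$ to the sum — giving $O(\Delta_{\mathcal{B}})$ of them — and the "long'' ones, for which $\rho(|\mathcal{I}_i|+1)\ge c_1/\sqrt{|\mathcal{I}_i|+1}$ so that $\sum_i|\mathcal{I}_i|^{-1/2}=O(\Delta_{\mathcal{B}}/c_1)$. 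Feeding this into the elementary counting fact that, subject to $\sum_i n_i^{-1/2}\le S$ and $\sum_i n_i\le N$, the number of terms is at most $S^{2/3}N^{1/3}$ (extremal when all $n_i$ are equal), together with $\sum_i|\mathcal{I}_i|\le|\mathcal{B}|$, bounds the number of long intervals by $O\big(c_1^{-2/3}\Delta_{\mathcal{B}}^{2/3}|\mathcal{B}|^{1/3}\big)$; adding the short intervals and the terminal interval $\mathcal{I}_\ell$ produces \eqref{eqn: ell_Delta_bound}.

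The main obstacle will be the bookkeeping inside \eqref{eqn: ell_Delta_bound}: cleanly separating the $C(t)=t$ and $C(t)=c_1\sqrt t+c_2$ regimes of $C$, absorbing the $+1$ shift in $\rho(|\mathcal{I}_i|+1)$ without loosening exponents, and carrying out the power-mean optimization so that the final bound lands on exactly $\Delta_{\mathcal{B}}^{2/3}|\mathcal{B}|^{1/3}$ with the stated constants. By contrast \eqref{eqn: ell_L_bound} is essentially immediate once the observation "the global loss is constant between drift rounds'' is in place, the only delicacy being the injectivity of the boundary-to-drift-round charging required to avoid a spurious additive $1$.
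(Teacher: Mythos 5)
Your proposal is correct and follows essentially the same route as the paper: for \eqref{eqn: ell_L_bound} the paper likewise observes that cutting $\mathcal{B}$ into maximal stationary runs yields a valid near-stationary partition, and for \eqref{eqn: ell_Delta_bound} it uses exactly your greedy left-maximal partition, lower-bounds $\Delta_{\mathcal{B}}$ by $\sum_{i<\ell}\rho(|\mathcal{I}_i|+1)$, splits indices into those where the $\min$ saturates at $1$ (at most $\Delta_{\mathcal{B}}$ of them) and those where $\rho\ge \tfrac{c_1}{2}|\mathcal{I}_i|^{-1/2}$, and closes with H\"older's inequality, which is precisely your ``extremal counting fact.'' The only caveat is your attempt to shave the additive $1$ in $\ell\le L_{\mathcal{B}}+1$: that cannot succeed in general (a fully stationary block has $\ell=1$, $L_{\mathcal{B}}=0$), and the paper's own statement of \eqref{eqn: ell_L_bound} silently carries the same off-by-one, which is immaterial at the $\Tilde{\mathcal{O}}$ level.
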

where $t_m, E_m, \ell$ are as described in Definition \ref{def: block_endpoints_def} and $L_{\mathcal{B}}, \Delta_{\mathcal{B}}$ follows the definition of $L, \Delta$ (see Definition \ref{defn: model drift}, \ref{defn: num_drifts}) for interval $\mathcal{B}$. 
\begin{proof}
In order to verify the first claim in the Lemma statement, it is straightforward to check that one possible approach to ensure that Eq. \eqref{eqn: delta_rho_reln_1} holds true is to construct each such $\mathcal{I}_i$ as stationary. Therefore, each such interval will have $\Delta_{\mathcal{I}_i} = 0$. Hence, such construction directly implies $\ell \leq L_{\mathcal{B}}$.   
\\

In order to corroborate the second claim of the Lemma, we note that $\mathcal{B}$ can be alternatively partitioned into constituent stationary intervals $\mathcal{I}_1, \mathcal{I}_2, \cdots, \mathcal{I}_{\ell}$ where $\mathcal{I}_i = [s_i, e_i]$ such that $\Delta_{[s_i, e_i]} \leq \rho (e_i - s_i + 1)$, $\Delta_{[s_i, e_i + 1]} \geq \rho (e_i - s_i + 2)$, except for the last interval i.e. $i \leq \ell - 1$. Therefore, by definition of $\Delta_{\mathcal{B}}$, we have:
\begin{align}
    \Delta_{\mathcal{B}} &\geq \sum_{i = 1}^{\ell - 1} \Delta_{[s_i, e_i]}, \\
    & \geq \sum_{i = 1}^{\ell - 1} \rho (e_i - s_i + 2), \\
    & \geq \sum_{i = 1}^{\ell - 1} \min \{c_1{(e_i - s_i + 2)}^{- \frac{1}{2}} , 1\}, \\
    & \geq \sum_{i = 1}^{\ell - 1} \min \{\frac{c_1}{2}{(e_i - s_i + 1)}^{- \frac{1}{2}} , 1\}, \\
    & = \frac{c_1}{2} \sum_{i = 1}^{{\ell}_1}{(e_i - s_i + 1)}^{- \frac{1}{2}} + \sum_{i = 1}^{{\ell}_2} 1 ~. \label{eqn: ell_bound_part1}
\end{align}
where, we partition $\ell_1 + \ell_2 = \ell - 1$. Since, each term in \eqref{eqn: ell_bound_part1} is bounded by $\Delta_{\mathcal{B}}$, we have $\ell_2 \leq \Delta_{\mathcal{B}}$. Now, by leveraging Holder's inequality, we get:
\begin{align}
    \ell_1 &\leq \Big(\sum_{i = 1}^{\ell_1} (e_1 - s_1 + 1)^{-\frac{1}{2}} \Big)^{\frac{2}{3}} \Big(\sum_{i = 1}^{\ell_1} (e_1 - s_1 + 1) \Big)^{\frac{1}{3}}, \\
    & \leq 2 c_{1}^{-\frac{2}{3}}\Delta_{\mathcal{B}}^{\frac{2}{3}}{|\mathcal{B}|}^{\frac{1}{3}}.
\end{align}
Combining bounds for $\ell_1$, $\ell_2$ and using the fact that $\ell_1 + \ell_2 = \ell - 1$, we get the bound for $\ell$ as follows:
\begin{align}
    \mathcal{\ell} \leq 1 + 2 c_{1}^{-\frac{2}{3}}\Delta_{\mathcal{B}}^{\frac{2}{3}}{|\mathcal{B}|}^{\frac{1}{3}} + \Delta_{\mathcal{B}}.
\end{align}
\end{proof}
\subsection{Proof of Lemma \ref{eqn: blk_dyn_regr_main}} \label{sec: block_regr_lemma_subsec}
First, we note that this Lemma extends the results produced in Lemma \ref{lemma: single_block_regret_lemma-I}, wherein we provide more concrete bounds for \textit{dynamic regret} of each order $m$ block while precisely tying to non-stationary measures, i.e., $L, \Delta$. To this end, using single block \textit{dynamic regret} result stated in lemma \ref{lemma: single_block_regret_lemma-I} pertaining to {Master-FL} as Eq. \eqref{eqn: single_block_dynamic_reg-I}, we have:
\begin{align}
    {R}_{\mathcal{B}} \leq \Tilde{\mathcal{O}} \Big(\underbrace{\sum_{i = 1}^{\ell} C(|\mathcal{I}_{i}|)}_{\text{(a)}} + \underbrace{\sum_{k = 0}^{m} \frac{\rho(2^k)}{\rho(2^m)} C(2^k)}_{\text{(b)}} \Big). \label{eqn: single_block_dynamic_reg}
\end{align}
Next, we proceed to separately bound terms (a), (b) in \eqref{eqn: single_block_dynamic_reg}. Firstly, we bound term (a) as follows:
\begin{align}
    \Tilde{\mathcal{O}} \Big({\sum_{i = 1}^{\ell} C(|\mathcal{I}_{i}|)}\Big) & = \Tilde{\mathcal{O}} \Big(\sum_{i = 1}^{\ell} \min \{c_1\sqrt{|\mathcal{I}_{i}|} + c_2, t \} \Big), \\
    & \leq \Tilde{\mathcal{O}}\Big(\sum_{i = 1}^{\ell} (c_1\sqrt{|\mathcal{I}_{i}|} + c_2) \Big), \\
    & \leq \Tilde{\mathcal{O}}\Big(c_1 \sqrt{\ell |\mathcal{I}_{i}|} + c_2\ell \Big). \label{eqn: term_a_bound}
\end{align}
Plugging in \eqref{eqn: ell_L_bound} into \eqref{eqn: term_a_bound}, we obtain ${R}_{L}(\mathcal{B})$. Using  \eqref{eqn: ell_Delta_bound} we obtain ${R}_{\Delta}(\mathcal{B})$.
Hence, we have:
\begin{align}
    \Tilde{\mathcal{O}} \Big({\sum_{i = 1}^{\ell} C(|\mathcal{I}_{i}|)}\Big) \leq \Tilde{\mathcal{O}} \Big( \min \Big\{ {R}_{L}(\mathcal{B}), {R}_{\Delta}(\mathcal{B}) \Big \} \Big). \label{eqn: term_a_final}
\end{align}
Now, in order to bound term (b), we proceed as follows:
\begin{align}
    \frac{\rho(2^k)}{\rho(2^m)} C(2^k) &= \frac{C(2^k)^2}{C(2^m)}2^{m-k}, \\
    & = \mathcal{O} \Big(\frac{\min\{c_1^2 2^k + c_2^2,2^{2k} \}}{c_1 2^{m/2} + c_2}2^{m-k} + \frac{\min\{c_1^2 2^k + c_2^2,2^{2k} \}}{2^{m}}2^{m-k} \Big), \\
    & = \mathcal{O}\Big(c_1 2^{m/2} + \min \{ \frac{c_2^2}{c_1}2^{m/2 - k}, \frac{1}{c_1}2^{m/2 + k} \}  + c_2^2 2^{-k}  \Big), \\
    & = \mathcal{O} \Big(c_1 2^{m/2} + \frac{c_2}{c_1}2^{m/2}  + c_2^2 2^{-k} \Big). \label{eqn: term_b_bound}
\end{align}
Hence, for term (b), we obtain the following bound:
\begin{align}
     \Tilde{\mathcal{O}}\Big( \sum_{k = 0}^{m} \frac{\rho(2^k)}{\rho(2^m)} C(2^k) \Big) \leq \Tilde{\mathcal{O}} \Big(\Big( c_1 + \frac{c_2}{c_1} \Big) 2^{m/2} + c_{2}^{2}  \Big). \label{eqn: term_b_final}
\end{align}
Combining \eqref{eqn: term_a_final}, \eqref{eqn: term_b_final}, we obtain the final expression for \textit{dynamic regret} for block $\mathcal{B}$ as:
\begin{align}
   {R}_{\mathcal{B}} \leq \Tilde{\mathcal{O}} \Big( \min \Big\{ {R}_{L}(\mathcal{B}), {R}_{\Delta}(\mathcal{B}) \Big \} + \Big( c_1 + \frac{c_2}{c_1} \Big) 2^{m/2} + c_{2}^{2} \Big).
\end{align}
\clearpage
\section{Single Epoch Regret Analysis, Correctness of Stationarity tests and Bound on epochs} \label{sec: epoch_regr_analysis_desc}
In Section \ref{lemma: single_epoch_regr_discussion}, we derive the \textit{dynamic regret} bound for a single epoch as claimed in Lemma \ref{lemma: single_epoch_regr_main}. Next, in Section \ref{sec: test_trigger_correctness_desc}, we prove the statistical correctness of \textbf{Test 1} and \textbf{Test 2} triggering events as stated in Lemma \ref{lemma: test_trigger_correctness_main}. In Section \ref{sec: bound_epoch_num_main_desc}, we formally bound the number of epochs within $T$ FL rounds as asserted by Lemma \ref{lemma: bound_epoch_num_main}. 
\subsection{Proof of Lemma \ref{lemma: single_epoch_regr_main}} \label{lemma: single_epoch_regr_discussion}
Let, $\mathcal{B}_1, \mathcal{B}_2, \cdots, \mathcal{B}_m$ be the blocks contained in epoch $\mathcal{E}$. Clearly, $|\mathcal{E}| = \Theta(2^m)$. Therefore, as a consequence of Lemma \ref{eqn: blk_dyn_regr_main}, the \textit{dynamic regret} of epoch $\mathcal{E}$ is bounded as:
\begin{align}
   {R}_{\mathcal{E}} \leq \Tilde{\mathcal{O}} \Big(\underbrace{\min \{\sum_{k = 0}^{m} {R}_{L}(\mathcal{B}_k), \sum_{k = 0}^{m} {R}_{\Delta}(\mathcal{B}_k) \}}_\text{(a)} + \underbrace{\Big(c_1 + \frac{c_2}{c_1} \Big)\sum_{k = 0}^{m} 2^{k/2} + \sum_{k = 0}^{m} c_2^2}_\text{(b)} \Big). \label{eqn: single_epoch_regr_eq1}
\end{align}
Using Holder's inequality, term (a) in Eq. \eqref{eqn: single_epoch_regr_eq1} can be bounded as follows:
\begin{align}
    \sum_{k = 0}^{m} {R}_{L}(\mathcal{I}_k) &\leq c_1 \sqrt{ \Big( \sum_{k = 0}^{m} L_{\mathcal{B}_k} \Big) \Big( \sum_{k = 0}^{m} |\mathcal{B}_k| \Big)} + c_2 \sum_{k = 0}^{m} L_{\mathcal{B}_k}, \label{eqn: single_epoch_regr_eq2}\\
    & \leq c_1 \sqrt{(L_{\mathcal{E}} + m) |\mathcal{E}|} + c_2(L_{\mathcal{E}} + m), \\
    &\leq \Tilde{\mathcal{O}} \Big(c_1 \sqrt{L_{\mathcal{E}} |\mathcal{E}|} + c_2 L_{\mathcal{E}} \Big), \\
    & = \Tilde{\mathcal{O}} \Big( {R}_{L}(\mathcal{E}) \Big).
\end{align}
Whereas, for the $R_{\Delta}(\cdot)$ component in term (a) of Eq. \eqref{eqn: single_epoch_regr_eq1}, we obtain:
\begin{align}
    \sum_{k = 0}^{m} {R}_{\Delta}(\mathcal{B}_k) = \Tilde{\mathcal{O}} \Big( {R}_{\Delta}(\mathcal{E}) \Big).
\end{align}
Since $m = \mathcal{O}(\log T)$, term (b) in Eq. \eqref{eqn: single_epoch_regr_eq1} can be bounded as follows:
\begin{align}
    \Big(c_1 + \frac{c_2}{c_1} \Big)\sum_{k = 0}^{m} 2^{k/2} + \sum_{k = 0}^{m} c_2^2 &= \Tilde{\mathcal{O}} \Big( \Big(c_1 + \frac{c_2}{c_1} \Big)2^{m/2} + {c_2^2} \Big), \\
    & = \Tilde{\mathcal{O}} \Big( \Big(c_1 + \frac{c_2}{c_1} \Big)\sqrt{|\mathcal{E}|} + {c_2^2} \Big). \label{eqn: single_epoch_regr_eq3}
\end{align}
Hence, using bounds for terms (a) and (b) as obtained via Eq. \eqref{eqn: single_epoch_regr_eq2}-\eqref{eqn: single_epoch_regr_eq3} in Eq. \eqref{eqn: single_epoch_regr_eq1}, we get the \textit{dynamic regret} associated with a single epoch run bounded as:
\begin{align}
    {R}_{\mathcal{E}} \leq \Tilde{\mathcal{O}} \Big(\min \Big\{ {R}_{L}(\mathcal{E}), {R}_{\Delta}(\mathcal{E}) \Big \} + (c_1+ \frac{c_2}{c_1}) \sqrt{|\mathcal{E}|} + c_2^2 \Big).
\end{align}
\subsection{Proof of Lemma \ref{lemma: test_trigger_correctness_main}} \label{sec: test_trigger_correctness_desc}
In this proof, we aim towards verifying that \textbf{Test 1} does not fail with high probability. Let $t = \mathcal{A}.e$ where $\mathcal{A}$ is any order-$k$ base FL algorithm instance triggered within a block of order-$m$. Furthermore, we denote the start time of this particular order-$m$ block as $t_m$. Therefore, with probability $1- \frac{\delta}{T}$,
\begin{align}
   U_{t} &= \underset{\tau \in [t_m, t]}{\max} \Tilde{F}^{(\tau)}, \\
   & \leq \underset{\tau \in [t_m, t]}{\max} F^{(\tau)}(\mathbf{x}^{(\tau),*}) + \Delta_{[t_m, t]}, \label{eqn: test_trigger_correctness_0}\\
   & = F^{(\tau_{max})}(\mathbf{x}^{(\tau_{max}),*}) + \Delta_{[t_m, t]}, \\
   & \leq F^{(\tau_{max})}(\mathbf{x}^{*}) + \Delta_{[t_m, t]}, \label{eqn: test_trigger_correctness_01} \\
   & \leq \frac{1}{2^m} \sum_{\tau = \mathcal{A}.s}^{t} F^{(\tau)}(\mathbf{x}^{*}) + 2\Delta_{[t_m, t]},  \label{eqn: test_trigger_correctness_02}\\
   & \leq \frac{1}{2^m} \sum_{\tau = \mathcal{A}.s}^{t} F^{(\tau)}(\mathbf{x}^{(\tau), *}) + 4\Delta_{[t_m, t]}, \label{eqn: test_trigger_correctness_1} \\
   & \leq \frac{1}{2^k} \sum_{\tau = \mathcal{A}.s}^{t} F^{(\tau)}(\mathbf{x}^{(\tau)}) + 2\sqrt{\frac{\log (T/\delta)}{2^k}} + 4\rho(t - t_0 + 1), \label{eqn: test_trigger_correctness_2}\\
   & \leq \frac{1}{2^k} \sum_{\tau = \mathcal{A}.s}^{t} F^{(\tau)}(\mathbf{x}^{(\tau)}) + \hat{\rho}(2^k) + 4\rho(t - t_0 + 1), \label{eqn: test_trigger_correctness_3}\\
   & \leq \frac{1}{2^k} \sum_{\tau = \mathcal{A}.s}^{t} F^{(\tau)}(\mathbf{x}^{(\tau)}) + 2\hat{\rho}(2^k), \label{eqn: test_trigger_correctness_4}
\end{align}

where Eq. \eqref{eqn: test_trigger_correctness_0} is a direct consequence of Eq. \eqref{eqn: opt_bound_eq1_main} in Lemma \ref{lemma:multi_scale_regr_main}. Furthermore, Eq. \eqref{eqn: test_trigger_correctness_01} holds because of the optimality of the instantaneous comparators $\{\mathbf{x}^{(\tau,*)}\}$ over static comparator $\mathbf{x}^{*}$. Subsequently, we use the definition of $\Delta$ (see Definition \ref{defn: model drift}) thereby obtaining Eq. \eqref{eqn: test_trigger_correctness_02}. Next, the arguments presented in Eq. \eqref{eqn: term b 1_main} - \eqref{eqn: term b 5_main} allows us to obtain  Eq. \eqref{eqn: test_trigger_correctness_1}. Thereafter, we use Azuma-Hoeffding's inequality in conjunction with the fact that $\mathbbm{E}[F^{(\tau)}(\mathbf{x}^{(\tau)})] \geq F^{(\tau)}(\mathbf{x}^{(\tau), *})$ to obtain Eq. \eqref{eqn: test_trigger_correctness_2}. In Eq. \eqref{eqn: test_trigger_correctness_3}-\eqref{eqn: test_trigger_correctness_4}, we use the definition of $\hat{\rho}(t)$ (see Definition \ref{defn: rho_def}) and the fact that $\rho(\cdot)$ is a decreasing function. This concludes our proof investigating the correctness of trigger events pertaining to \textbf{Test 1} of Master-FL algorithm.\\

Moreover, as a consequence of Eq. \eqref{eqn: opt_bound_eq2_main} in Lemma \ref{lemma:multi_scale_regr_main}, with probability $1-\frac{\delta}{T}$, the following holds:
\begin{align}
    \frac{1}{t -t_m + 1} \sum_{\tau = t_m}^{t} F^{(\tau)}(\mathbf{x}^{(\tau)}) - \Tilde{F}^{(\tau)} \leq \hat{\rho}(t - t_m + 1) + \Delta_{[t_m,t]} \leq 2\hat{\rho}(t - t_m + 1),
\end{align}
which implies correctness of triggering \textbf{Test 2}. 
\subsection{Proof of Lemma \ref{lemma: bound_epoch_num_main}} \label{sec: bound_epoch_num_main_desc}
We note that the proof of Eq. \eqref{eqn: master_N_L_bound_main} in Lemma \ref{lemma: bound_epoch_num_main} is a simple extension of Eq. \eqref{eqn: ell_L_bound} in Lemma \ref{lemma: ell_delta_bound}, therefore requires the exact same set of arguments.
\\

In order to prove Eq. \eqref{eqn: master_N_Delta_bound_main}, we proceed similar to Lemma \ref{lemma: ell_delta_bound}. The time horizon over which {Master-FL} is run as provided in the Lemma statement is $[t_0, E]$ (i.e. $T = E -t_0$, is the total number of FL rounds). If $[t_0, E]$ is not the last epoch over which {Master-FL} is run, then $\Delta_{[t_0, E]} > \rho(E -t_0 + 1)$ must hold with high probability due to Lemma \ref{lemma: test_trigger_correctness_main}. To this end, we construct partitions of $[t_0, E]$ into constituent stationary epochs $\mathcal{E}_1, \mathcal{E}_2, \cdots, \mathcal{E}_{M}$ where $\mathcal{E}_i = [s_i, e_i]$ such that $\Delta_{[s_i, e_i]} \leq \rho (e_i - s_i + 1)$, $\Delta_{[s_i, e_i + 1]} \geq \rho (e_i - s_i + 2)$, except for the last epoch i.e. $i \leq M - 1$. Therefore, by definition of $\Delta_{[t_0, E]}$, we have:
\begin{align}
    \Delta_{[t_0, E]} &\geq \sum_{i = 1}^{M - 1} \Delta_{[s_i, e_i]}, \\
    & \geq \sum_{i = 1}^{M - 1} \rho (e_i - s_i + 2), \\
    & \geq \sum_{i = 1}^{M - 1} \min \{c_1{(e_i - s_i + 2)}^{- \frac{1}{2}} , 1\}, \\
    & \geq \sum_{i = 1}^{M - 1} \min \{\frac{c_1}{2}{(e_i - s_i + 1)}^{- \frac{1}{2}} , 1\}, \\
    & = \frac{c_1}{2} \sum_{i = 1}^{{M}_1}{(e_i - s_i + 1)}^{- \frac{1}{2}} + \sum_{i = 1}^{{M}_2} 1 ~. \label{eqn: num_epoch_bound_part1}
\end{align}
wherein, we construct $M_1 + M_2 = M - 1$. We observe that each term in \eqref{eqn: num_epoch_bound_part1} is bounded by $\Delta_{[t_0, E]}$, so we have $M_2 \leq \Delta_{[t_0, E]}$. Furthermore, as a consequence of Holder's inequality, we obtain:
\begin{align}
    M_1 &\leq \Big(\sum_{i = 1}^{M_1} (e_i - s_i + 1)^{-\frac{1}{2}} \Big)^{\frac{2}{3}} \Big(\sum_{i = 1}^{M_1} (e_i - s_i + 1) \Big)^{\frac{1}{3}}, \\
    & \leq 2 c_{1}^{-\frac{2}{3}}\Delta^{\frac{2}{3}}{T}^{\frac{1}{3}}.
\end{align}
Combining bounds for $M_1$, $M_2$ and using the fact that $M_1 + M_2 = M - 1$, we get the bound for $M$ as follows:
\begin{align}
    M \leq 1 + 2 c_{1}^{-\frac{2}{3}}\Delta^{\frac{2}{3}}{T}^{\frac{1}{3}} + \Delta,
\end{align}
where $\Delta = \Delta_{[t_0,E]}$ is the cumulative \textit{concept drift} over $T = E-t_0$ rounds.
\newpage
\section{Dynamic Regret of Master-FL Algorithm - Proof of Theorem \ref{thm: final_dynamic_regr_master_main}} \label{sec: final_dynamic_regr_master}
First, we denote the number of epochs actually ran by the Master-FL in $[1, T]$ as $\{\mathcal{E}_i \}_{i=1}^{i=M}$. Therefore, as a consequence of Lemma \ref{lemma: single_epoch_regr_main}, the \textit{dynamic regret} incurred by Master-FL is bounded by:
\begin{align}
   {R}_{[1,T]} &=  \Tilde{\mathcal{O}} \Big( \sum_{i = 1}^{M} \min \{{R}_L(\mathcal{E}_i) , {R}_{\Delta}(\mathcal{E}_i) \} + \frac{c_2}{c_1} \sum_{i = 1}^{M} \sqrt{|\mathcal{E}_i|} + c_2^{2}M \Big),  \\
   & = \Tilde{\mathcal{O}} \Big(  \min \{  \underbrace{\sum_{i = 1}^{M} {R}_L(\mathcal{E}_i)}_\text{(a)} ,  \underbrace{\sum_{i = 1}^{M}  {R}_{\Delta}(\mathcal{E}_i)}_\text{(b)}  \} + \underbrace{\frac{c_2}{c_1} \sum_{i = 1}^{M} \sqrt{|\mathcal{E}_i|}}_\text{(c)} + \underbrace{c_2^{2}M}_\text{(d)} \Big).
\end{align}
We note that due to the stationarity of the epochs $\{\mathcal{E}_i \}_{i=1}^{i=M}$, the following holds:
\begin{align}
    \sum_{i = 1}^{M} L_{\mathcal{E}_i} \leq L + M-1, \label{eqn: epoch_L_bound_main_thm}
\end{align}
Therefore, as a consequence of Holder's inequality in conjunction with Eq. \eqref{eqn: epoch_L_bound_main_thm}, we first bound (a) as follows:
\begin{align}
    \sum_{i = 1}^{M}{R}_L(\mathcal{E}_i) &\leq \Tilde{\mathcal{O}} \Big(c_1\sqrt{T(L+M-1)} + c_2(L+M-1) \Big), \\
    & \leq \Tilde{\mathcal{O}}\Big(c_1 \sqrt{LT} + c_2L \Big). \label{eqn: N_bound}
\end{align}
where Eq. \eqref{eqn: N_bound} is a consequence of Eq. \eqref{eqn: master_N_L_bound_main} in Lemma \ref{lemma: bound_epoch_num_main}.
In order to bound (b), we proceed as follows:
\begin{align}
    \sum_{i = 1}^{M} {R}_{\Delta}(\mathcal{E}_i) &\leq \Tilde{\mathcal{O}} \Big(c_1^{\frac{2}{3}}\Delta^{\frac{1}{3}} T^{\frac{2}{3}} + c_1\sqrt{MT} + c_1\sqrt{\Delta T} + {c_2}c_1^{-\frac{2}{3}}\Delta^{\frac{2}{3}}T^{\frac{1}{3}} + c_2(M + \Delta) \Big), \\
    &\leq \Tilde{\mathcal{O}} \Big( c_1^{\frac{2}{3}}\Delta^{\frac{1}{3}} T^{\frac{2}{3}} + c_1\sqrt{T} + c_1\sqrt{\Delta T} + {c_2}c_1^{-\frac{2}{3}}\Delta^{\frac{2}{3}}T^{\frac{1}{3}} + c_2\Delta \Big), \label{eqn: N_remove}
\end{align}
where Eq. \eqref{eqn: N_remove} is a consequence of Eq. \eqref{eqn: master_N_Delta_bound_main} in Lemma \ref{lemma: bound_epoch_num_main}. Next, using Cauchy–Schwarz inequality, we bound (c) as:
\begin{align}
    \frac{c_2}{c_1} \sum_{i = 1}^{M} \sqrt{|\mathcal{E}_i|} & \leq \frac{c_2}{c_1} \sqrt{MT}, \\
    & \leq \min \Big\{\frac{c_2}{c_1} \sqrt{LT} , \frac{c_2}{c_1}\sqrt{T} + c_2 c_1^{- \frac{4}{3}}\Delta^{\frac{1}{3}}T^{\frac{2}{3}} + \frac{c_2}{c_1} \sqrt{\Delta T} \Big\}, \label{eqn: T3_final_bound}
\end{align}
where Eq. \eqref{eqn: T3_final_bound} is a consequence of equations \eqref{eqn: master_N_L_bound_main} and \eqref{eqn: master_N_Delta_bound_main} from Lemma \ref{lemma: bound_epoch_num_main} which provide the bound on number of epochs $M$ expressed in terms of non-stationary measures $L, \Delta$, and horizon length $T$. Similarly, (d) is bounded as:
\begin{align}
    c_2^{2}M \leq \min \Big \{ c_2^{2}L, c_2^{2} +  c_{2}^{2} c_1^{-\frac{2}{3}}\Delta^{\frac{2}{3}}T^{\frac{1}{3}} + c_{2}^{2} \Delta  \Big \}.
\end{align}
Collecting only the terms that dominate in the exponent of $T$, we obtain the final \textit{dynamic regret} bound incurred by {Master-FL} as:
\begin{align}
    {R}_{[1,T]} = \Tilde{\mathcal{O}} \Big( \min \Big\{ \Big(c_1 + \frac{c_2}{c_1} \Big)\sqrt{LT} , c_2 c_1^{- \frac{4}{3}}\Delta^{\frac{1}{3}}T^{\frac{2}{3}}  + \sqrt{T} \Big\} \Big). 
\end{align}
\newpage
{\color{black} \section{Extension of Current Algorithmic Framework to a Stochastic Model Update Setting} \label{app: mini-batch-explanation}
In order to extend our framework to a mini-batch stochastic gradient update setting, note that it is necessary to understand how the performance of base FL algorithms i.e., \textit{FedAvg}, \textit{FedOMD} would change in such a setting. In the stochastic setting, the definition of \textit{dynamic regret} is as follows:
\begin{align}
   R_{[1,T]} = \sum_{t = 1}^{T} \mathbb{E}\big[{F}^{(t)}(\mathbf{x}^{(t)})\big] ~-  \sum_{t = 1}^{T} {F}^{(t)}(\mathbf{x}^{(t), *}), \label{defn: stoc dynamic regret}
\end{align}
where the expectation is w.r.t the mini-batches. In the ensuing discussion, we first identify the translations in terms of model update procedure for \textit{FedAvg, ~FedOMD} algorithms in a stochastic setting. For DPU $n \in \mathcal{N}$, with mini-batch fraction $\gamma_{n} \in [0,1]$, the stochastic FL model update is now denoted by \texttt{STOC-FL-UPDATE} for \textit{FedAvg} can be expressed as:
\begin{align}
    \texttt{STOC-FL-UPDATE:} \hspace{5mm} \mathbf{x}_{n}^{(t)} = \mathbf{x}^{(t-1)} - \eta_t \Tilde{\nabla} {F}_{n}^{(t)}(\mathbf{x}^{(t-1)}).  
\end{align}
And, for \textit{FedOMD}, the update expressions are:
\begin{align}
    & \texttt{STOC-FL-UPDATE:} \hspace{5mm} \mathbf{x}^{(t+1)}_{n} = \underset{x \in \mathbb{R}^p}{\argmin} ~\psi_{n}(\mathbf{x}; \mathbf{x}^{(t)}), \label{eqn:bregman_minimize_func_main} \\
    & \hspace{5mm} \psi_{n}(\mathbf{x}; \mathbf{x}^{(t)}) \triangleq \langle \Tilde{\nabla} {F}_{n}^{(t)}(\mathbf{x}^{(t)}), \mathbf{x} \rangle + \frac{1}{\eta_t} B_{\phi} (\mathbf{x}; \mathbf{x}^{(t)}). \label{eqn:bregman_loss_func_defn_main}
\end{align}
wherein we denote $\Tilde{\nabla} {F}(\cdot)$ to denote the mini-batch gradients. We remark that ${\nabla} {F}(\cdot) \rightarrow \Tilde{\nabla} {F}(\cdot)$ is the only change upon switching to a stochastic setting, while the original \texttt{FL-UPDATE} expressions for \textit{FedAvg, ~FedOMD} ( eq. 17, 42, 43 in the manuscript) remain unchanged otherwise. At round $t$, recall that the full-batch gradient for DPU $n$ at any arbitrary model $\mathbf{x}$ is given by:
\begin{align}
    {\nabla} {F}_{n}^{(t)}(\mathbf{x}) = \frac{1}{|{{\mathcal{D}}}_{{n}}^{(t)}|}{\underset{\xi \in {{\mathcal{D}}}_{{n}}^{(t)}}{\sum} \nabla {f}(\mathbf{x}} ; \xi),
\end{align}
where $f(\cdot~;~\cdot)$ is the underlying ML loss function. For the stochastic gradient, let us denote $\Tilde{{\mathcal{D}}}_{{n}}^{(t)}$ as the mini-batch dataset of size ratio $\gamma_{n}$ generated via random sampling without replacement from ${{\mathcal{D}}}_{{n}}^{(t)}$ i.e., $\Tilde{{\mathcal{D}}}_{{n}}^{(t)} \subseteq {{\mathcal{D}}}_{{n}}^{(t)}$ and $|\Tilde{{\mathcal{D}}}_{{n}}^{(t)}| = \gamma_{n} |{{\mathcal{D}}}_{{n}}^{(t)}|$. Essentially, the mini-batch gradient is:
\begin{align}
    \Tilde{{\nabla}} {F}_{n}^{(t)}(\mathbf{x}) = \frac{1}{|\Tilde{{\mathcal{D}}}_{{n}}^{(t)}|}{\underset{\xi \in {\Tilde{\mathcal{D}}}_{{n}}^{(t)}}{\sum} \nabla {f}(\mathbf{x}} ; \xi),
\end{align}
For every data-point $\xi \in {{\mathcal{D}}}_{{n}}^{(t)}$, assign a indicator random variable $Z_{\xi}$ that captures its inclusion in the mini-batch dataset $\Tilde{{\mathcal{D}}}_{{n}}^{(t)}$. Formally,
\begin{align}
& Z_{\xi} = 1, \  \ \xi \in \Tilde{{\mathcal{D}}}_{{n}}^{(t)}, \\
& Z_{\xi} = 0, \  \ \xi \notin \Tilde{{\mathcal{D}}}_{{n}}^{(t)}.
\end{align}
With the above definition of $Z_{\xi}$, we obtain the following:
\begin{align}
    & P[Z_{\xi} = 1] = \frac{{|\Tilde{{\mathcal{D}}}_{{n}}^{(t)}|-1 \choose |{{\mathcal{D}}}_{{n}}^{(t)}| -1}}{{|\Tilde{{\mathcal{D}}}_{{n}}^{(t)}| \choose |{{\mathcal{D}}}_{{n}}^{(t)}|}} = \frac{|\Tilde{{\mathcal{D}}}_{{n}}^{(t)}|}{|{{\mathcal{D}}}_{{n}}^{(t)}|} = \gamma_{n},
\end{align}
This allows us to calculate the expectation of stochastic gradient $\Tilde{{\nabla}} {F}_{n}^{(t)}(\mathbf{x})$ as follows:
\begin{align}
    \mathbb{E}\big[\Tilde{{\nabla}} {F}_{n}^{(t)}(\mathbf{x})\big] & = \mathbb{E}\Bigg[\frac{1}{|\Tilde{{\mathcal{D}}}_{{n}}^{(t)}|}{\underset{\xi \in {\Tilde{\mathcal{D}}}_{{n}}^{(t)}}{\sum} \nabla {f}(\mathbf{x}} ; \xi)\Bigg], \label{eq: stoc_grad_mean1} \\
    & = \mathbb{E}\Bigg[\frac{1}{|\Tilde{{\mathcal{D}}}_{{n}}^{(t)}|}{\underset{\xi \in {{\mathcal{D}}}_{{n}}^{(t)}}{\sum} Z_{\xi}\nabla {f}(\mathbf{x}} ; \xi)\Bigg], \\
    & = \frac{1}{|\Tilde{{\mathcal{D}}}_{{n}}^{(t)}|} {\underset{\xi \in {{\mathcal{D}}}_{{n}}^{(t)}}{\sum} \nabla {f}(\mathbf{x}} ; \xi) \mathbb{E}[Z_{\xi}], \\
    & = \frac{1}{\gamma_{n} |{{\mathcal{D}}}_{{n}}^{(t)}|} {\underset{\xi \in {{\mathcal{D}}}_{{n}}^{(t)}}{\sum} \nabla {f}(\mathbf{x}} ; \xi). \gamma_n, \\
    & = \frac{1}{|{{\mathcal{D}}}_{{n}}^{(t)}|}{\underset{\xi \in {{\mathcal{D}}}_{{n}}^{(t)}}{\sum} \nabla {f}(\mathbf{x}} ; \xi) = {\nabla} {F}_{n}^{(t)}(\mathbf{x}),
\end{align}
hence, the mini-batch stochastic gradient is indeed an unbiased estimator of the full-batch gradient. Furthemore, note that:
\begin{align}
    \| \Tilde{{\nabla}} {F}_{n}^{(t)}(\mathbf{x})\big \| &= \|\frac{1}{|\Tilde{{\mathcal{D}}}_{{n}}^{(t)}|}{\underset{\xi \in {\Tilde{\mathcal{D}}}_{{n}}^{(t)}}{\sum} \nabla {f}(\mathbf{x}} ; \xi) \|, \\
    &\leq  \frac{1}{|\Tilde{{\mathcal{D}}}_{{n}}^{(t)}|} {\underset{\xi \in {\Tilde{\mathcal{D}}}_{{n}}^{(t)}}{\sum} \|\nabla {f}(\mathbf{x}} ; \xi) \|, \\
    &\leq \frac{1}{|\Tilde{{\mathcal{D}}}_{{n}}^{(t)}|} \underset{\xi \in {\Tilde{\mathcal{D}}}_{{n}}^{(t)}}{\sum} \mu = \mu, \label{eq: grad_norm_lipschitz}
\end{align}
where, eq. \eqref{eq: grad_norm_lipschitz} is due to $\mu$-Lipschitz property of underlying ML loss function $f(\cdot;\cdot)$ (Assumption 1 in the manuscript). Therefore, the mini-batch gradient resembles the true gradient both in expectation and norm as seen from eq. \eqref{eq: stoc_grad_mean1} - 
\eqref{eq: grad_norm_lipschitz}.\\

In the \textit{dynamic regret} analysis of base FL algorithms \textit{FedAvg, ~FedOMD} i.e., (proof of Theorem 1 in Section V-A eq. 20-38, and proof of Theorem 2 in Appendix C eq. 66-84), a stochastic gradient based update approach implies the following changes throughout: ${\nabla} {F}_{n}^{(t)}(\mathbf{x}) \rightarrow \mathbb{E}\big[\Tilde{{\nabla}} {F}_{n}^{(t)}(\mathbf{x})\big]$, $\| {{\nabla}} {F}_{n}^{(t)}(\mathbf{x})\big \|   \rightarrow \| \Tilde{{\nabla}} {F}_{n}^{(t)}(\mathbf{x})\big \|$. However, since $\mathbb{E}\big[\Tilde{{\nabla}} {F}_{n}^{(t)}(\mathbf{x})\big] = {\nabla} {F}_{n}^{(t)}(\mathbf{x})$ and  $\| \Tilde{{\nabla}} {F}_{n}^{(t)}(\mathbf{x})\big \| \leq \mu $, which in other words mean that properties of the proxy for true gradient remain unchanged, there will be no impact for the overall \textit{dynamic regret} expressions for \textit{FedAvg, FedOMD} algorithms. Therefore, the interpretation and characterization (last part of Section V-A) of the vanilla base FL algorithms will remain unchanged in the stochastic setting as well.\\

Our remaining theoretical results pertaining to overall \textit{dynamic regret} incurred by integrating \textit{FedAvg} or \textit{FedOMD} with proposed Master-FL non-stationary learning setup (Algorithm \ref{scheduling_algo}- \ref{detection}) presented in Section \ref{sec: multi_scale_near_stationary_analysis} - \ref{sec: dynamic_regr_maintext_disc} also stays the same under a stochastic setting (i.e., all intermediate lemmas and final theorem holds as it is with stochasticity). This is firstly owed to the fact that the construction of optimistic loss estimators $\{\Tilde{F}^{(t)}\}_{t \in [0,T]}$ as proposed will still satisfy conditions of Requirement \ref{assmptn: near stationary} with the same analysis for the aforementioned base algorithms in the stochastic setting (i.e., the mathematical proof in Appendix \ref{sec: optimistic_estimator_verification} of the manuscript remains unchanged with the same formulations of $\Tilde{F}^{(t)}$). Furthermore, in our analysis presented in Section \ref{sec: multi_scale_near_stationary_analysis} - \ref{sec: dynamic_regr_maintext_disc}, we do not use gradient terms. 
}
\end{document}